%% file: main.tex
\documentclass{article}
\usepackage{arxiv,times}
\input{math_commands.tex}

\usepackage[pdfencoding=auto,psdextra]{hyperref}
\usepackage{url}
\usepackage{booktabs}
\usepackage{longtable}
\usepackage{amsmath,amssymb,amsthm}
\newtheorem{proposition}{Proposition}
\newtheorem{lemma}{Lemma}
\newtheorem{remark}{Remark}
\newtheorem{example}{Example}
\usepackage{graphicx}
\usepackage[table]{xcolor}
\definecolor{oursbg}{gray}{0.86}
\usepackage{enumitem}
\usepackage{float}
\usepackage{caption}


\newcommand{\ours}{\textsc{CurveTQ}}

\title{CurveTQ:\\ Rotation-Free Trellis Quantization of LLM Weights via Curvature-Weighted Search}

\author{Guanhua Ding$^{*}$ \quad Zi Wang$^{*}$ \quad Ruichao Li \quad Jack Liu\\
Black Sesame Technologies\\
\texttt{\{gding2793, zwang84.utk, ruichaoli809\}@gmail.com}, \texttt{jack.liu@bst.ai}
}
\begin{document}
\maketitle
{\renewcommand{\thefootnote}{\fnsymbol{footnote}}\footnotetext[1]{Equal contribution.}}

\begin{abstract}
The best two-bit weight quantizers for large language models, such as QTIP and Proteus, share a recipe. Each weight matrix is first rotated by a random orthogonal transform so that its entries look Gaussian, and this rotation has to be undone at every decoding step. The rotated weights are then encoded with a trellis or lattice code under a Euclidean search, and the layer Hessian is used only in the error-feedback step between coding blocks. We show that this recipe leaves part of the Hessian unused. Error feedback turns the loss into a weighted sum of per-coordinate rounding errors, and the weights, the diagonal of the Hessian's LDL factorization, are computed by existing quantizers but never read. We put these weights into the search's distance, the Viterbi branch metric, so that the search follows the curvature within each coding block. This also explains what the rotation is for. The rotation removes this within-block variation, so after it the weight has little left to exploit. Weighting in the native basis and rotating are therefore substitutes. We confirm this on three models: the weighted native search matches the baselines' rotation, a full-dimension randomized Hadamard, to within about one point of downstream accuracy, and weighting after the rotation gains little. Around this search we build \textsc{CurveTQ}, a trellis codec with no rotation. The rotation also flattened the amplitude and the marginal shape of the weights, which we handle directly with a factored scale field and a closed-form quantile table. In either basis, error feedback passes each block's residual on to the next, so we store a start state per coding block that lets the trellis adapt to it. At two bits \textsc{CurveTQ} is $1$--$3$ points higher in mean downstream accuracy than QTIP and Proteus on three 4--8B Instruct models, even after both are given our start state, which by itself lifts either baseline by $1$--$3$ points. It also leads on a 35B mixture of experts, to our knowledge the first trellis-coded result on such a model. With no rotation to undo, our decoder is the fastest of the three at all tested batch sizes and bit widths.
\end{abstract}

\section{Introduction}

When a large language model (LLM) generates text one token at a time, the cost of each step is dominated by reading the model's weights from memory. Storing fewer bits per weight therefore directly translates into faster decoding. At four bits per weight, post-training quantization (PTQ) is by now routine \citep{frantar2023gptq,lin2024awq}. The open frontier is two to three bits, where quantized models still lose a large amount of accuracy, and where the choice of the \emph{code}, how groups of weights are jointly mapped to a small set of stored indices, matters more than the choice of the scalar grid \citep{tseng2024qtip,egiazarian2024aqlm}.

The strongest two-bit trellis and lattice codecs today follow the same three-step recipe, which we call \emph{transform-and-flatten} (Figure~\ref{fig:overview}a). First, each weight matrix is rotated by a random orthogonal transform, so that its entries look like independent Gaussian samples \citep{chee2023quip,tseng2024quipsharp}. Second, the rotated weights are encoded with a high-dimensional code, a trellis or a lattice, using a Euclidean search and a Gaussian-shaped reconstruction. Third, the layer Hessian, which says how sensitive the layer output is to an error in each input coordinate, enters only through an error-feedback step between coding blocks (LDLQ). QTIP \citep{tseng2024qtip} is the canonical instance; Proteus \citep{yang2026proteus} replaces its lookup table by a computed code but keeps the rotation and the Euclidean search. The recipe has a cost at every decoding step: the rotation has to be undone at inference time, either on the activations or on the weight-decoding path.

\paragraph{What error feedback leaves unused.}
It is tempting to think that the feedback step has taken care of the Hessian, so that the search inside a block can safely be Euclidean. In fact, feedback only rearranges the loss of a row into a \emph{weighted} sum of its per-coordinate rounding errors (Eq.~\ref{eq:ldl}). The weights are the diagonal $D$ of the Hessian's LDL factorization, the same quantity GPTQ divides its residuals by \citep{chee2023quip,frantar2023gptq}. In the native basis these weights differ by orders of magnitude between input channels, because a few channels carry outlier activations. Yet, to our knowledge, every trellis and lattice quantizer ignores them and minimizes the unweighted error inside the block; QTIP computes their block form as part of its feedback step and never reads it. Our first step is simply to use them. Placed in the Viterbi branch metric, they change neither the code nor the decoder, and, as we show next, they make the rotation unnecessary.

\paragraph{Why the rotation then becomes unnecessary.}
Once the weight is in the search, the random rotation turns out to act on the same quantity. The rotation is meant to make all coordinates look alike, and for the raw Hessian diagonal it does. For the LDL diagonal it does not: Figure~\ref{fig:gamma}(a) shows that it turns the jagged native weights into a smooth ramp along the coding order, nearly flat within one 64-column block. A weight that is constant within a block does not change which path the block's search selects, so after rotation the weighted and the Euclidean search coincide: \emph{the rotation removes the structure the weight exploits}. Proposition~\ref{prop:main} relates the weighted search's gain over the Euclidean one, in the high-resolution limit, to the within-block variation of these weights, which is large in the native basis and near zero after rotation. Weighting and rotating are therefore substitutes, which Section~\ref{sec:rot} confirms on three models. A rotation also flattens the local amplitude and the marginal shape of the weights; we handle both directly, with a factored group scale and a closed-form quantile table (Section~\ref{sec:codec}), so the decoder needs no transform (Figure~\ref{fig:overview}b).

\paragraph{What else the block boundary changes.}
Taking the feedback block seriously has a second consequence, for the trellis termination. Under feedback, the first columns of each block carry the residual pushed forward from the previous block, so the boundary is where the search needs the most freedom. QTIP's tail-biting trellis ties each tile's first coordinates to its last, constraining exactly this boundary, and stores no start state; we instead let each block start in any trellis state and store the choice (Section~\ref{sec:search}). The $14$ stored bits per block are $0.22$ bpw, our largest single component, and not specific to our codec: given to QTIP and Proteus, the same state is worth $1$--$3$ downstream points at two bits. Removing the state costs $2$--$4$ perplexity and $5$ downstream points, ten times the cost of a group-scale field of the same size. At decode time it is one load per block, $1$--$3\%$ of throughput.

\paragraph{Contributions.}
(1)~We show that the curvature weight and the random rotation are substitutes: the LDL diagonal in the branch metric exploits what a rotation removes (Sections~\ref{sec:search}--\ref{sec:sub}). On three models the weighted native search matches, to within about a point of mean accuracy, the full-dimension rotation QTIP deploys at equal rate (Section~\ref{sec:rot}). The loss identity \citep{chee2023quip} and weighted trellis search \citep{fischer1992ectcq} are both known; what is new is that together they make the rotation unnecessary. (2)~We replace QTIP's tail-biting termination by a block-local trellis with a stored start state per block, our largest single rate component and the one that creates the two-bit downstream margin (Sections~\ref{sec:search}, \ref{sec:ablate}). It is not specific to our codec: transferred unchanged to QTIP and Proteus, it lifts each by $1$--$3$ points (Section~\ref{sec:main}). (3)~Using (1), we build \textsc{CurveTQ}, a rotation-free trellis codec; in the native basis, its per-module quantile table and factored scale field handle the marginal shape and the local amplitude that a rotation would have flattened (Section~\ref{sec:codec}). Against QTIP and Proteus given our start state it is $1$--$3$ points higher downstream at two bits on three Instruct models (Section~\ref{sec:main}). It also leads on a 35B mixture of experts, to our knowledge the first trellis-coded result on one; at 32B the start state alone closes most of the baselines' gap. (4)~With no rotation to undo, our decoder is faster than both baselines' kernels at batch 1 to 128 (Section~\ref{sec:throughput}).

\section{Related work}

\paragraph{Incoherence processing.}
QuIP \citep{chee2023quip} introduced random orthogonal preprocessing that makes weights and Hessians incoherent, together with LDLQ, the error-feedback rounding whose loss identity we use (Eq.~\ref{eq:ldl}); QuIP\# \citep{tseng2024quipsharp} made the transform a randomized Hadamard and added the $E_8$ lattice codebook, block LDLQ and a fine-tuning stage. Later work moves rotations into the activation path or makes them data-aware \citep{ashkboos2024quarot,liu2025spinquant,sun2025flatquant,chen2026wush}. Our view is that a rotation flattens three separate quantities at once, the local amplitude, the marginal shape and the block-local LDL diagonal, and that each can be flattened explicitly in the native basis.

\paragraph{Trellis, lattice and vector codes.}
Two lines of work are closest to ours. The first is trellis-coded quantization. Its weighted and feedback variants \citep{marcellin1990tcq,fischer1992ectcq,atal1979,eyuboglu1988rsse} are classical; QTIP \citep{tseng2024qtip} brought TCQ to LLM weights with the bitshift trellis and two decoders, a lookup code and a computed one. Proteus \citep{yang2026proteus} makes the computed code central and argues from decode bandwidth that a lookup-free trellis is the right target. All of these keep the search Euclidean and confine curvature to the feedback step; Proteus does use a Hessian-weighted objective, but to \emph{select} its code per layer, not inside the search. The second line is lattice and additive vector quantization \citep{egiazarian2024aqlm,vanbaalen2024gptvq,liu2024vptq,glvq2025,vanderouderaa2026llvq}, which trades decode arithmetic for shaping gain (we compare against trellis codecs only, which decode faster: LLVQ reports its single-shell kernel slightly slower than QTIP's, and an independent multi-shell implementation measures QTIP's kernel $2.27\times$ faster \citep{malandrino2026unfolding}). This line does optimize an input-weighted error: AQLM's beam search, and the Hessian-weighted $k$-means of GPTVQ, VPTQ and SqueezeLLM \citep{kim2024squeezellm}. The cost is several dependent codebook reads per weight at decode, where a trellis needs one table read, or none. Our codec is a trellis code that optimizes the weighted objective of vector quantization, at a trellis decoder's cost.

\paragraph{Solvers and side information.}
GPTQ \citep{frantar2023gptq} rounds columns one at a time with Hessian feedback, and its error-normalization constant is the same diagonal $D$ we place in the search. On side information, group scales are universal \citep{lin2024awq,shao2024omniquant}, and storing them on a second, coarser grid goes back to QLoRA and SpQR \citep{dettmers2023qlora,dettmers2024spqr}; our factored scale field is a structured version of this idea. For mixtures of experts, QMoE \citep{frantar2023qmoe} quantizes with one Hessian per expert, which we adopt; no trellis or lattice codec has been reported on such a model. Finally, post-quantization fine-tuning is worth more at two bits than the spread between any two codes \citep{malinovskii2024pvtuning}, and QuIP\#, QTIP, AQLM and Proteus all publish their headline numbers after such a stage; to isolate the code, we run every method without it (Section~\ref{sec:protocol}). Appendix~\ref{app:related} discusses this literature in more detail.

\section{Method}

\begin{figure}[t]
\centering
\includegraphics[width=0.95\linewidth]{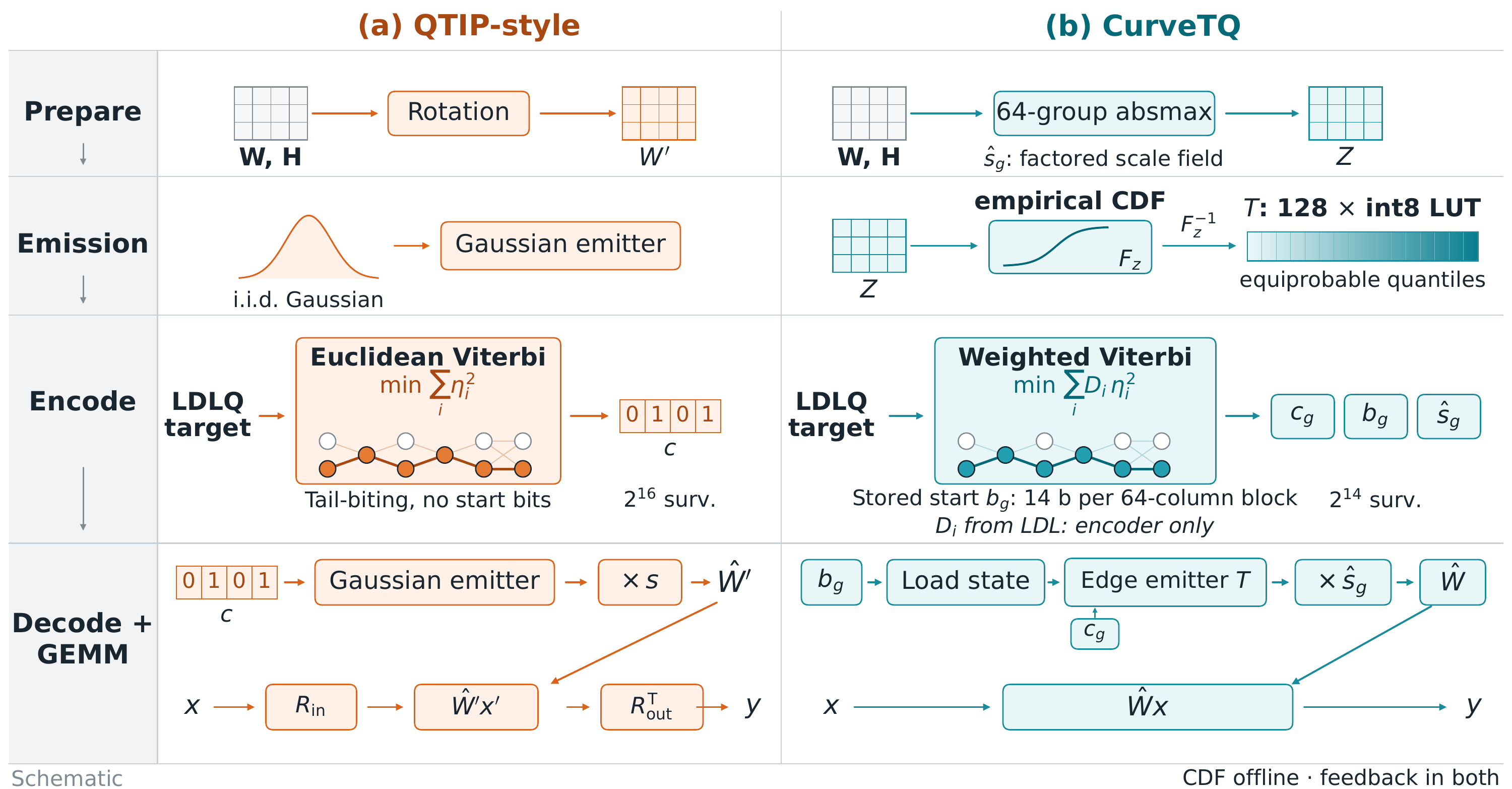}
\caption{Transform-and-flatten (a) versus our recipe (b); schematic, not measured data. The Encode row is the mechanism: the LDL diagonal $D$, which (a) flattens within each block by rotation, becomes in (b) the weight of the Viterbi branch metric, so the search minimizes $\sum_i D_i\eta_i^2$ rather than $\sum_i\eta_i^2$. The rotation's other two effects are obtained by the factored group scale $\hat s_g$ and the quantile table $T$, and the decoder needs no transform.}
\label{fig:overview}
\end{figure}

\label{sec:prelim}\label{sec:ldl}
\paragraph{Preliminaries: error feedback and the bitshift trellis.}
Let $W\in\mathbb R^{R\times C}$ be a linear layer with input Hessian $H=\mathbb E[xx^\top]$, damped as $H+\delta I$; all identities below hold for the damped loss. Weight-only PTQ chooses $\hat W$ to minimize $\sum_r e_r^\top He_r$, $e_r=w_r-\hat w_r$. Fix a coding order over the $C$ input coordinates and write $H=(I+\bar U)D(I+\bar U)^\top$. Column-wise feedback quantizes coordinate $j$ against $w_j+\sum_{i<j}\bar U_{ij}e_i$ and defines $\eta_j$ as that target minus $\hat w_j$; then $\eta=(I+\bar U)^\top e$ and
\begin{equation}
e^\top He=\sum_{i=1}^{C} D_i\,\eta_i^2
\label{eq:ldl}
\end{equation}
holds for any quantizer and any code (Appendix~\ref{app:proofs}). Under \emph{block} feedback with blocks of $G=64$ columns (QuIP\#'s block LDLQ at our block size), the exact form for the block factor is $e^\top He=\sum_b \eta_b^\top D_b\,\eta_b$ with $D_b$ the Schur complement of the trailing submatrix of $H$ onto block $b$ (our default column factor approximates it; Appendix~\ref{app:ablate}): feedback handles the triangular factor, $\mathrm{diag}(D_b)$ is additive and is what we place in the search, and the off-diagonal part is omitted.

\paragraph{The code.}
We use QTIP's bitshift trellis: an $L$-bit state $s_t$, a $k$-bit code $a_t$ per weight ($k\in\{2,3,4\}$), the transition $s_{t+1}=((s_t\ll k)\,|\,a_t)\bmod 2^L$, and a reconstruction from the emission window (Section~\ref{sec:codec}); decoding walks the state machine, encoding is a Viterbi search over $2^L$ states.

\subsection{Curvature-weighted trellis search}
\label{sec:search}

Rows are coded in blocks of $G=64$ columns, left to right; the residual of a coded block is propagated onto the not-yet-coded columns through the triangular factor of $H$. Each block is coded by a Viterbi search that minimizes
\begin{equation}
P^\star=\arg\min_P\ \sum_{i\in b}\omega_i\big[\tilde z_i-T(k_i(P))\big]^2,\qquad
\omega_i=\frac{1}{d_i^2},
\label{eq:metric}
\end{equation}
where $\tilde z$ is the feedback-corrected, scale-normalized target of the block, $k_i(P)$ the code index of coordinate $i$ on path $P$, $T$ the reconstruction table, and $d_i$ the $i$-th diagonal entry of the Cholesky factor of $H^{-1}$ (GPTQ's error-normalization constant). Because Eq.~\ref{eq:metric} is additive over coordinates, Viterbi minimizes it exactly at unchanged complexity, and the packed format is that of the Euclidean search ($\omega\equiv1$): the weight changes which path is chosen, not how it is stored.

We make two further approximations. First, Eq.~\ref{eq:metric} keeps only the diagonal of $D_b$ and omits the in-block coupling. Second, $1/d_i^2$ conditions on \emph{all} columns after $i$, including in-block columns that block feedback never corrects. Appendix~\ref{app:ramp} compares $1/d_i^2$, our default, with the feedback-free $H_{ii}$.

\paragraph{Block-local trellis with an explicit start state.}
A trellis code must decide how one coding block's path relates to the next. We let each block's Viterbi start in any of the $2^L$ states and store the chosen state, the \emph{boundary-state field} $b_g$ (the discrete counterpart of the amplitude field $s_g$ of Section~\ref{sec:sub}), as $L$ bits per (row, block). This \emph{boundary adaptation rate}, $L/G=14/64=0.219$ bpw, is counted in every reported rate (Appendix~\ref{app:rate}). The decoder loads the state and walks the block; nothing else changes. QTIP instead codes 256-weight tiles as \emph{tail-biting} sequences: the first state is a function of the tile's own last bits, so no start state is stored and the rate is exactly $k$ \citep{tseng2024qtip}. That construction is right for a long tile over a rotated, near-i.i.d.\ source. Under error feedback, however, the block boundary is where the inherited residual lands, and tying the first $\lceil L/k\rceil$ coordinates of a 64-column block to its own last ones is the wrong constraint. Section~\ref{sec:ablate} compares this choice with tail-biting and chained terminations; Section~\ref{sec:main} also equips the baselines with it.

\subsection{Why the rotation becomes unnecessary}
\label{sec:sub}

Factor each weight as $w_i=s_{g(i)}z_i$, with $s_g$ the scale of its 64-group and $z_i\in[-1,1]$. The block objective Eq.~\ref{eq:metric} then has three factors at three granularities: the amplitude field $s_g$, the marginal law of $z$, and the curvature weight $D_i$. A random orthogonal transform flattens all three: block energies concentrate, so one scale per row suffices; the marginal tends to a Gaussian, so a universal emitter fits; and the raw diagonal concentrates, $H'_{ii}\to\mathrm{tr}(H)/C$. The LDL diagonal, however, does not flatten: it becomes a smooth ramp along the coding order (Appendix~\ref{app:ramp}). Define the anisotropy of $D$ and its block-local version, in bits per weight,
\begin{equation}
\Gamma(H)=\tfrac12\log_2\frac{\mathrm{AM}(D)}{\mathrm{GM}(D)},\qquad
\Gamma_b(H)=\tfrac12\log_2\frac{\sum_b\mathrm{tr}(D_b)}{\sum_b G\,\mathrm{GM}(\mathrm{diag}\,D_b)}\ \ge 0,
\label{eq:gamma}
\end{equation}
with equality in $\Gamma_b$ iff $\mathrm{diag}\,D_b$ is constant within every block.

\begin{proposition}[weights and floors]
\label{prop:main}
With the feedback rule of Eq.~\ref{eq:ldl}: (i) $e^\top He=\sum_i D_i\eta_i^2$ for any quantizer. (ii) If the residuals have equal second moments $\mathbb E[\eta_i^2]=\sigma^2$---the high-resolution behaviour of a Euclidean search with one alphabet for all coordinates---then $\mathbb E[e^\top He]=\sigma^2\,\mathrm{tr}(D)\ \ge\ C\sigma^2\,\mathrm{GM}(D)=C\sigma^2\det(H)^{1/C}$, with equality iff all $D_i$ are equal. (iii) The lower bound in (ii) is attained, under the high-resolution rate model, by a code whose per-coordinate step is proportional to $D_i^{-1/2}$ \citep{huang1963block,zador1982}.
\end{proposition}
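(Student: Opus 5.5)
Part (i) is pure linear algebra. With the coding order fixed and $H=(I+\bar U)D(I+\bar U)^\top$ ($\bar U$ strictly upper triangular, $D$ diagonal), I would first verify $\eta=(I+\bar U)^\top e$ from the feedback rule. The target of coordinate $j$ is $w_j+\sum_{i<j}\bar U_{ij}e_i$, so
\[
\eta_j=e_j+\sum_{i<j}\bar U_{ij}e_i ,
\]
which is the $j$-th entry of $(I+\bar U)^\top e$. Substituting gives
\[
e^\top He=\big((I+\bar U)^\top e\big)^\top D\,\big((I+\bar U)^\top e\big)=\eta^\top D\eta=\sum_i D_i\eta_i^2 .
\]
Two points need checking. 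First, the sign convention for $\bar U$: it must match the factorization's orientation, and I would pick $\bar U$ so the identity holds. Second, nowhere do we use how $\hat w_j$ is chosen. That is exactly why the identity holds for any quantizer and any code; $\hat w_j$ may depend arbitrarily on earlier columns.

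For (ii), take expectations in (i) and use linearity: $\mathbb E[e^\top He]=\sum_i D_i\,\mathbb E[\eta_i^2]=\sigma^2\,\mathrm{tr}(D)$. The inequality $\mathrm{tr}(D)=C\,\mathrm{AM}(D)\ge C\,\mathrm{GM}(D)$ is AM--GM, since the $D_i$ are positive because the damped $H$ is positive definite. Equality holds iff all $D_i$ coincide. For the determinant identity, note that $I+\bar U$ is unit triangular with determinant $1$, so $\det H=\det D=\prod_i D_i$. Hence $\mathrm{GM}(D)=\det(H)^{1/C}$.

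For (iii) I would use the standard high-resolution model, since the claim is only asserted under it. Give coordinate $i$ step $\Delta_i$, so $\mathbb E[\eta_i^2]\approx c\,\Delta_i^2$ and the rate is $R_i=h_i-\log_2\Delta_i$. Then minimize $\sum_i D_i c\Delta_i^2$ subject to a fixed total rate, which is the same as fixing $\sum_i\log_2\Delta_i$.

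The cleanest route is AM--GM again rather than Lagrange multipliers:
\[
\sum_i D_i\Delta_i^2\ \ge\ C\Big(\prod_i D_i\Delta_i^2\Big)^{1/C}=C\,\mathrm{GM}(D)\Big(\prod_i\Delta_i^2\Big)^{1/C},
\]
with equality iff $D_i\Delta_i^2$ is constant, i.e.\ $\Delta_i\propto D_i^{-1/2}$. This is the Huang--Schultz reverse-water-filling allocation, valid in the high-rate regime where no coordinate hits zero rate.

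To compare against (ii) at equal rate, normalize so that $\prod_i\Delta_i^{2/C}$ equals the uniform step's $\Delta^2$, with $\sigma^2=c\Delta^2$. The bound then becomes $C\sigma^2\,\mathrm{GM}(D)$, and the gap to (ii) is exactly the factor $\mathrm{AM}/\mathrm{GM}$, i.e.\ $2\Gamma(H)$ bits per weight in log terms.

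The main obstacle is (iii), because it depends on the rate model. The step that needs justification is that, under feedback, the residual $\eta_i$ of each coordinate behaves like the error of a high-resolution quantizer at its own step. The feedback target varies, but high-resolution error is approximately uniform and independent of the input, so the model applies. I would state this as the assumption, citing Zador for the vector/trellis-code constant, rather than try to prove it.
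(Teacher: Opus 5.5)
Your proof is correct and follows the paper's. Parts (i) and (ii) are the same computation. In (iii), your AM--GM bound on $\sum_i D_i\Delta_i^2$ replaces the paper's Lagrange-multiplier step under fixed $\prod_i\sigma_i^2$ and is slightly tidier, since it gives the global minimum and the equality case $\Delta_i\propto D_i^{-1/2}$ in one line; where you only cite Zador, the paper sketches the vector-code case via the change of variables $y=D^{1/2}x$. One slip in your closing remark: $\mathrm{AM}/\mathrm{GM}=2^{2\Gamma(H)}$ is a distortion ratio, and at high resolution (distortion $\propto 2^{-2R}$) it is worth $\Gamma(H)$ bits per weight, not $2\Gamma(H)$, which is why the paper says weighting gains at most $\Gamma(H)$.
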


Proofs are in Appendix~\ref{app:proofs}. In words: within the high-resolution relaxation of (ii)--(iii), weighting can gain at most $\Gamma(H)$ over a Euclidean search in a given basis, and a block-local code at most $\Gamma_b(H)$. Since $\det(RHR^\top)=\det H$, the floor is the same in every basis; within this relaxation only the Euclidean loss $\sigma^2\mathrm{tr}(D)$ is not, and that is what a rotation changes.

\paragraph{The substitution hypothesis.}
A Haar or sign-randomized Hadamard rotation turns $D'_i$ into a smooth ramp from the harmonic to the arithmetic mean of the damped spectrum (Appendix~\ref{app:ramp}; checked numerically). Where the ramp is flat across a block, the search weight $D'_i$ is nearly constant within it, so the two searches nearly coincide: \emph{after rotation, weighting is redundant}. The converse, that native weighting \emph{matches} rotation, is not implied by Proposition~\ref{prop:main} (Appendix~\ref{app:proofs}, Proposition~\ref{prop:noorder}); it is a hypothesis about trained LLM Hessians, whose outlier input channels put large block-local anisotropy into the native $D$, and it predicts that the weighted metric's gain tracks $\Gamma_b$ natively and vanishes after rotation. Section~\ref{sec:rot} tests both on three models. We therefore keep the basis and the weight, and flatten the amplitude and the marginal shape explicitly (Figure~\ref{fig:overview}b).

\subsection{\textsc{CurveTQ}: the rotation-free codec}
\label{sec:codec}

\paragraph{Group scale and quantile table.}
The scale of a 64-group is its absmax $s_g$, stored on the factored grid described below as $\hat s_g\ge s_g$; the coder operates on $z_i=w_i/\hat s_g$, so encoder and decoder share the normalization. Each coded linear layer has its own alphabet of $N=128$ int8 values at the equiprobable quantiles of a sample of its normalized weights, indexed by a 7-bit hash of the emission window, with no interpolation at decode. The sample is a deterministic stride over the layer's 64-groups, so the table is closed-form and the build is a deterministic function of the calibration set. A quantile grid is a heuristic rather than the rate--distortion-optimal reproduction law, but it matches an EM fit of the same 128 entries (Section~\ref{sec:ablate}).

\paragraph{Emission window.}
QTIP emits from a 16-bit state and searches $2^{16}$ survivors; we instead emit from the \emph{edge} $(s_t,a_t)$ of a 14-bit state (the same code family, and at $k=2$ the same 16-bit window) and search $2^{14}$. This is the Mealy form (outputs on transitions rather than on states) of QTIP's node-emitting trellis; an exact Viterbi run confirms identical costs and reconstructions on every tested row, with a quarter of the survivors.

\paragraph{Table-free emission.}
The 128-entry table can be removed altogether: in the table-free variant the reconstruction is \emph{computed} from the emission window, like QTIP's 1MAD code, with one global scale in place of the table (details in Appendix~\ref{app:impl}). Nothing else changes, so the difference between the two variants isolates the table (Appendix~\ref{app:full}).

\paragraph{Factored scale field.}
A fp16 scale per 64 weights costs $0.25$ bpw. Second-level quantization stores first-level scales on a low-bit grid with a fp16 constant per block \citep{dettmers2023qlora,dettmers2024spqr}, treating them as a flat sequence. We treat them instead as a two-dimensional field over rows and column groups, whose row and column-group effects carry much of the variance (about half over the linear layers of Llama-2-7B, by a two-way decomposition $\log_2 s_{r,g}=\mu+\alpha_r+\beta_g+\epsilon_{r,g}$). We store $\hat s_{r,g}=A_rB_g\hat R_{r,g}$ with fp16 row and column-group factors (geometric means along each axis) and only the residual on a 4-bit linear grid with a fp16 absmax per 256, both rounded up against the stored fp16 factors, so that $\hat s\ge s$ holds for every group. The cost is $0.068$ bpw on Qwen3-4B and $0.067$ on Llama-3 (per-term arithmetic in Appendix~\ref{app:rate}), against $0.25$ for fp16 scales and $0.125$ for int8 second-level quantization. The fp16-scale variant of every build is reported in Appendix~\ref{app:full}; every rate we report counts all of this side information, the emission table and the start state (Appendix~\ref{app:rate}).

\paragraph{Mixture-of-experts models.}
The codec applies to a mixture of experts unchanged; only the Hessians differ. Following QMoE \citep{frantar2023qmoe}, expert $e$ receives its own $H^{(e)}=\sum_{t\to e}x_tx_t^\top$ over the tokens the router sends it. Under top-8-of-256 routing that is a few thousand tokens for a 2048-dimensional input, so $H^{(e)}$ is often rank-deficient and its LDL diagonal damping-dominated, which leaves the weighted metric close to Euclidean.

\section{Experiments}
\label{sec:protocol}

\paragraph{Setup.}
We quantize six public checkpoints: Llama-2-7B, Llama-3-8B- and Llama-3.1-8B-Instruct \citep{llama3_2024}, Qwen3-VL-4B- and -32B-Instruct \citep{qwen3vl2025} (Qwen3-4B, Qwen3-32B below) and the Qwen3.6-35B-A3B mixture of experts, all with the same 128 calibration segments of 2048 tokens from C4 \citep{raffel2020c4}; our build is deterministic given the calibration set. QTIP \citep{tseng2024qtip} and Proteus \citep{yang2026proteus} run from one trellis/LDLQ implementation at $L=16$, each at its published rate and in two strengthened forms, with an added 64-group fp16 scale ($x.25$ bpw) and with our per-block stored start state (Section~\ref{sec:search}; $2.19$--$2.22$ bpw at two bits, $3.16$--$3.20$ at three), so that the second form carries our start-state side information. Appendix~\ref{app:full} also gives them both at once, at $x.31$--$x.34$ bpw, above ours. HIGGS \citep{malinovskii2025higgs} and 64-group GPTQ \citep{frantar2023gptq} run at $x.25$ bpw. Table~\ref{tab:models} gives the per-model protocol. We report WikiText-2 perplexity \citep{merity2017wikitext} at context 4096 and mean accuracy over a fixed suite (4 to 34 tasks per model; Appendix~\ref{app:tasks}) run through one harness \citep{gao2024lmeval,zhang2024lmmseval}; main comparisons are paired differences to each baseline over tasks, with bootstrap 95\% intervals (Appendix~\ref{app:ci}).

\subsection{Main results}
\label{sec:main}

\begin{table}[t]
\centering
\footnotesize
\setlength{\tabcolsep}{3.0pt}
\caption{Perplexity and mean downstream accuracy at two and three bits (four bits in Appendix~\ref{app:full}); protocol in Section~\ref{sec:protocol}. $^{+}$: published rate plus $0.003$ bpw of sign bits; ours include the per-block start state (Appendix~\ref{app:rate}). $_{+\mathrm{start}}$: the baseline with our stored start state, charged net (Appendix~\ref{app:rate}). $^{\ast}$Llama-2-7B: ARC-c/ARC-e/PIQA/WinoGrande, 0-shot, the suite of the QTIP and Proteus papers. No method is fine-tuned; Proteus is its discrete solver, without learned scales or distillation. Bold: best in tier. ``--'': not run; ``$\times$'': no valid outputs.}
\label{tab:main}
\input{tab_main}
\end{table}

Table~\ref{tab:main} reports the trellis methods at two and three bits and, at two bits, HIGGS and GPTQ, which fall far behind the trellis codecs (other rows and baselines in Appendix~\ref{app:full}; confidence intervals for every difference below in Appendix~\ref{app:ci}). GPTQ is the only other rotation-free row: its scalar grid collapses at two bits, and its gap to our build is what the code is worth without a rotation.

\paragraph{Two bits.}
At $2.29$ bpw \textsc{CurveTQ} has lower perplexity than QTIP and Proteus at $2.00$ and $2.25$ bpw on all four Instruct models. Downstream it leads them by $+2.5$ to $+5.2$ points on the three 4- and 8B Instruct models, all twelve contrasts excluding zero, and by $+0.5$ to $+1.3$ on Qwen3-32B (Table~\ref{tab:ci}). The $_{+\mathrm{start}}$ rows add our per-block start state to the baselines' own trellis. It is worth $+1.0$ to $+3.3$ points to them, more than the group scale, and it closes the perplexity gap on the Llamas and Qwen3-32B. The downstream margin survives: $+0.9$ to $+3.1$ points against QTIP$_{+\mathrm{start}}$ and Proteus$_{+\mathrm{start}}$ on the three models, five of six contrasts excluding zero. It is concentrated on generative tasks (GSM8K, IFEval), while multiple-choice tasks move by about a point. A 128-group scale on top of the start state changes little: at a rate above ours ($2.31$--$2.34$ bpw), the margin is $+0.4$ to $+2.7$ points, five of six contrasts excluding zero, and the scale adds little over the start state alone (Appendix~\ref{app:full}). On Llama-2-7B and Qwen3-32B, where generative tasks lose less, the start state is most of the difference: without it the baselines trail by $1$--$2$ points, with it they come within $0.2$ points on both. Our Llama-2-7B QTIP reproduction is within $0.05$ perplexity of its paper.

\paragraph{Three and four bits.}
At three bits \textsc{CurveTQ} ($3.29$ bpw) leads the $3.00$-bpw rows downstream on every model. Once the baselines carry our start state ($3.16$--$3.20$ bpw) they draw level: \textsc{CurveTQ} is ahead of or level with the baseline in twelve of the sixteen (model, metric, baseline) comparisons on the four models, and all eight paired contrasts lie within $\pm0.7$ points (Table~\ref{tab:ci}). The per-group-scale rows are in Appendix~\ref{app:full}. At four bits all trellis codecs are within $0.35$ perplexity and one point (Appendix~\ref{app:full}).

\begin{table}[t]
\centering
\small
\setlength{\tabcolsep}{8pt}
\caption{Qwen3.6-35B-A3B (MoE) at two bits: PPL and mean accuracy over 30 tasks (per task: Appendix~\ref{app:tasks}).}
\label{tab:moe}
\begin{tabular}{@{}l c r r@{}}
\toprule
Method & bpw & ppl $\downarrow$ & Mean $\uparrow$ \\
\midrule
fp16 & 16 & 6.12 & 71.3 \\
\midrule
\rowcolor{oursbg}\textbf{CurveTQ} & 2.29 & \textbf{6.76} & \textbf{68.2} \\
Proteus & 2.00$^{+}$ & 6.99 & 66.0 \\
QTIP & 2.00$^{+}$ & 7.10 & 65.6 \\
QTIP$_{+\mathrm{g64}}$ & 2.25$^{+}$ & 6.99 & 65.3 \\
Proteus$_{+\mathrm{g64}}$ & 2.25$^{+}$ & 7.08 & 65.1 \\
QTIP$_{+\mathrm{start}}$ & 2.19$^{+}$ & 6.81 & 67.6 \\
Proteus$_{+\mathrm{start}}$ & 2.22$^{+}$ & 6.78 & 66.6 \\
\bottomrule
\end{tabular}
\end{table}
\paragraph{Mixture of experts.}
Table~\ref{tab:moe} reports Qwen3.6-35B-A3B (40 MoE layers, 256 experts, top-8 routing) quantized with one Hessian per expert by the same code. At $2.29$ bpw, the exact element-weighted rate, our build has lower perplexity and higher mean accuracy than Proteus and QTIP at $2.00$ bpw, by $+2.1$ [$+1.4$, $+3.0$] and $+2.5$ [$+1.9$, $+3.3$] paired points over the 30 tasks; the 64-group scale gains the baselines nothing downstream, our start state $+0.6$ to $+2.0$ points. At equal side information \textsc{CurveTQ} is still ahead of both, by $+0.6$ [$-0.1$, $+1.3$] points over QTIP$_{+\mathrm{start}}$ and $+1.5$ [$+0.5$, $+2.8$] over Proteus$_{+\mathrm{start}}$, winning 18 and 21 of the 30 tasks (Table~\ref{tab:ci}). Expert Hessians are estimated from few tokens, so their LDL diagonals are damping-dominated and the weight is nearly inert here (Section~\ref{sec:codec}).

\subsection{Rotation and curvature}
\label{sec:rot}

\begin{figure}[t]
\centering
\includegraphics[width=\linewidth]{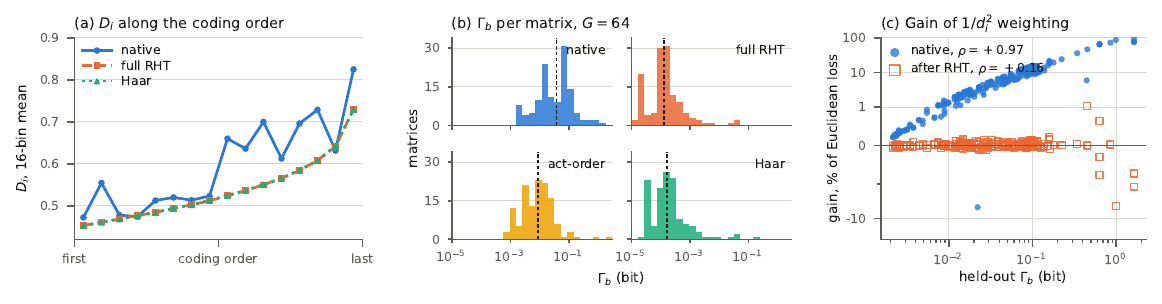}
\caption{Within-block curvature anisotropy $\Gamma_b$ (Eq.~\ref{eq:gamma}, $G=64$), and what a rotation does to it; all panels Llama-2-7B. (a) The LDL diagonal along the coding order for one matrix (layer 15, q/k/v input, 16-bin means): native is jagged (whole-layer $\max/\min$ $2503\times$), the full-dimension randomized Hadamard and the dense Haar rotation both produce the smooth ramp of Appendix~\ref{app:ramp} ($2.7\times$, $3.9\times$). (b) $\Gamma_b$ over 128 public Llama-2-7B Hessians, one histogram per transform; medians (dashed) $0.038$ native, $0.0089$ act-order, $0.00013$ full RHT, $0.00016$ Haar. (c) Realized gain of $1/d_i^2$ weighting on a held-out layer objective, as a fraction of the Euclidean loss ($1-L_W/L_E$), against $\Gamma_b$ computed from the full held-out Hessian, 224 matrices: the gain follows $\Gamma_b$ natively (Spearman $+0.97$, filled) and not after the RHT ($+0.16$, hollow); the other two models in Figure~\ref{fig:gain3}. Symlog vertical axis; details in Appendix~\ref{app:ramp}.}
\label{fig:gamma}
\end{figure}

\begin{table}[t]
\centering
\small
\setlength{\tabcolsep}{10pt}
\caption{Rotation (full RHT) $\times$ metric at $2.29$ bpw: mean accuracy over each model's suite of Table~\ref{tab:main} (34, 4 and 14 tasks), paired difference over $n=4$ slices $\pm$ s.d. P1/S1: weighted $-$ Euclidean, native/rotated; R-e/R-w: rotated $-$ native, Euclidean/weighted; SUB: rotated Euclidean $-$ native weighted. ``--'': the native Euclidean search collapses on Llama-3-8B, so contrasts against it are not reported.}
\label{tab:rot}
\begin{tabular}{@{}l rrr@{}}
\toprule
& Qwen3-4B & Llama-2-7B & Llama-3-8B \\
\midrule
P1 & $+2.44\pm0.57$ & $+4.29\pm1.04$ & -- \\
S1 & $+0.36\pm0.51$ & $+0.08\pm0.53$ & $-0.13\pm0.19$ \\
R-e & $+3.54\pm0.71$ & $+4.68\pm1.40$ & -- \\
R-w & $+1.46\pm0.42$ & $+0.48\pm0.14$ & $-0.06\pm0.67$ \\
SUB & $+1.10\pm0.63$ & $+0.39\pm0.66$ & $+0.07\pm0.84$ \\
\bottomrule
\end{tabular}
\end{table}
Section~\ref{sec:sub} predicts that after a mixing rotation the curvature weight has little left to exploit, and that in the native basis it can substitute for the rotation. We test both with a $2\times2$ design, \{rotation, none\} $\times$ \{Euclidean, $1/d_i^2$\}, in the \textsc{CurveTQ} format of Table~\ref{tab:main} ($k=2$, factored int4 scales, per-module table, $2.29$ bpw) so that only rotation and metric vary. The rotation is the full-dimension randomized Hadamard $H_{2^m}\otimes H_K$ that QuIP\# and QTIP apply to the whole input dimension (dense, but with an $O(C\log C)$ transform), drawn per matrix; table, factored scales and weight are recomputed from $RHR^\top$ in each cell; each cell is built on $n=4$ disjoint calibration slices and scored on the model's full task suite of Table~\ref{tab:main}, reported as paired difference $\pm$ s.d. The models are Qwen3-4B, Llama-3-8B-Instruct and Llama-2-7B; the last is included because public 6144-sequence Hessians exist for it \citep{tseng2024quipsharp}, which make Figure~\ref{fig:gamma}(b) exact.

\paragraph{Results.}
Figure~\ref{fig:gamma}(a,b) shows what the rotation does to $D$: it does not flatten it, but within a 64-column block the ramp it produces is flat. The within-block anisotropy $\Gamma_b$ of Eq.~\ref{eq:gamma} falls on every one of the 128 matrices, by a median of two and a half orders of magnitude ($0.038\to0.00013$ bit). Table~\ref{tab:rot} reports the realized contrasts, and three things hold on the models where they can be measured. First, in the native basis the weight gains (P1): $+2.44\pm0.57$ points on Qwen3-4B and $+4.29\pm1.04$ on Llama-2-7B, positive on all four slices of both. On Llama-3-8B the native Euclidean search collapses across the suite (MMLU at chance, GSM8K under $3\%$), an instance of the two-bit fragility of Llama-3 that \citet{huang2024llama3} document, so we report only the contrasts that do not involve that cell. Second, after the rotation the weight adds little (S1): $+0.36$, $+0.08$ and $-0.13$ points, each inside its slice spread. Third, the substitution: the rotated Euclidean search and the weighted native search are within about a point of each other (SUB), $+1.10\pm0.63$, $+0.39\pm0.66$ and $+0.07\pm0.84$. On the two Llamas it is smaller than its slice s.d. On Qwen3-4B the rotated search is ahead on 25 of the 34 tasks, by $1.1$ points on average, while WikiText-2 perplexity favours the native weighted search by $0.36$. A dense Haar rotation gives the same picture (Appendix~\ref{app:ramp}). Figure~\ref{fig:gamma}(c) adds the per-matrix view on a held-out layer objective: in the native basis the weighted search's gain tracks $\Gamma_b$ matrix by matrix (Spearman $+0.97$ on Llama-2-7B, $+0.67$ and $+0.99$ on the other two), and after the rotation it does not ($+0.16$, $-0.11$, $-0.01$). The weighted native search therefore matches the rotation QTIP deploys to within about a point at equal rate with no transform at all. The weight's value is not a lower loss than the rotation's but the native basis it lets us keep. In that basis a per-module quantile table and a factored scale field, components a rotated, Gaussian-coded design has no use for, are what separate \textsc{CurveTQ} from the $_{+\mathrm{start}}$ baselines of Table~\ref{tab:main}; consistently, the table-free build ties Proteus$_{+\mathrm{start}}$ (Appendix~\ref{app:full}). The rotation lets the baselines search without a weight; Section~\ref{sec:throughput} measures what it costs at decode.

\subsection{Ablations}
\label{sec:ablate}
\label{sec:ablations}

\begin{table}[t]
\centering
\small
\setlength{\tabcolsep}{8pt}
\caption{Component ablation, Qwen3-4B at two bits: PPL and mean accuracy over 34 tasks. Blocks: components added one per row up to the full recipe; one removed; search ablations from the quantile-fit row; block terminations. Paired differences with 95\% CIs: Appendix~\ref{app:ablate}.}
\label{tab:abl}
\input{tab_abl}
\end{table}
\paragraph{Building up the codec.}
Table~\ref{tab:abl} ablates the codec on Qwen3-4B at two bits (all models and rates in Appendix~\ref{app:ablate}). Read with the bpw column, the ladder has two kinds of step. Equal-rate steps change only accuracy: the closed-form quantile fit matches an EM fit of the same 128 entries ($0.28$ perplexity, within twice the seed noise of Appendix~\ref{app:ablate}) and makes the build deterministic, and the edge-indexed emission is the largest single step, $1.27$ perplexity and about one downstream point at equal bits. Rate-buying steps trade accuracy for bits: int8 second-level scales save $0.124$ bpw for $0.41$ perplexity, and moving them to a flat int4 grid a further $0.062$ bpw for $0.58$ perplexity and $0.8$ points. The last step recovers what the flat grid gave up---the factored int4 scale field with a per-module table is $1.26$ perplexity and $0.8$ points better at $0.006$ bpw more---so the ladder is not monotone in accuracy, and its intermediate rate-buying rows are cheaper but worse than their neighbours. End to end the recipe saves $0.18$ bpw, $1.82$ perplexity and $1.4$ [$+0.5$, $+2.4$] downstream points.

\paragraph{Removing one component.}
fp16 scales in place of the factored field cost $0.18$ bpw for $0.35$ perplexity worse and $0.4$ [$-0.3$, $+1.1$] points better: the field saves its bits at no downstream cost. Removing the emission table costs $2.13$ perplexity and $1.2$ [$0.2$, $2.2$] points at two bits and is within noise at four ($0.11$ perplexity, $0.2$ points): the table matters only at the lowest rate.

\paragraph{Ablating the search.}
We remove, one at a time, the two components Section~\ref{sec:search} argues for, starting from the quantile-fit configuration (fp16 scales). Removing error feedback costs $0.34$ perplexity and $-2.5$ [$-4.3$, $-1.4$] downstream points, concentrated on GSM8K ($-27$) and IFEval ($-6$). Replacing the $1/d_i^2$ branch metric by the Euclidean one, feedback kept, costs $-3.3$ [$-4.5$, $-2.3$] points while \emph{lowering} WikiText-2 perplexity by $0.24$. Perplexity and downstream accuracy thus disagree on what the weight is worth; we report both.

\paragraph{Ablating the block termination.}
The last block of Table~\ref{tab:abl} replaces our block-local trellis by the two constructions that store no per-block state, the recipe otherwise identical: a row-long \emph{chained} trellis and a per-block \emph{tail-biting} one (QTIP's construction on a 64-weight block). Both save $\approx0.22$ bpw and cost $2.09$ and $4.10$ perplexity and $-5.5$ [$-7.0$, $-4.2$] and $-5.1$ [$-6.9$, $-3.5$] downstream points, against the $0.4$ points the factored scale field moves for a comparable $0.18$ bpw two rows above. Lengthening the tail-biting sequence to QTIP's own tile length recovers $2.22$ of the $4.10$ perplexity but is still $-4.2$ [$-6.4$, $-2.5$] points behind.

\subsection{Decode throughput}
\label{sec:throughput}

\begin{table}[t]
\centering
\small
\setlength{\tabcolsep}{8pt}
\caption{Batch-1 decode throughput (tokens/s), one RTX~3090. Llama-2-7B QTIP and fp16 are QTIP's published numbers; the rest, and all batch 64/128 numbers, are ours. Batch 64/128 and the $+$rotation / $-$start-state arms are in Appendix~\ref{app:impl}. \textsc{CTQ}: \textsc{CurveTQ}.}
\label{tab:tput}
\input{tab_tput}
\end{table}
We implemented the decoder of Section~\ref{sec:codec} (row Decode $+$ GEMM of Figure~\ref{fig:overview}b) as a fused dequantize-and-multiply CUDA kernel and measured decoding on an RTX~3090 in QTIP's own benchmark setting, at three bit widths and three batch sizes (Table~\ref{tab:tput}; full table and setup in Appendix~\ref{app:impl}, Table~\ref{tab:tput_full}). Our harness reproduces QTIP's published fp16 figure on Llama-2-7B ($52.25$ against $52.5$ tokens/s), so at batch~1 we quote their published QTIP numbers there; QTIP publishes no batch 64/128 figures, so those, and all of Qwen3-4B, we run with its released kernels; Proteus, which has no public kernel, is our implementation throughout. Three things follow. First, our decoder is the fastest of the three in every cell: at batch~1 it is $1.05$--$1.09\times$ QTIP and $1.12$--$1.36\times$ Proteus, and at batch 64 and 128 it stays ahead by $1$--$13\%$. Second, the rotation is expensive at batch~1. Adding QTIP's activation-side Hadamard to our own decoder, so that only the transform changes, costs $5$--$9$ tokens/s on Llama-2-7B and $13$--$17$ on Qwen3-4B, $4$--$11\%$ of throughput. That is most of, or more than, our whole margin over QTIP: with the rotation added our decoder is about as fast as QTIP's ($0.93$--$1.02\times$), so at batch~1 most of the lead is the transform a rotation-free decoder does not pay for. Third, the stored start state is cheap to decode: removing it gains only $1$--$5$ tokens/s, $1$--$3\%$, so the $0.22$ bpw it costs returns $1$--$3$ downstream points (Section~\ref{sec:main}) at almost no cost in speed.

\section{Conclusion}

We showed that the curvature weight and the random rotation are substitutes: the LDL diagonal that error feedback already computes, placed in the Viterbi branch metric, exploits the within-block variation a rotation removes, and on three models the weighted native search matches the baselines' rotation to within about a point. On this basis we built \textsc{CurveTQ}, a trellis codec with no rotation to undo at inference, whose factored group scale and quantile table handle the amplitude and the marginal shape directly. Its block-local trellis with a stored start state also lifts QTIP and Proteus by $1$--$3$ points; against them so strengthened, \textsc{CurveTQ} is still $1$--$3$ points higher downstream at two bits, leads on a 35B mixture of experts, and decodes faster at every batch size we tested.

\bibliography{refs}
\bibliographystyle{arxiv}

\clearpage
\appendix
\renewcommand{\thetable}{\thesection.\arabic{table}}
\setcounter{table}{0}
\makeatletter\@addtoreset{table}{section}\makeatother
\section{Proofs and examples for Proposition~\ref{prop:main}}
\label{app:proofs}

Throughout, $H\in\mathbb R^{C\times C}$ is the damped input Hessian of one layer ($\delta=10^{-2}$ of the mean diagonal in every build), $H=(I+\bar U)D(I+\bar U)^\top$ its LDL factorization in a fixed coding order with $\bar U$ strictly upper triangular and $D=\mathrm{diag}(D_1,\dots,D_C)\succ0$, $w\in\mathbb R^C$ one output row, $\hat w$ its quantization and $e=w-\hat w$.

\subsection{The loss identity}

\begin{lemma}[identity (i)]
\label{lem:ldl}
Let column-wise error feedback quantize coordinate $j$ against the target $\tau_j=w_j+\sum_{i<j}\bar U_{ij}e_i$ and define $\eta_j=\tau_j-\hat w_j$. Then $\eta=(I+\bar U)^\top e$ and
\[
e^\top He=\sum_{i=1}^{C}D_i\,\eta_i^2
\]
for every $\hat w\in\mathbb R^C$. Under block feedback with blocks $b$ of consecutive coordinates, the same argument gives $e^\top He=\sum_b\eta_b^\top D_b\eta_b$ with $D_b$ the Schur complement of the trailing submatrix of $H$ onto block $b$.
\end{lemma}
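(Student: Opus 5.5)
The plan is to reduce the identity to one algebraic step, $\eta=(I+\bar U)^\top e$, and then substitute it into the factorization of $H$.

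First I will verify the coordinate formula. By definition,
\[
\eta_j=\tau_j-\hat w_j=(w_j-\hat w_j)+\sum_{i<j}\bar U_{ij}e_i=e_j+\sum_{i<j}\bar U_{ij}e_i .
\]
On the other side, $((I+\bar U)^\top e)_j=e_j+\sum_i \bar U_{ij}e_i$. Because $\bar U$ is strictly upper triangular, only the terms with $i<j$ survive, so the two expressions are equal. This gives $\eta=(I+\bar U)^\top e$. No property of the quantizer is used, so it holds for every $\hat w$. It is also not circular: $\tau_j$ depends only on $e_1,\dots,e_{j-1}$, which are fixed before coordinate $j$ is quantized.

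Next I substitute into the factorization:
\[
e^\top He=e^\top(I+\bar U)D(I+\bar U)^\top e=\eta^\top D\eta=\sum_i D_i\eta_i^2 .
\]
Because $D$ is diagonal, the quadratic form separates into this sum. This proves the scalar statement.

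For the block version I will use a block factorization $H=(I+\bar U_B)D_B(I+\bar U_B)^\top$. Here $\bar U_B$ is strictly block upper triangular with respect to the coding blocks, and $D_B=\mathrm{blockdiag}(D_b)$. Block feedback quantizes block $b$ against the target $w_b+\sum_{b'<b}\bar U_{b'b}^\top e_{b'}$. The argument above then applies block by block and gives $\eta=(I+\bar U_B)^\top e$. Substituting as before yields
\[
e^\top He=\sum_b\eta_b^\top D_b\eta_b .
\]

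The step that needs real work is showing that this block factorization exists and identifying $D_b$ as the Schur complement. I will argue by induction from the last block backward. Write
\[
H=\begin{pmatrix}H_{11}&H_{12}\\ H_{21}&H_{22}\end{pmatrix},
\]
with the trailing block $H_{22}$ being everything after the first block. Since $H\succ0$ after damping, $H_{22}$ is invertible. Then
\[
H=\begin{pmatrix}I&H_{12}H_{22}^{-1}\\0&I\end{pmatrix}\begin{pmatrix}S&0\\0&H_{22}\end{pmatrix}\begin{pmatrix}I&0\\H_{22}^{-1}H_{21}&I\end{pmatrix},
\qquad S=H_{11}-H_{12}H_{22}^{-1}H_{21}.
\]
Applying the same step recursively to $H_{22}$ produces the full upper--diagonal--lower factorization. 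Each pivot $D_b$ is the Schur complement of the trailing submatrix onto block $b$, and it is positive definite. Taking blocks of size one recovers the scalar factorization, with $D_i$ conditioned on all later coordinates, which matches the remark after Eq.~\ref{eq:metric}.
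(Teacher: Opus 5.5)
Your proof is correct and follows the paper's argument. You verify $\eta=(I+\bar U)^\top e$ coordinatewise from the strict upper-triangularity of $\bar U$, substitute it into $H=(I+\bar U)D(I+\bar U)^\top$, and repeat the step verbatim with the block factorization; your recursive Schur-complement construction of that factorization, which identifies each $D_b$ and shows it is positive definite, fills in a step the paper only asserts.
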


\begin{proof}
By definition $\eta_j=w_j-\hat w_j+\sum_{i<j}\bar U_{ij}e_i=e_j+(\bar U^\top e)_j$, i.e.\ $\eta=(I+\bar U)^\top e$. Hence $e^\top He=e^\top(I+\bar U)D(I+\bar U)^\top e=\eta^\top D\eta$. For block feedback replace the scalar LDL factorization by the block one, $H=(I+\bar U_B)D_B(I+\bar U_B)^\top$ with $D_B=\mathrm{blockdiag}(D_b)$; the same substitution gives the block form. No property of $\hat w$ is used.
\end{proof}

The per-step quantity $D_i\eta_i^2$ is the optimal-brain-surgeon loss of rounding coordinate $i$ given all later coordinates \citep{hassibi1993obs}, and $D_i=1/[H_{F_i}^{-1}]_{ii}$ is the reciprocal of the quantity by which GPTQ normalizes its residuals \citep{frantar2023gptq}; Lemma~\ref{lem:ldl} states that these per-step losses add exactly to the layer loss.

\subsection{Proof of Proposition~\ref{prop:main}, parts (ii) and (iii)}

\begin{proof}[Proof of (ii)]
By Lemma~\ref{lem:ldl} the loss has no cross terms in $\eta$, so $\mathbb E[e^\top He]=\sum_iD_i\,\mathbb E[\eta_i^2]$. Under the hypothesis $\mathbb E[\eta_i^2]=\sigma^2$ for all $i$ this equals $\sigma^2\,\mathrm{tr}(D)$. Since $\det(I+\bar U)=1$, $\prod_iD_i=\det H$, and the inequality of arithmetic and geometric means gives $\mathrm{tr}(D)\ge C\,(\prod_iD_i)^{1/C}=C\det(H)^{1/C}$, with equality iff all $D_i$ are equal.
\end{proof}

\begin{proof}[Proof of (iii)]
Under the high-resolution rate model $R_i=h_i-\tfrac12\log_2(c\,\sigma_i^2)$, fixing $\sum_iR_i$ fixes $\prod_i\sigma_i^2=:\sigma^{2C}$. Minimizing $\sum_iD_i\sigma_i^2$ subject to $\prod_i\sigma_i^2=\sigma^{2C}$ by a Lagrange multiplier gives $D_i\sigma_i^2=\lambda$ for all $i$; the constraint then yields $\lambda=\sigma^2\,(\prod_iD_i)^{1/C}=\sigma^2\,\mathrm{GM}(D)$ and the minimum $C\sigma^2\,\mathrm{GM}(D)$. This is the transform-coding-gain argument of \citet{huang1963block}. For the second statement, the change of variables $y=D^{1/2}x$ maps the weighted norm $\|D^{1/2}\cdot\|$ to the Euclidean norm and scales the density by $\det(D)^{-1/2}$; Zador's high-resolution formula for fixed-rate vector quantization \citep{zador1982} depends on the density only through a factor that scales by $\det(D)^{1/C}$, which gives the same floor $C\sigma^2\,\mathrm{GM}(D)$ for a code shaped to the weighted norm.
\end{proof}

\begin{remark}[on the hypothesis of (ii)]
Equal residual second moments across coordinates is the high-resolution behaviour of a single alphabet applied to every coordinate. At two bits, with per-group absmax normalization and overload, it holds only approximately, so (ii) is an idealization of the Euclidean search rather than a bound on it: with $\mathbb E[\eta_i^2]=\sigma^2+\epsilon_i$ the loss is $\sigma^2\,\mathrm{tr}(D)+\sum_iD_i\epsilon_i$, whose second term has no fixed sign. The realized gain of the weighted search is measured directly in Section~\ref{sec:rot}.
\end{remark}

\subsection{A rotation flattens $D$ only partially; a data-aware one flattens it exactly}

\begin{example}
\label{ex:2x2}
Let $H=\mathrm{diag}(10,1)$. In the native basis $D=(10,1)$, $\mathrm{AM}(D)/\mathrm{GM}(D)=1.74$, i.e.\ $\Gamma(H)=0.40$ bit: feedback has nothing to propagate and the anisotropy is entirely available to a weighted search. A rotation by $45^\circ$ gives $H'=\bigl(\begin{smallmatrix}5.5&-4.5\\-4.5&5.5\end{smallmatrix}\bigr)$, whose raw diagonal is flat but whose LDL diagonal in coding order is $D'=(1.82,\,5.5)$: $\mathrm{AM}/\mathrm{GM}=1.16$, $\Gamma(H')=0.10$ bit, and $\prod_iD'_i=\det H=10$ as Proposition~\ref{prop:main}(ii) requires. With $H'$ as the native Hessian, a raw-diagonal weight $H'_{ii}$ sees no anisotropy, the LDL weight sees the residual $0.10$ bit, and feedback recovers the rest: the Euclidean loss is $11\sigma^2$ without feedback, $\sigma^2\,\mathrm{tr}(D')=7.32\sigma^2$ with it, against the floor $2\sigma^2\sqrt{10}=6.32\sigma^2$.
\end{example}

The rotation that equalizes the LDL diagonal exactly is not the $45^\circ$ one but the geometric-mean decomposition of $H^{1/2}$ \citep{jiang2005gmd}, here the angle $\theta$ with $10\sin^2\theta+\cos^2\theta=\sqrt{10}$, i.e.\ $\theta\approx29.3^\circ$; after it a Euclidean search with feedback attains the floor. That rotation depends on $H$. A random rotation does not, and the example shows that it need not flatten $D$; Appendix~\ref{app:ramp} describes what it does instead.

\subsection{The substitution hypothesis is not a theorem}

Write, for a fixed block size $G$ and coding order, $\Lambda_{\mathrm{nat,W}}=\sum_bG\,\mathrm{GM}(\mathrm{diag}\,D_b)$ for the high-resolution proxy loss (per unit residual variance) of the weighted search in the native basis, and $\Lambda_{\mathrm{rot,E}}=\mathbb E_R\sum_b\mathrm{tr}(D'_b)$ for that of the Euclidean search after a random rotation $R$, both under block feedback with in-block residuals of second-moment matrix $\sigma^2I$ (Appendix~\ref{app:ramp}).

\begin{proposition}
\label{prop:noorder}
Neither $\Lambda_{\mathrm{nat,W}}\le\Lambda_{\mathrm{rot,E}}$ nor $\Lambda_{\mathrm{nat,W}}\ge\Lambda_{\mathrm{rot,E}}$ holds for all positive definite $H$.
\end{proposition}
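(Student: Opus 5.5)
The plan is to prove both failures with explicit diagonal $H$. For a diagonal $H$ the native factorization is trivial: $\bar U=0$, and every $D_b$ is the corresponding diagonal block of $H$. So $\Lambda_{\mathrm{nat,W}}=\sum_b G\,\mathrm{GM}(\mathrm{diag}\,H_b)$ can be computed exactly. On the rotated side only two comparisons are needed, one from each direction. I take $C=mG$ with $m\ge2$ and fix the block size $G$ and the coding order, as in the statement.

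\emph{Lower bound on the rotated loss.} For any $R$, the Schur complements $D'_b$ of $H'=RHR^\top$ satisfy $\prod_b\det D'_b=\det H'=\det H$; this is the block analogue of $\det(I+\bar U)=1$ in the proof of Proposition~\ref{prop:main}(ii). Applying AM--GM to the eigenvalues of each $D'_b$, and then across blocks, gives
\[
\sum_b\mathrm{tr}(D'_b)\ \ge\ C\det(H)^{1/C}.
\]
Equality requires every $D'_b$ to be the same scalar multiple of $I$, which is a measure-zero event for a Haar or randomized-Hadamard $R$ whenever $H$ is not scalar.

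\emph{Upper bound on the rotated loss.} A Schur complement never exceeds the diagonal block it is taken from, so $D'_b\preceq H'_{bb}$. Summing traces, $\sum_b\mathrm{tr}(D'_b)\le\mathrm{tr}(H')=\mathrm{tr}(H)$. The inequality is strict as soon as some block $b$ is coupled to the later blocks, i.e.\ $H'_{b,>b}\ne0$, and this holds almost surely for non-scalar $H$.

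\emph{Case $\Lambda_{\mathrm{nat,W}}<\Lambda_{\mathrm{rot,E}}$.} Let $a\ne c$ be positive and let $H$ be diagonal with entries alternating $a,c$ inside every block (so $G$ is even; for odd $G$ use any fixed non-constant pattern repeated in every block). Every block then has $\mathrm{GM}=\det(H)^{1/C}$, so $\Lambda_{\mathrm{nat,W}}=C\det(H)^{1/C}$, which is exactly the floor. The lower bound, strict almost surely, gives $\Lambda_{\mathrm{rot,E}}>\Lambda_{\mathrm{nat,W}}$.

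\emph{Case $\Lambda_{\mathrm{nat,W}}>\Lambda_{\mathrm{rot,E}}$.} Let $H$ be diagonal and constant within each block, equal to $a$ on the first block and $c\ne a$ on the rest. The native weighted search then has nothing to exploit, and $\Lambda_{\mathrm{nat,W}}=\mathrm{tr}(H)$. The upper bound, strict almost surely, gives $\Lambda_{\mathrm{rot,E}}<\mathrm{tr}(H)=\Lambda_{\mathrm{nat,W}}$ after taking expectations.

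The main obstacle is the almost-sure strictness in the two bounds, especially for the randomized Hadamard, whose distribution is discrete. To handle it I would reduce both strictness claims to algebraic conditions on $R$. For the upper bound, the coupling $H'_{b,>b}=R_bHR_{>b}^\top$ must not vanish. For the lower bound, the $D'_b$ must not all be the same scalar matrix. For the sign-randomized Hadamard I would exhibit one admissible sign pattern violating each condition; since the expectation averages over finitely many equally likely patterns, one such pattern already makes the expected inequality strict. For Haar $R$ the conditions are nonzero polynomials in the entries of $R$, so they fail only on a null set. Both constructions can also be written concretely with $G=1$, $C=2$, where Example~\ref{ex:2x2} already exhibits the second case.
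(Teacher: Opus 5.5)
Your proposal is correct and is essentially the paper's proof: diagonal $H$ with the anisotropy placed either inside every block (native proxy at the floor $C\det(H)^{1/C}$) or between blocks (native proxy equal to $\mathrm{tr}\,H$), compared against the strict bounds $C\det(H)^{1/C}<\Lambda_{\mathrm{rot,E}}<\mathrm{tr}\,H$ obtained from AM--GM and from $D'_b\preceq H'_{bb}$. The differences are cosmetic: you work for general $C=mG$ with two unrelated matrices, where the paper uses the same-spectrum pair $\mathrm{diag}(10,1,10,1)$, $\mathrm{diag}(10,10,1,1)$ at $C=4$, $G=2$, and you get the floor from $\prod_b\det D'_b=\det H$ rather than from $(D'_b)_{ii}\ge D'_i$.

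Two small remarks. First, your closing aside fails for the first case: with $G=1$ and diagonal $H$ both native proxies equal $\mathrm{tr}\,H\ge\Lambda_{\mathrm{rot,E}}$, which is the one-directional boundary case the paper notes, so that direction needs $G\ge2$ or a non-diagonal $H$ whose LDL diagonal is constant. Second, for diagonal $H$ the Hadamard's sign pattern does not change $\sum_b\mathrm{tr}(D'_b)$ at all, so your finitely-many-patterns argument reduces to checking one fixed matrix.
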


\begin{proof}
Fix $C=4$, $G=2$, $H_A=\mathrm{diag}(10,1,10,1)$ and $H_B=\mathrm{diag}(10,10,1,1)$. Both have spectrum $\{10,10,1,1\}$, so for Haar-distributed $R$ the law of $RHR^\top$ is the same for both and $\Lambda_{\mathrm{rot,E}}(H_A)=\Lambda_{\mathrm{rot,E}}(H_B)=:\Phi$. For every orthogonal $R$, with $H'=RHR^\top$, two bounds hold. Each $D'_b$ is a Schur complement of $H'$, so $(D'_b)_{ii}\ge D'_i$, the scalar LDL diagonal of $H'$; since $\prod_iD'_i=\det H$, the inequality of arithmetic and geometric means gives $\sum_b\mathrm{tr}(D'_b)\ge\sum_iD'_i\ge C\det(H)^{1/C}$. A Schur complement is dominated by its block, so $\sum_b\mathrm{tr}(D'_b)\le\sum_b\mathrm{tr}(H'_{bb})=\mathrm{tr}\,H$. The first bound is attained only if all $D'_i$ coincide and the second only if $H'$ is block diagonal, events of Haar measure zero for $H\ne cI$; hence $C\det(H)^{1/C}<\Phi<\mathrm{tr}\,H$, i.e.\ $4\sqrt{10}<\Phi<22$ (numerically $\Phi=17.73\pm0.01$ over $2\cdot10^4$ draws). In the native basis $D_b=\mathrm{diag}(H_b)$, so $\Lambda_{\mathrm{nat,W}}(H_A)=2\cdot2\sqrt{10}=4\sqrt{10}<\Phi$ and $\Lambda_{\mathrm{nat,W}}(H_B)=2\cdot10+2\cdot1=22>\Phi$.
\end{proof}

The two matrices differ only in where the anisotropy of $D$ sits. In $H_A$ it lies inside every coding block, the weighted search attains the floor $C\det(H)^{1/C}$ and a random rotation can only lose; in $H_B$ it lies between blocks, the weighted search has nothing to exploit and the rotation's mixing lowers the Euclidean loss. Interleaving or grouping the eigenvalues extends the construction to any $C$ and $G$ with $2\le G\le C/2$; the boundary cases are one-directional (with a single block the rotated Euclidean proxy equals $\mathrm{tr}\,H$ for every $R$, and with $G=1$ both native proxies equal $\mathrm{tr}\,H$). Whether native weighting matches rotation is therefore a property of where the Hessians of trained language models put their anisotropy, not of positive definite matrices in general. Their native $D$ is dominated by outlier input channels that place large anisotropy inside coding blocks (Figure~\ref{fig:gamma}a,b), the regime of $H_A$; Section~\ref{sec:rot} tests the hypothesis on that population.

\subsection{When the two curvature weights coincide}

\begin{remark}
\label{rem:weights}
Within a block, multiplying all weights $\omega_i$ by a common factor leaves the Viterbi argmin unchanged. Hence the LDL weight $\omega_i=1/d_i^2=D_i$ and the raw-diagonal weight $\omega_i=H_{ii}$ select the same path for every target if $D_i/H_{ii}$ is constant within the block. Since $D_i$ is the residual second moment of $x_i$ after linear prediction from $x_{>i}$ and $H_{ii}=\mathbb E[x_i^2]$, this ratio is $1-R_i^2$, with $R_i^2$ the uncentred squared multiple correlation of channel $i$ with the channels coded after it. The two weights therefore agree exactly on blocks whose channels are equally predictable from later ones, and differ most where a few channels are strongly correlated with the rest. This is the sense in which the near-tie between $H_{ii}$ and $1/d_i^2$ observed on the feedback axis (Appendix~\ref{app:ramp}) is expected.
\end{remark}

\section{Rotation and curvature: the ramp, the proxies, and extended analysis}
\label{app:ramp}

\begin{table}[htbp]
\centering
\footnotesize
\setlength{\tabcolsep}{5pt}
\caption{The three heterogeneities that a rotation homogenizes, and how the rotation-free recipe handles each.}
\label{tab:mechanisms}
\begin{tabular}{@{}p{0.23\linewidth}p{0.47\linewidth}p{0.24\linewidth}@{}}
\toprule
Heterogeneity & Mechanism & Inference cost \\
\midrule
local amplitude $s_g$ & per-(row, 64) group scale with factored second-level quantization (\S\ref{sec:codec}) & one multiply per weight; $0.07$ bpw \\
normalized shape of $z$ & closed-form quantile table, or computed code (\S\ref{sec:codec}) & 128-entry int8 table, or none \\
curvature $D_i$ & weighted branch metric~Eq.~\ref{eq:metric} & none (encode side) \\
coupling $\bar U$ & inter-block: error feedback; intra-block: omitted & none (encode side) \\
\bottomrule
\end{tabular}
\end{table}

\paragraph{Ramp property.}
After a Haar or sign-randomized Hadamard rotation, $D'_i$ follows a smooth ramp from the harmonic mean of the damped spectrum to its arithmetic mean along the coding order (checked numerically, not proved). Its two ends are exact: $D'_C=u^\top Hu$ and $D'_1=1/(v^\top H^{-1}v)$, with $u$ and $v$ the last and first columns of $R$, so the two ends fluctuate on different scales, set by $H$ and by $H^{-1}$. What the argument needs is only that the ramp is locally flat on the scale of a block, which we measure directly: the within-block s.d.\ of $\ln D'_i$ has median $0.013$ on the 128 public Llama-2-7B Hessians ($0.116$ natively). Where it is flat, $\Gamma'_b\approx0$ and the possible gain of weighting is small: \emph{after rotation, weighting is redundant} (prediction S1).

\paragraph{The substitution hypothesis.}
Whether native weighting \emph{matches} rotation is not implied by (i)--(iii); both sides are computable from $H$ alone. Under block feedback with in-block residuals of second-moment matrix $\mathbb E[\eta_b\eta_b^\top]=\sigma^2I$ (equal and uncorrelated; equal diagonal alone does not suffice, since $\mathbb E[\eta_b^\top D_b\eta_b]=\mathrm{tr}(D_b\,\mathbb E[\eta_b\eta_b^\top])$), the high-resolution proxy losses of the four conditions in Section~\ref{sec:rot} are $\Lambda_{\mathrm{nat,E}}=\sum_b\mathrm{tr}(D_b)$, $\Lambda_{\mathrm{nat,W}}=\sum_b G\,\mathrm{GM}(\mathrm{diag}\,D_b)$, $\Lambda_{\mathrm{rot,E}}=\mathbb E_R\sum_b\mathrm{tr}(D'_b)$ and $\Lambda_{\mathrm{rot,W}}\approx\Lambda_{\mathrm{rot,E}}$; in this notation $\Gamma_b$ of Eq.~\ref{eq:gamma} is $\tfrac12\log_2(\Lambda_{\mathrm{nat,E}}/\Lambda_{\mathrm{nat,W}})$, the block-local gain of Section~\ref{sec:sub}. The hypothesis is $\Lambda_{\mathrm{nat,W}}\approx\Lambda_{\mathrm{rot,E}}$ per layer. Neither ordering holds for general matrices (Appendix~\ref{app:proofs}), so this is a statement about the Hessians of trained LLMs, where outlier input channels put large, block-local anisotropy into the native $D$. It predicts: (P1) in the native basis, the realized gain of the weighted metric tracks $\Gamma_b$; (P2) after rotation it vanishes wherever $\Gamma'_b\approx0$.

\paragraph{The $H$-only proxies on the public Llama-2-7B Hessians.}
With the 128 public Hessians (6144 sequences, damping $10^{-2}$ of the mean diagonal, $G=64$): after the full-dimension RHT or a Haar rotation, $\Gamma'_b$ has median $0.0001$ and maximum $0.04$ (RHT) or $0.20$ (Haar) bit, and $\Lambda_{\mathrm{rot,W}}$ is within $2\%$ of $\Lambda_{\mathrm{rot,E}}$ in 125 of 128 matrices, so S1 holds on $H$ alone. In the native basis $\Gamma_b$ has median $0.038$ bit (mean $0.085$) and is large only where the spectrum collapses: $1.69$ bit for the first layer's attention input ($\mathrm{erank}=4$), $0.97$ for the second layer's down-projection ($\mathrm{erank}=1$), $0.80$ for the last layer's. The rotation's own proxy gain for a Euclidean search, $\Lambda_{\mathrm{nat,E}}/\Lambda_{\mathrm{rot,E}}$, has median $1.105$ ($0.072$ bit) and reaches $2.2$ ($0.58$ bit) in those layers. The substitution ratio $\Lambda_{\mathrm{nat,W}}/\Lambda_{\mathrm{rot,E}}$ has median $0.998$ (interquartile $0.996$--$1.096$) and is $\le1$ in 96 of 128 matrices; native weighting wins by a wide margin exactly where the rotation gains most, and loses on the attention output projections (median $1.077$, ten of 32 above $1.1$). These proxies assume equal per-coordinate residual variance and bound the anisotropy mechanism only.

\begin{figure}[htbp]
\centering
\includegraphics[width=\linewidth]{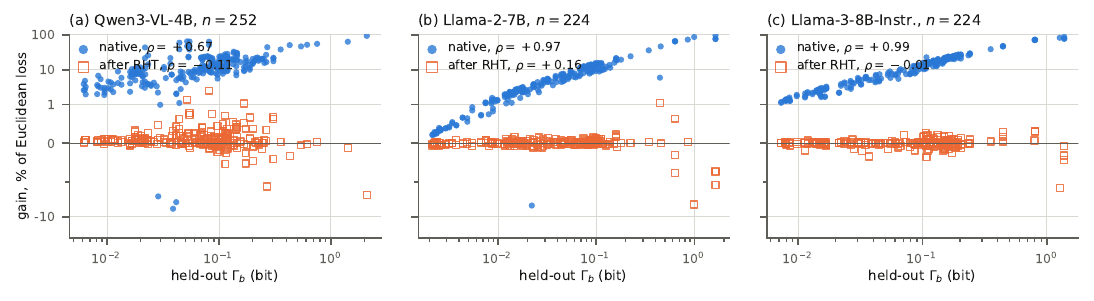}
\caption{The plane of Figure~\ref{fig:gamma}(c) on all three models: realized per-matrix gain of $1/d_i^2$ weighting on the held-out layer objective, as a fraction of the Euclidean loss ($1-L_W/L_E$), against $\Gamma_b$ of Eq.~\ref{eq:gamma} computed from the full held-out Hessian (64 sequences no cell calibrated on), native (filled circles) and after the full-dimension randomized Hadamard (hollow squares). The native gain tracks $\Gamma_b$ on every model (Spearman $+0.67$, $+0.97$, $+0.99$) and the rotated one does not ($-0.11$, $+0.16$, $-0.01$); the rotation lowers the median $\Gamma_b$ from $0.075$, $0.032$, $0.103$ to $0.0027$, $0.0003$, $0.0005$ bit, and the median native gains follow the native medians ($11.7\%$, $4.6\%$, $13.2\%$; per-matrix medians, whereas the model-level percentages below sum over matrices). Figure~\ref{fig:gamma}(c) plots panel (b) alone. Symlog vertical axis ($|{\cdot}|<1\%$ linear).}
\label{fig:gain3}
\end{figure}

\paragraph{Held-out layer objective.}
Scoring every cell on $\mathrm{tr}(EHE^\top)/\mathrm{tr}(WHW^\top)$ with $H$ from 64 sequences no cell calibrated on (summed over coded matrices; rotation-invariant, so rotated and native cells compare directly) separates what the metric does from what the rotation does, and the two do not agree with the tasks in the same way. The metric's effect tracks tasks: P1 removes $17.2\pm0.3\%$ of the Euclidean loss on Qwen3-4B, $10.3\pm0.2\%$ on Llama-3-8B and $5.8\pm0.03\%$ on Llama-2-7B, on all four slices of each; S1 is $-0.14\%$, $-0.01\%$ and $-0.01\%$ (the two rotated searches agree to $2\times10^{-3}$ percentage points in weight-space error on every model); and matrix by matrix the native gain follows $\Gamma_b$ at Spearman $+0.67$, $+0.97$ and $+0.99$ (Figure~\ref{fig:gain3}) while after the RHT it does not ($-0.11$, $+0.16$, $-0.01$). The rotation's effect does not track tasks: on Qwen3-4B the full-dimension RHT lowers the held-out objective of a Euclidean search by $31.9\pm0.8\%$, twice the metric's gain, and the rotated Euclidean cell sits $17.7\pm0.7\%$ below the weighted native one on this objective on all four slices, yet the two are within about a point on the 34-task mean (SUB $+1.10\pm0.63$) and WikiText-2 perplexity leans to the native weighted one ($0.36\pm0.29$, all four slices); on Llama-3-8B and Llama-2-7B the same two cells differ by $0.33\pm0.02\%$ and $0.32\pm0.03\%$ on the layer objective, fifty times less, and by no more on tasks ($+0.07\pm0.84$ and $+0.39\pm0.66$). Qwen3-4B is the one model of three where this objective sees the rotation at all. A per-layer quadratic loss is the wrong yardstick for a rotation, which changes the error's direction in every layer at once; we report it for the metric, where it agrees with every other measure, and read the rotation on end-to-end quantities only. End-to-end perplexity on Qwen3-4B (WikiText-2, context 4096, $n=4$) tells the same story for the metric: P1 $-1.19\pm0.73$, S1 $-0.27\pm0.19$, SUB $+0.36\pm0.29$, R-w $+0.09\pm0.17$; P1, S1 and SUB each carry their sign on all four slices, so on this yardstick the weight retains a small residual value after the rotation and the native weighted search is $0.36$ perplexity ahead of the rotated Euclidean one.

\paragraph{Weighting and feedback.}
The branch weight and the LDLQ feedback address the same quantity from two sides, so we separate them on Qwen3-4B in the format of Table~\ref{tab:main}, scored on the same 34-task suite as Table~\ref{tab:rot}. Switching the feedback off (round-to-nearest within the trellis search) and the weight off costs $10.3$ points; restoring either one alone recovers most of it: the feedback alone is worth $+7.85\pm0.47$ points and the $H_{ii}$ branch weight alone, with no feedback, $+8.73\pm0.29$, each positive on all four calibration slices. Using both buys $+10.29\pm0.34$ ($1/d_i^2$) or $+10.47\pm0.35$ ($H_{ii}$), not $+16.6$: the interaction is $-6.3$ points, negative on all four slices, so the two mechanisms are near-substitutes rather than complements, and once the feedback is on the weight is worth $+2.44\pm0.57$ ($1/d_i^2$, the P1 of Table~\ref{tab:rot}) or $+2.62\pm0.56$ ($H_{ii}$). Weight-only exceeds feedback-only by $+0.88\pm0.27$ points, and the two curvature weights under feedback differ by less than a fifth of a point ($-0.18\pm0.12$), consistent with Section~\ref{sec:search}. Every contrast is paired over the same four calibration slices; the arm with neither mechanism uses no Hessian and is a single build. Two format caveats apply to the feedback-free arms, and both are small relative to the effect: they code each row as a single trellis sequence and so pay one start state per row rather than one per 64-column block ($2.068$ against $2.288$ bpw, in their favour), and the factored scale field is defined only on the block-feedback path, so they use the flat int4 grid (worth about $-0.8$ points, against them, Section~\ref{sec:ablate}). The missing start state is itself worth $4$--$5$ points (Table~\ref{tab:abl}), so charging it in full still leaves both single-mechanism effects above $+2$ points.

\paragraph{Which rotation.}
The rotation of Table~\ref{tab:rot} is the one the baselines deploy: a sign-randomized Hadamard on the full input dimension, $H_{2^m}\otimes H_K$ with a Sloane Hadamard matrix $H_K$ for non-power-of-two widths ($K=20$ for the 2560-wide and $K=172$ for the 11008-wide inputs, $K=28$ for 14336, $K=1$ for 4096, as in QTIP's tables; $K=76$, a Paley-II matrix QTIP never needed, for Qwen3-4B's 9728-wide down-projection input), the construction of QuIP\# and QTIP. A blocked Hadamard on 256 columns, the cheaper stand-in we used in an earlier version of this experiment, leaves the rotated Euclidean search $1.3$--$2.8$ points behind the weighted native search on all three models; the full-dimension transform closes that gap entirely (Table~\ref{tab:rot}), as does a dense per-matrix Haar rotation on Qwen3-4B, on the same 34-task suite ($+1.32\pm0.70$ against native weighted, as against $+1.10$ for the RHT; $+0.22\pm0.34$ against the RHT; weighting after it $-0.27\pm0.87$). The two dense rotations are indistinguishable from each other and within about a point of the weighted native search; the block Hadamard's shortfall was a mixing effect, not evidence about the weight.

\paragraph{What the native basis carries.}
The native basis is heterogeneous in three ways, and the codec meets each where it lives: the amplitude field $s_g$ carries local magnitude, the branch weight $D_i$ carries local sensitivity, and the boundary-state field $b_g$ of Section~\ref{sec:search} carries the local non-stationarity that feedback itself creates at a block boundary. The last is not something a rotation removes: it comes from the feedback, which the baselines use as well, and the start state lifts them in the rotated basis too (Section~\ref{sec:main}).

\section{Extended related work}
\label{app:related}

\paragraph{What a rotation repairs.}
QuIP's Table~2 shows that for a \emph{scalar} grid at two bits, removing incoherence is a cliff: outlier input channels overwhelm a grid that has no local scale. That cliff is the amplitude and shape job of the rotation, which our group scales and quantile table do in the native basis; it is not evidence about the curvature job. Leech-lattice VQ \citep{vanderouderaa2026llvq} reports the same from the other side: with a norm-preserving correction in place of plain GPTQ, dropping the rotation moves Llama-2-7B at two bits from $191.9$ to $6.9$ perplexity, leaving the Hadamard worth about $0.2$: part of what a rotation buys is compensation for a mismatch between solver and code, not a property of the weights.

\paragraph{Trellis successors.}
Proteus \citep{yang2026proteus} keeps QTIP's rotation, block LDLQ and Euclidean Viterbi but replaces the lookup table with a computed Gaussianizing code, chosen per layer from a small candidate pool by a Hessian-weighted \emph{selection} objective (curvature enters the choice of code, not the branch cost), and then learns per-channel scales and distills per-layer affines. Its throughput advantage at large batch is attributed to codebook shared-memory traffic rather than to the rotation. Q-Palette \citep{lee2025qpalette} derives the optimal bit allocation for Gaussianized weights and provides fractional-bit trellis, vector and scalar quantizers; BCJR-QAT \citep{bcjrqat2026} replaces the Viterbi argmax by forward--backward marginals so that the trellis can be trained through, and puts the curvature in the training objective.

\paragraph{Fine-tuning, and what a published ``two bits'' contains.}
PV-Tuning \citep{malinovskii2024pvtuning} holds the code fixed and varies only the fine-tuner: AQLM at $2.01$ bpw on Llama-2-7B moves from $7.38$ WikiText-2 perplexity with calibration alone to $6.69$, $6.41$ and $5.90$ as the fine-tuner strengthens. QTIP's own end-to-end stage is worth about one perplexity point at 7B and about $0.2$ at 70B. Converted through the no-fine-tuning rate slope this is $0.7$--$1.1$ effective bits at 7B, more than the spread between any two codes we compare. Even tuning per-column scales alone moves QuIP\# and QTIP by more than a perplexity point in a common pipeline \citep{vanderouderaa2026llvq}. QuIP\#, QTIP, AQLM and Proteus all publish their headline low-bit numbers after such a stage, and the rates they publish differ in what they charge for its parameters and for metadata. We therefore fix the fine-tuning protocol at none for every method we run and compare at matched rate; how the margins change under a fine-tuning stage is left to future work.

\paragraph{Other relevant lines.}
GPTQ's column-wise feedback has recently been identified with Babai's nearest-plane algorithm on the Hessian lattice \citep{chen2026babai}, whose error bound is a quadratic form in the same diagonal $D$ that we place in the search. YAQA \citep{tseng2025yaqa} improves which Hessian is used and measures quantized models by their KL divergence to the original. ICQuant \citep{li2025icquant} codes outlier positions by their index gaps; HIGGS \citep{malinovskii2025higgs} derives MSE-optimal grids for Hadamard-rotated weights; NF4 is the Gaussian-quantile construction that our empirical quantile table generalizes. For mixtures of experts, MoEQuant \citep{hu2025moequant} studies calibration imbalance across experts and recent work allocates bits across experts. Calibration-set sensitivity is documented by \citet{williams2024calibration}; for a deterministic codec its consequence is sharper, since the calibration slice is the only replicate axis. Quantization is known to hurt multi-step reasoning first \citep{li2024evaluating,liu2025reasoning}, which is where our two-bit gains concentrate.

\section{Full perplexity and accuracy tables}
\label{app:full}

Tables~\ref{tab:ppl} and~\ref{tab:down} extend Table~\ref{tab:main} with the NF baseline (a Hessian-aware quantile grid with a Hadamard transform).

The $x.25^{+}$ rows are not a handicap: a 64-group fp16 scale is the cheapest way either baseline can spend comparable extra bits, and in nine of ten (model, codec) cells it gives that baseline its own best two-bit perplexity, so the comparison is against a strengthened baseline. Both rate rows are kept because the exchange rate is not transferable: the same scale is worth $-1.64$ perplexity to QTIP on Qwen3-4B and only $-0.22$ to Proteus, so a baseline cannot be interpolated to our rate. The $_{+\mathrm{start}}$ rows spend comparable side information on our stored start state instead (Appendix~\ref{app:ablate}), and Section~\ref{sec:ablate} bounds the question from our side with a variant of our own recipe at $2.07$ bpw.

\paragraph{Both side channels at once.} The $_{+\mathrm{start+g128}}$ rows give each baseline our stored start state \emph{and} a 128-group fp16 scale, $2.31$ bpw for QTIP and $2.34$ for Proteus, so for the first time the baseline rate lies above ours. On the four dense models the ordering does not move: at two bits our margin is $+1.0$ to $+2.7$ downstream points on five of six contrasts (Table~\ref{tab:ci}) and $1.5$--$2.8$ perplexity on Qwen3-4B, with the Llamas within $0.25$; above two bits every contrast lies within $\pm1.1$ points and $0.31$ perplexity. The second channel adds little to the first---$-0.7$ to $+0.8$ points over the $_{+\mathrm{start}}$ rows, and a higher perplexity in five of the eighteen cells on the three 4--8B models (e.g.\ QTIP on Qwen3-4B at two bits, $13.99\to14.31$; Proteus on Qwen3-4B at three and four bits)---so the two kinds of side information are not additive, and neither one interpolated to our rate would close the gap. On Qwen3-32B the same holds for perplexity ($6.99$ and $6.89$ against our $6.90$). We did not run these arms on the mixture of experts.

\paragraph{Removing the table.}
The table-free variant of Section~\ref{sec:codec} (\ours$_{\mathrm{no\,LUT}}$ in Tables~\ref{tab:ppl} and~\ref{tab:down}) is within noise of the table variant at three and four bits: over the six model $\times$ tier cells above two bits, four paired differences include zero, the others are $-0.5$ [$-0.8$, $-0.2$] and $+0.7$ [$+0.1$, $+1.6$] (Table~\ref{tab:nl_ci}), and perplexity moves by at most $0.11$. At two bits the table matters on two of the four models: removing it costs $2.1$ perplexity and $1.2$ points on Qwen3-4B, and $2.6$ points on Llama-3.1-8B, where perplexity changes by only $0.06$; once more, perplexity understates a downstream change. On the other two it is close to free ($0.5$ points on Llama-3-8B) or slightly negative (Llama-2-7B).

\begin{table}[htbp]
\centering
\footnotesize
\setlength{\tabcolsep}{3.5pt}
\caption{WikiText-2 perplexity (context 4096; lower is better), all methods. Baselines: QTIP~\citep{tseng2024qtip}, Proteus~\citep{yang2026proteus}, HIGGS~\citep{malinovskii2025higgs}, GPTQ~\citep{frantar2023gptq}; GPTQ (g64) is a symmetric int-$k$ grid with one fp16 scale per 64 weights and no zero point, $k+0.25$ bpw. Bold: best in tier; the fp16-scale variant of our build ($_{\mathrm{fp16}}$, $x.47$ bpw) is listed for reference and not ranked.}
\label{tab:ppl}
\input{tab_ppl}
\end{table}

\begin{table}[htbp]
\centering
\footnotesize
\setlength{\tabcolsep}{3.5pt}
\caption{Mean downstream accuracy (\%) over the task suite of Section~\ref{sec:protocol} ($n=34$ tasks for the Qwen3 models, 14 for each Llama model), all methods. ``$\times$'': the model fails to produce valid outputs on most tasks. Bold: best in tier; the fp16-scale variant ($_{\mathrm{fp16}}$) is listed for reference and not ranked.}
\label{tab:down}
\input{tab_down}
\end{table}

\section{Paired comparisons}
\label{app:ci}

Table~\ref{tab:ci} lists, for every comparison quoted in Section~\ref{sec:main}, the mean paired difference in accuracy over tasks between our build and a trellis baseline at the neighbouring rate, with a bootstrap 95\% confidence interval (4000 resamples of the task set) and the number of tasks on which ours scores higher. The interval answers whether the margin depends on the task mix; it does not cover calibration-set variability, which Section~\ref{sec:rot} measures on three models at about $0.3$--$0.6$ points s.d.\ per build and which is of the same order as the smaller margins in this table. A win counts the same whether the margin is 20 points or 0.5, so the last column shows how broad each gain is: at two bits our build is higher on 82\% of all task comparisons pooled over the four dense models and the four trellis baselines (313 of 384), 69\% at three bits (266 of 384) and 65\% at four bits (161 of 248).

The margin shrinks with model size, as quantization loss does in general \citep{li2024evaluating}, and is largest on generative tasks (GSM8K $+14$ to $+21$ points against $+2$ to $+3$ on multiple choice), but survives their removal: without GSM8K the margin over Proteus is $+2.8$ instead of $+3.6$ points and every contrast but one remains significant.

\paragraph{The table-free variant across cells.}
Across the nine model $\times$ tier cells of Table~\ref{tab:nl_ci}, the table-free build stays ahead of QTIP in all nine and is ahead of Proteus or within $0.2$ points of it in all nine.

{\footnotesize\setlength{\tabcolsep}{4pt}
\input{tab_ci}}

\begin{table}[htbp]
\centering
\footnotesize
\setlength{\tabcolsep}{3pt}
\caption{The table-free variant: paired differences in downstream accuracy (points) of \textsc{CurveTQ}$_{\mathrm{no\,LUT}}$ against the table variant and against QTIP and Proteus at the neighbouring rate, over tasks, with bootstrap 95\% CIs.}
\label{tab:nl_ci}
\input{tab_nl_ci}
\end{table}

\section{Weight-space error}
\label{app:werr}
Table~\ref{tab:werr} reports the relative weight-space error under the Hessian norm for every build of Table~\ref{tab:ppl}.
Our build has the lowest relative weight-space error in every cell but one (Qwen3-4B at two bits, where Proteus$_{+\mathrm{start+g128}}$ is lower by $0.01$). Weight-space error is a build-time diagnostic; the ordering of the other methods on it does not carry over to perplexity or tasks.

\begin{table}[htbp]
\centering
\footnotesize
\setlength{\tabcolsep}{3.5pt}
\caption{Relative weight-space error $\|W-\hat W\|_H/\|W\|_H$ in percent (lower is better), same builds as Table~\ref{tab:ppl}. Bold: best in tier; the fp16-scale variant ($_{\mathrm{fp16}}$) is listed for reference and not ranked.}
\label{tab:werr}
\input{tab_werr}
\end{table}

\section{Task suite and per-task results}
\label{app:tasks}

\begin{table}[htbp]
\centering
\footnotesize
\setlength{\tabcolsep}{2.8pt}
\caption{Per-model protocol. ``LM linears'': only the language-model linear layers are quantized; the vision tower stays in bf16 and is outside the reported rate, which covers the coded linear layers. Llama-2-7B is the model on which QTIP and Proteus report their headline numbers, so we follow their context length and their four zero-shot tasks and our reproductions can be checked against the published values.}
\label{tab:models}
\begin{tabular}{llclll}
\toprule
Checkpoint & Short name & Ctx. & Quantized & Tasks & Methods run \\
\midrule
Llama-2-7B (base)     & Llama-2-7B   & 4096 & all linears & 4 zero-shot (QTIP's) & all, $+$ no-LUT \\
Llama-3-8B-Instruct   & Llama-3-8B   & 4096 & all linears & 14 text & all, $+$ no-LUT \\
Llama-3.1-8B-Instruct & Llama-3.1-8B & 4096 & all linears & 14 text & all, $+$ no-LUT \\
Qwen3-VL-4B-Instruct  & Qwen3-4B     & 4096 & LM linears  & 34 (14 $+$ 4 LiveBench $+$ 16 MM) & all, $+$ no-LUT \\
Qwen3-VL-32B-Instruct & Qwen3-32B    & 4096 & LM linears  & 34 & QTIP, Proteus, ours \\
Qwen3.6-35B-A3B (MoE) & MoE          & 4096 & expert $+$ attn.\ linears & 30 & QTIP, Proteus, ours \\
\bottomrule
\end{tabular}
\end{table}

\paragraph{Coverage of the table-free variant.}
The table-free variant of Section~\ref{sec:codec} is run on the four dense models up to 8B.

\paragraph{Tasks and settings.}
Text tasks other than LiveBench are run through lm-evaluation-harness \citep{gao2024lmeval} with a vLLM backend, LiveBench (release 2024-11-25; reasoning, mathematics, data analysis and language) through its official framework, and all multimodal tasks through lmms-eval \citep{zhang2024lmmseval}, with one configuration per task for every method. Few-shot settings follow the Open LLM Leaderboard v1 convention: MMLU 5-shot, GSM8K 5-shot (exact match, flexible answer extraction), ARC-challenge 25-shot, HellaSwag 10-shot (normalized accuracy), WinoGrande 5-shot; MMLU-Pro and MMLU-ProX (English) use the harness's 5-shot generative configuration; ARC-easy, PIQA, BoolQ, TruthfulQA (MC2), GPQA-Diamond, IFEval (prompt-level strict accuracy), INCLUDE (43 languages, averaged) and LiveBench are zero-shot. Generation is greedy (temperature 0, one sample per prompt), with a limit of 256 new tokens for GSM8K, 1280 for IFEval, 2048 for MMLU-Pro and MMLU-ProX, and 4096 for LiveBench and for all multimodal tasks, whose task-level limits are raised to the harness's model-level default of 4096. Text tasks use the harness's plain completion prompts, without the chat template, for the Qwen3-VL and Llama models; this lowers absolute IFEval scores relative to chat-formatted evaluations but is the same for every method. Qwen3.6-35B-A3B, whose template enables a reasoning mode, is run with its chat template and reasoning disabled, again for every method. LiveBench and all multimodal tasks always apply the model's chat template, and images are resized to at most $401{,}408$ pixels. Multimodal tasks use the default lmms-eval metric of each benchmark. The tables below list every task behind the means of Table~\ref{tab:down}; MMLU 0-shot was also run and is omitted from the means to avoid counting MMLU twice.

\paragraph{Reading the tables.}
Rows are methods grouped by bit tier, with the bits per weight after the method name ($^{+}$: published rate plus $0.003$ bpw of sign bits); columns are tasks. Within a tier, \textsc{CurveTQ} is the factored-scale build ($x.29$ bpw) and \textsc{CurveTQ}$_{\mathrm{fp16}}$ the fp16-scale build ($x.47$ bpw), listed for reference and not ranked. Bold marks the best quantized entry per task within a tier. ``--'' marks a configuration that was not run. The weaker baselines produce near-zero scores on isolated tasks, e.g.\ HIGGS on MMLU-Pro at three bits ($0.5$) and GPTQ on several generative and long-context tasks at two bits; we report these runs as measured and do not re-run or drop them. Likewise, NF (Hessian) has an outlying perplexity on Llama-3.1-8B at four bits ($12.77$) while its downstream mean is normal. The last column is the unweighted mean over the tasks shown; for Qwen3-4B the means of Table~\ref{tab:down} are over the union of the text and multimodal tables.

\begin{table}[p]
\centering
\scriptsize
\setlength{\tabcolsep}{3.4pt}
\caption{Llama-3-8B-Instruct: per-task accuracy (\%) for every method and bit tier.}
\label{tab:full_l8}
\input{tab_full_l8}
\end{table}

\begin{table}[p]
\centering
\scriptsize
\setlength{\tabcolsep}{3.4pt}
\caption{Llama-3.1-8B-Instruct: per-task accuracy (\%) for every method and bit tier.}
\label{tab:full_l31}
\input{tab_full_l31}
\end{table}

\begin{table}[p]
\centering
\scriptsize
\setlength{\tabcolsep}{2.4pt}
\caption{Qwen3-VL-4B-Instruct, text tasks: per-task accuracy (\%) for every method and bit tier.}
\label{tab:full_q4_text}
\resizebox{\linewidth}{!}{\input{tab_full_q4_text}}
\end{table}

\begin{table}[p]
\centering
\scriptsize
\setlength{\tabcolsep}{2.4pt}
\caption{Qwen3-VL-4B-Instruct, multimodal tasks: per-task accuracy (\%) for every method and bit tier. The vision tower is kept in bf16 for all methods.}
\label{tab:full_q4_vl}
\input{tab_full_q4_vl}
\end{table}

\begin{table}[t]
\centering
\scriptsize
\setlength{\tabcolsep}{2.4pt}
\caption{Qwen3-VL-32B-Instruct at the 2- and 3-bit tiers: per-task accuracy (\%) on multimodal tasks (top) and text tasks (bottom); the same 34-task suite as Qwen3-4B.}
\label{tab:full_q32}
\input{tab_full_q32_vl}
\vspace{8pt}
\resizebox{\linewidth}{!}{\input{tab_full_q32_text}}
\end{table}

\begin{table}[t]
\centering
\scriptsize
\setlength{\tabcolsep}{2.4pt}
\caption{Qwen3.6-35B-A3B (mixture of experts) at the 2-bit tier: per-task accuracy (\%) on multimodal tasks (top) and text tasks (bottom); the mean is over 30 tasks. $^{+}$: published rate plus $0.003$ bpw of sign bits.}
\label{tab:full_q35}
\input{tab_full_q35_vl}
\vspace{8pt}
\input{tab_full_q35_text}
\end{table}

\section{Ablations: full perplexity ladder}
\label{app:ablate}

\paragraph{The approximations behind the branch metric.}
Three approximations are made. First, the block form $\sum_b\eta_b^\top D_b\eta_b$ of Lemma~\ref{lem:ldl} is exact for the residual of the block factor $U_{bb}^{-1}$; by default we propagate each block's residual through GPTQ's column factor (divided by $\mathrm{diag}(U_{bb})$, $U$ the upper Cholesky factor of $H^{-1}$), so the block form evaluated on the residual the search sees differs from the true loss by a correction of no fixed sign; in our tests the block factor gave a slightly higher weight-space error. Second, Eq.~\ref{eq:metric} keeps only the diagonal of $D_b$; the dropped in-block coupling is corrected neither by the search nor by the between-block feedback. The baselines' Euclidean search omits both parts; we keep the diagonal. Third, $1/d_i^2$ conditions on \emph{all} columns after $i$, including in-block columns that block feedback never corrects, so it under-weights early in-block columns relative to $\mathrm{diag}(D_b)$. The raw diagonal $\omega_i=H_{ii}$ is the weight of the objective \emph{without} feedback and is kept as a control. Appendix~\ref{app:ramp} compares $H_{ii}$ with $1/d_i^2$ on the 34-task suite; both Hessian-derived weights repair the few matrices on which the Euclidean search fails outright, and $1/d_i^2$ is our default.

\paragraph{The two yardsticks.}
The search ablations of Section~\ref{sec:ablate} separate perplexity from downstream accuracy. The curvature weight protects the coordinates with large $D_i$, the outlier input channels, and the two yardsticks disagree on what that protection is worth: a $0.24$ improvement in perplexity accompanies a $3.3$-point loss downstream, far outside the $0.4$-point spread we see on re-evaluating one model. The same decoupling appears in the table-free variant at two bits (Section~\ref{sec:main}), where a $0.06$ change in perplexity accompanied a $2.6$-point downstream change on Llama-3.1-8B, and it is why we report both yardsticks throughout.

\begin{table}[htbp]
\centering
\footnotesize
\setlength{\tabcolsep}{4pt}
\caption{Table~\ref{tab:abl} in full: rate deltas, and paired differences to the reference row with 95\% CIs over the 34 tasks.}
\label{tab:abl_ci}
\input{tab_abl_full_ci}
\end{table}

Table~\ref{tab:abl_ci} repeats the ablation of Table~\ref{tab:abl} with the rate deltas and the paired downstream differences with bootstrap confidence intervals; the intermediate tail-biting length ($T{=}128$) is included there. Table~\ref{tab:abl_full} gives the component ladder of Table~\ref{tab:abl} on all three dense models and at all three rates, on WikiText-2 perplexity (context 4096). All rows are single calibration-slice builds. The fp16-scale build, at $0.18$ bpw more, is not uniformly better than the factored int4 build: it is $0.36$ worse on Qwen3-4B at two bits ($11.86$ against $11.50$) and within $\pm0.13$ of it, in either direction, on the other eight cells. The removal row also splits the last step of the ladder, which adds the per-module table and the factored field together: of its $1.27$ on Qwen3-4B, the factored field over fp16 scales accounts for $0.36$ and the per-module table for the rest. The seed noise of the 256-entry EM base, measured on Qwen3-4B with $n=5$ seeds, is $0.19$ at two bits, $0.09$ at three and $0.04$ at four; differences below about twice these values should be read as noise. Two regularities hold across models and rates. The edge-indexed row widens the emission window from the 14-bit state of the row above to 16 bits at the same $2^{14}$ survivors; it is not a difference from QTIP's node-emitting form, which emits from the same 16-bit window (Section~\ref{sec:codec}). Every component's effect shrinks with rate: the edge-indexed emission is worth $1.27$ perplexity on Qwen3-4B at two bits, $0.17$ at three and $0.01$ at four, and the emission table $2.13$, $0.06$ and $0.11$. And the effects are larger on Qwen3-4B than on the two Llama models, whose two-bit steps are all within $0.3$ perplexity of zero; the bit savings of the rate-buying steps are the same on all models. The search ablations were run at two bits only.

\begin{table}[htbp]
\centering
\footnotesize
\setlength{\tabcolsep}{4pt}
\caption{WikiText-2 perplexity (context 4096) of every configuration of Table~\ref{tab:abl}, three models $\times$ three rates. bpw as in Table~\ref{tab:abl} (the $x$ in $x.469$ etc.\ is the code rate). fp16: Qwen3-4B $9.72$, Llama-3-8B $7.68$, Llama-3.1-8B $6.67$.}
\label{tab:abl_full}
\begin{tabular}{@{}l l rrr rrr rrr@{}}
\toprule
& & \multicolumn{3}{c}{Qwen3-4B} & \multicolumn{3}{c}{Llama-3-8B} & \multicolumn{3}{c}{Llama-3.1-8B} \\
\cmidrule(lr){3-5}\cmidrule(lr){6-8}\cmidrule(lr){9-11}
Configuration & bpw & 2-bit & 3-bit & 4-bit & 2-bit & 3-bit & 4-bit & 2-bit & 3-bit & 4-bit \\
\midrule
128-entry table, EM fit & $x$.469 & 13.32 & 10.46 & 9.81 & 10.10 & 8.17 & 7.80 & 9.35 & 7.09 & 6.79 \\
+ quantile fit & $x$.469 & 13.05 & 10.24 & 9.80 & 10.12 & 8.12 & 7.73 & 9.15 & 7.12 & 6.77 \\
+ int8 scales & $x$.345 & 13.46 & 10.15 & 9.78 & 10.14 & 8.14 & 7.78 & 9.16 & 7.16 & 6.78 \\
+ edge emission & $x$.345 & 12.19 & 10.31 & 9.79 & 9.93 & 8.03 & 7.82 & 9.18 & 7.09 & 6.76 \\
+ int4 scales (flat) & $x$.282 & 12.77 & 10.37 & 9.95 & 9.78 & 8.11 & 7.78 & 9.12 & 7.12 & 6.77 \\
\rowcolor{oursbg}+ factored int4 scales (\textsc{CurveTQ}) & $x$.289 & 11.50 & 10.28 & 9.68 & 9.91 & 8.16 & 7.75 & 9.16 & 7.09 & 6.77 \\
\midrule
\textsc{CurveTQ} $-$ factored scales (fp16) & $x$.469 & 11.86 & 10.18 & 9.83 & 9.85 & 8.06 & 7.76 & 9.29 & 7.08 & 6.75 \\
\textsc{CurveTQ} $-$ table & $x$.289 & 13.63 & 10.34 & 9.79 & 10.01 & 8.06 & 7.78 & 9.23 & 7.11 & 6.79 \\
\midrule
quantile fit $-$ error feedback & $x$.469 & 13.38 & -- & -- & -- & -- & -- & -- & -- & -- \\
quantile fit, Euclidean metric & $x$.469 & 12.81 & -- & -- & -- & -- & -- & -- & -- & -- \\
\bottomrule
\end{tabular}
\end{table}

\paragraph{Ablating the block termination: the full comparison.}
The last block of Table~\ref{tab:abl} replaces our block-local trellis by the two constructions that store no per-block state, at an otherwise identical recipe. A row-long \emph{chained} trellis, in which each block starts in the state the previous block ended in, saves $0.216$ bpw and costs $2.09$ perplexity and $-5.5$ [$-7.0$, $-4.2$] downstream points; a per-block \emph{tail-biting} trellis, QTIP's construction applied to the 64-weight block, saves $0.220$ bpw and costs $4.10$ perplexity and $-5.1$ [$-6.9$, $-3.5$] points, although its weight-space error is the lower of the two. The chained variant loses on all 34 tasks and the tail-biting one on 32, most on the generative ones (GSM8K $-19$ and $-26$, IFEval $-14$ and $-12$). Read against Table~\ref{tab:main}, the chained variant at $2.07$ bpw still edges QTIP and Proteus with per-group scales on perplexity ($13.59$ against $13.72$ and $13.99$ at $2.25^{+}$) but not downstream ($46.4$ against $47.9$ and $47.3$), and the tail-biting variant trails on both ($15.60$ against QTIP's $15.36$ at $2.00^{+}$). Lengthening the tail-biting sequence by stacking rows, at the same $2.07$ bpw, recovers $0.98$ and $2.22$ of the $4.10$ perplexity at $T=128$ and $T=256$; the longer one passes the chained variant ($13.38$ against $13.59$) but still trails the stored start state by $1.84$ (Appendix~\ref{app:rate}). Downstream the recovery is smaller and slower than perplexity suggests: the first doubling returns nothing measurable ($+0.1$ [$-1.1$, $+1.2$] over the 64-length block for $0.98$ perplexity) and the second returns $+0.8$ [$-0.8$, $+2.2$], so at QTIP's own tile length the construction is still $-4.2$ [$-6.4$, $-2.5$] points behind the stored start state and loses on 31 of 34 tasks. For comparison, the factored scale field saves $0.18$ bpw at a cost of $0.4$ [$-0.3$, $+1.1$] downstream points two rows above: the start state is the side channel on which a bit buys the most, which is why we keep it and charge it.

\paragraph{How much of the margin is the extra rate?} The $T{=}256$ variant also bounds how much of our margin over the baselines is bought with rate rather than designed, without any assumption about how perplexity trades against bits. It is our recipe with QTIP's termination and QTIP's tile length, storing no start bits and carrying a $14$-bit state against QTIP's $16$, and it sits at $2.0685$ bpw, $0.066$ above the baselines' published $2.003$ and $0.219$ below ours. At that rate it reaches $13.38$ perplexity against QTIP's $15.36$ and Proteus's $14.21$. Our perplexity margin over QTIP on this model therefore splits into $1.98$ from the weighted search, the quantile table and the factored scales, spent at $0.066$ bpw, and $1.84$ from the block-local start state, spent at $0.219$ bpw (both halves measured against the same rebuild; $0.04$ of the $3.86$ in Table~\ref{tab:main} is the rebuild's GPU difference). The two halves are comparable, so on perplexity the stored state is not what creates the gap---it is what doubles it. Downstream the split is one-sided in the other direction: at $47.7$ points the variant is $+0.8$ [$-0.9$, $+2.2$] over QTIP and $+0.4$ [$-1.4$, $+2.0$] over Proteus with per-group scales, neither measurable, while the start state is worth $+4.2$ [$+2.5$, $+6.4$]. Without a stored start state the other components recover perplexity at this rate but not accuracy; with one they deliver both. We report the variant as a control, not a configuration to use (it is $1.84$ perplexity and $4.2$ points worse than the build we ship), and it is run on Qwen3-4B only.

\paragraph{The start state transfers.} The $_{+\mathrm{start}}$ rows of Tables~\ref{tab:main}, \ref{tab:ppl} and~\ref{tab:down} are QTIP from its released code and Proteus from our implementation, each with one change: every 256-weight tile's trellis is split into four 64-weight segments, each coded by one Viterbi pass whose start and end states are free, i.e.\ our block-local construction on their trellis and code. Nothing else differs from the published baseline (same rotation, LDLQ, scale and code). At two bits the state is worth $0.7$--$1.4$ perplexity and $+1.0$ to $+3.3$ downstream points to the baselines on the three Instruct models, more than the 64-group scale buys them. The gain shrinks with rate: at three bits it is $-0.2$ to $+0.7$ points and at four bits within $\pm0.5$. Against these strengthened baselines our two-bit margin is $+0.9$ to $+3.1$ points (five of six contrasts exclude zero; Table~\ref{tab:ci}), $+1.2$ where Proteus$_{+\mathrm{start}}$ has the lower perplexity (Llama-3-8B, $9.83$ against $9.91$). As in Appendix~\ref{app:ci}, the margin is largest on generative tasks (GSM8K, IFEval and the LiveBench categories: $+3.6$ to $+8.4$ points on average) but does not depend on them: over the multiple-choice tasks alone it is $+1.5$, $+1.2$ and $+2.2$ against QTIP$_{+\mathrm{start}}$ on the three models, all excluding zero, and $+0.2$, $+0.8$ and $+0.8$ against Proteus$_{+\mathrm{start}}$, within noise; at three and four bits the twelve contrasts lie within $\pm0.7$ and two are marginally significant. On Llama-2-7B and Qwen3-32B the start state alone brings the baselines level with us ($62.9$--$63.1$, $65.2$--$66.4$ and $66.1$--$66.9$ points on Llama-2-7B at two, three and four bits; $66.0$--$66.1$ on Qwen3-32B at two bits, from $65.0$--$65.8$ without it).

\section{Implementation and throughput measurements}
\label{app:impl}

\begin{table}[htbp]
\centering
\footnotesize
\setlength{\tabcolsep}{3.5pt}
\caption{Decode throughput (tokens/s) on one RTX~3090 at batch 1, 64 and 128 and at 2, 3 and 4 bits. Llama-2-7B QTIP at batch 1 and fp16 are the published numbers \citep{tseng2024qtip}, which our harness reproduces (fp16: $52.25$); QTIP publishes batch 1 only, so its batch 64/128 rows, Qwen3-4B fp16 and QTIP, and all Proteus rows are our measurements. ``$+$rot.'': our decoder with QTIP's activation-side Hadamard added; ``$-$start'': without the stored start state. Bold: fastest of the three codecs.}
\label{tab:tput_full}
\input{tab_tput_full}
\end{table}

\paragraph{Setup.}
All kernels are measured on one RTX~3090 in the benchmark setting of QTIP \citep{tseng2024qtip}: end-to-end decode steps at batch 1, 64 and 128 and at 2, 3 and 4 bits, captured in a CUDA graph; the reported figure is the median over steps. Our harness reproduces QTIP's published fp16 throughput on Llama-2-7B ($52.25$ against their $52.5$ tokens/s), so for that model we quote QTIP's published batch-1 numbers at 2, 3 and 4 bits rather than re-measure them. QTIP reports batch~1 only, so its batch 64/128 rows, and all of its Qwen3-4B rows (a model QTIP does not report), are our runs of its released kernels. Proteus has no public kernel, so all of its rows use a kernel written from its description, and its published throughput is measured on an H800, which we do not have. Its rows here should therefore be read as what that description yields in our harness on this GPU, not as a reproduction of Proteus's published results; the anchored comparison in this table is the one against QTIP, whose released kernels we run and whose published fp16 figure our harness reproduces. QTIP runs its released HYB code ($L=16$, $V=2$), the code of its published results and kernels, for accuracy and throughput alike. QTIP and Proteus include the Hadamard transform on the activations that incoherence processing requires at inference; our decoder does not need one. Our decoder is a fused dequantize-and-multiply kernel: one warp per output row loads the block's start state, walks the trellis, reconstructs each weight by one table read from the emission window (in the table-free variant, the $(14{+}k)$-bit window is mixed by one multiply and one xor-shift and the four bytes of the result are summed, centred and divided by the exact standard deviation $\sqrt{4(256^2-1)/12}$, giving a unit-variance code value; one global scale is then chosen by a nine-point line search on 16 calibration rows, so the stored table of $0.0001$ bpw disappears), applies the group scale reconstructed from the factored field, and accumulates against the activation. The ``$+$rot.'' arm of Table~\ref{tab:tput_full} runs the same decoder with an activation-side Hadamard inserted before each quantized layer, following QTIP's transform, so that its cost measures what incoherence processing adds to a decoder that would otherwise not need it; the ``$-$start'' arm runs it with every block started from a fixed state, so that its gain measures the decode-time cost of the stored start state. Both arms are decode-only measurements: the resulting models are not the ones evaluated for accuracy.

\section{Rate accounting and stream audit}
\label{app:rate}

\paragraph{Rate accounting.}
Every bits-per-weight figure in this paper counts the $k$ code bits, all scale side information, the emission table and the $L$-bit start state of every block; for QTIP and Proteus we add their reported $0.003$ bpw of sign bits, marked $^{+}$. The per-tensor arithmetic and a byte-level audit of the serialized stream follow.

\paragraph{Why the start state and not a tail-biting block.}
QTIP instead codes 256-weight tiles as \emph{tail-biting} sequences, whose first state is a function of the tile's own last bits, so no start state is stored and the rate is exactly $k$ \citep{tseng2024qtip}. The construction is right for its setting: a long tile over a rotated, near-i.i.d.\ source, coded without weights, where the tile boundary falls on nothing in particular and the constraint removes only $L/256$ of the path freedom. Under error feedback it has a cost that the i.i.d.\ setting hides. The block is the 64-column feedback block, so the constraint removes $L/64$ of the freedom; and the block is not stationary: its target $\tilde z$ carries the residual pushed forward from the previous block, concentrated on the first coordinates, and $\omega_i$ varies across it. Tail-biting ties the first $\lceil L/k\rceil$ coordinates of a block to its last ones, so the coordinates that carry the inherited residual are the ones the search may not choose freely, and their error is passed on again. We measure this directly: with a tail-biting block the reconstruction error at the first block position is $2.6$--$2.9\times$ the within-block mean, and lengthening the sequence by stacking rows improves it monotonically but has not reached the stored-state result by QTIP's own tile length ($15.60 \to 14.62 \to 13.38$ at $T=64,128,256$, against $11.54$; Table~\ref{tab:auditfull}): block length is part of the problem, but where the constraint falls matters as much as how much freedom it removes. The freedom is spent where the feedback puts the difficulty: on the built models the error at the first block position is $0.55\times$ the within-block mean and recovers to the mean by position eight.

\paragraph{What is counted.}
For the $_{+\mathrm{start}}$ baselines the stored stream of a 64-weight segment is the $L$-bit start state followed by the code bits; a node-emitting trellis emits from the state itself, so its last $KV$ code bits are never emitted and the segment costs $L-KV$ bits over tail-biting ($12$ per block for QTIP, $V=2$; $14$ for Proteus, $V=1$; $0.19$ and $0.22$ bpw at two bits), whereas our edge-emitting trellis uses every code bit and is charged the full $L=14$.
The reported rate splits into a payload term and the side information the decoder needs, $R=R_{\mathrm{payload}}+R_{\mathrm{scale}}+R_{\mathrm{state}}+R_{\mathrm{table}}$, with the boundary adaptation rate $R_{\mathrm{state}}=L/G$ (Section~\ref{sec:search}). Term by term: for a tensor of $R$ rows and $C$ columns coded at $k$ bits with blocks of $G=64$, the reported rate is the sum of: $k$ bits per weight for the trellis path; $L/G=14/64$ bits per weight for the $L$-bit start state of every (row, block) pair (Section~\ref{sec:search}); $4/G$ bits for the int4 residual of the factored scale field, $16/(G\cdot 256)$ for its fp16 block absmax, $16/C$ for the fp16 row factor and $16\lceil C/G\rceil/(RC)$ for the fp16 column factor; and $128\cdot 8+32$ bits per tensor for the emission table. Both levels of the scale field are rounded up against the stored fp16 factors, so $\hat s\ge s$ holds exactly for every group; the builder checks that no normalized weight leaves $[-1,1]$. The fp16-scale variant replaces the four scale terms by $16/G$. Summed over the coded tensors of each model this gives the ``Reported'' column of Table~\ref{tab:rate}; the start-state term is $0.219$ bpw for every model, the rest varies with tensor shapes in the fourth decimal. Embeddings, output head, norms, router and vision tower stay in bf16 and are outside the reported rate, which covers the coded linear layers. For reference the table also lists the rate a row-long trellis with one stored state per row would have; that construction is evaluated in Section~\ref{sec:ablate}. The start state is stored in full: a variant that keeps the top $L-b$ bits of the previous block's end state and spends only the low $b=8$ freely, at $0.125$ instead of $0.219$ bpw, saves $0.094$ bpw and costs $0.51$, $0.43$ and $0.44$ perplexity at two bits on Qwen3-4B, Llama-3-8B and Llama-3.1-8B respectively. Every reported build stores all $L$ bits.

\begin{table}[htbp]
\centering\footnotesize
\caption{Stored rate of every built model. ``Code, scales, table'' excludes start states; ``one state per row'' is the hypothetical row-long trellis of Section~\ref{sec:ablate}; ``one state per block'' is our coder and the rate reported throughout.}
\label{tab:rate}
\input{tab_rate}
\end{table}

\paragraph{Stream audit.}
To check the arithmetic against bytes, we instrumented the builder to serialize, for every tensor, exactly what a decoder needs: the $k$-bit symbol stream, the int4 residual codes packed two per byte, the fp16 block absmax, row and column factors, the 128 int8 knots with their fp32 step, and the start states. A separate routine reads those bytes back, regenerates the emission index through the mixer of Section~\ref{sec:codec}, walks the trellis and rescales, and compares the result with the tensor the coder wrote into the checkpoint. Table~\ref{tab:stream} shows the first decoder layer of Qwen3-4B at $k=2$: every component matches its formula to the byte, the full model is recovered bit-exactly from the stream, and the two decoding rows show that the per-block state is what a decoder needs: given one state per row instead, $10.2\%$ of the weights differ. The same check was run on the whole model and on the chained and tail-biting variants of Section~\ref{sec:ablate} (Table~\ref{tab:auditfull}); all decode bit-exactly at their stated rates. Two further variants lengthen the tail-biting sequence by coding two and four consecutive rows of one 64-column block as a single sequence, leaving feedback and scales per row; the second is QTIP's length of 256. Both decode bit-exactly with zero stored start bits, their first and last states agree in all 252 modules, and both pack to the same $2.0685$ bpw as the 64-length block. Perplexity falls monotonically with sequence length ($15.60$, $14.62$, $13.38$ at $T=64$, $128$, $256$), so roughly half of the tail-biting penalty is the shortness of the sequence. The remainder is not: at $T=256$ (QTIP's own tile length, and the length at which its rate argument is stated) the construction still trails a stored start state by $1.84$ ppl. The two doublings return $-0.98$ and $-1.24$, so we cannot claim the curve has saturated, only that it has not reached the stored-state result by the length the comparison method uses. With a 64-length tail-biting block the error at the first position of a block is $2.6$--$2.9\times$ the within-block mean (above); lengthening the sequence dilutes that boundary term across more coordinates without removing it. Downstream the ladder does not halve: the two doublings return $+0.1$ [$-1.1$, $+1.2$] and $+0.8$ [$-0.8$, $+2.2$] suite-mean points, neither measurable, against the start state's $+4.2$ [$+2.5$, $+6.4$]; at $T=256$ the construction is still $-4.2$ points behind and loses on 31 of 34 tasks. The perplexity recovery is therefore not evidence that sequence length can substitute for the stored state, only that perplexity is the less demanding of the two measures here (Section~\ref{sec:ablate}).

\begin{table}[htbp]
\centering\footnotesize
\caption{Serialized stream of the first decoder layer of Qwen3-4B at $k=2$, and whether the checkpoint tensors are recovered from the stream alone.}
\label{tab:stream}
\input{tab_stream}
\end{table}

\begin{table}[htbp]
\centering\footnotesize
\caption{Qwen3-4B at $k=2$ with the block terminations of Section~\ref{sec:ablate} plus a $T{=}256$ tail-biting variant, same recipe otherwise. ``Packed'' is the rate a decoder needs, verified from the serialized stream. The rebuild of our coder was made on a different GPU type from the checkpoint of Table~\ref{tab:main} and lands $0.04$ perplexity from it ($11.54$ against $11.50$); same-GPU rebuilds are bit-identical, and the checkpoints we release are those of Table~\ref{tab:main}.}
\label{tab:auditfull}
\input{tab_audit_full}
\end{table}

\end{document}

%% file: math_commands.tex
\usepackage{amsmath,amsfonts,bm}

\def\eqref#1{equation~\ref{#1}}

\def\1{\bm{1}}

\DeclareMathAlphabet{\mathsfit}{\encodingdefault}{\sfdefault}{m}{sl}
\SetMathAlphabet{\mathsfit}{bold}{\encodingdefault}{\sfdefault}{bx}{n}



%% file: tab_main.tex
\begin{tabular}{@{}l@{\hspace{4pt}}c@{\hspace{6pt}}rrrrr@{\hspace{8pt}}crrrr@{}}
\toprule
& & \multicolumn{5}{c}{WikiText-2 perplexity $\downarrow$} & \multicolumn{5}{c}{Downstream accuracy (\%) $\uparrow$} \\
\cmidrule(lr){3-7}\cmidrule(lr){8-12}
Method & bpw & L2-7B & Q3-4B & L3-8B & L3.1-8B & Q3-32B & L2-7B$^{\ast}$ & Q3-4B & L3-8B & L3.1-8B & Q3-32B \\
\midrule
fp16 & 16 & 5.21 & 9.72 & 7.68 & 6.67 & 6.17 & 66.6 (43.1/76.3/77.7/69.2) & 62.0 & 61.8 & 63.2 & 70.2 \\
\midrule
QTIP & 2.00$^{+}$ & 6.86 & 15.36 & 11.53 & 10.31 & 7.21 & 61.7 (36.0/70.0/75.0/65.7) & 46.8 & 48.8 & 49.8 & 65.4 \\
Proteus & 2.00$^{+}$ & 6.73 & 14.21 & 10.99 & 10.02 & 7.09 & 61.8 (35.5/70.0/74.5/67.0) & 48.0 & 48.8 & 51.5 & 65.0 \\
\rowcolor{oursbg}\textbf{CurveTQ} & 2.29 & 6.44 & \textbf{11.50} & 9.91 & \textbf{9.16} & 6.90 & 62.9 (38.6/71.6/74.3/67.0) & \textbf{51.8} & \textbf{53.2} & \textbf{54.0} & \textbf{66.3} \\
QTIP$_{+\mathrm{g64}}$ & 2.25$^{+}$ & 6.72 & 13.72 & 11.19 & 10.03 & 7.07 & 61.6 (35.8/68.9/74.6/67.1) & 47.9 & 48.4 & 48.8 & 65.0 \\
Proteus$_{+\mathrm{g64}}$ & 2.25$^{+}$ & 6.70 & 13.99 & 10.81 & 10.05 & 6.97 & 61.6 (36.0/68.7/74.9/66.6) & 47.3 & 48.3 & 51.0 & 65.8 \\
QTIP$_{+\mathrm{start}}$ & 2.19$^{+}$ & 6.49 & 13.99 & 10.22 & 9.42 & 6.93 & \textbf{63.1} (36.7/72.0/75.8/67.9) & 49.8 & 51.1 & 50.9 & 66.1 \\
Proteus$_{+\mathrm{start}}$ & 2.22$^{+}$ & \textbf{6.31} & 13.15 & \textbf{9.83} & 9.30 & \textbf{6.88} & 63.0 (37.3/70.8/75.7/68.2) & 51.0 & 52.1 & 52.5 & 66.1 \\
HIGGS & 2.25 & 10.95 & 21.23 & 19.09 & 18.40 & -- & 55.8 (28.9/60.7/70.1/63.6) & 36.8 & 40.1 & 39.8 & -- \\
GPTQ (g64) & 2.25 & 1567 & 40.75 & 536 & 530 & -- & 37.3 (21.5/26.1/52.4/49.3) & 17.1 & $\times$ & $\times$ & -- \\
\midrule
QTIP & 3.00$^{+}$ & 5.50 & 10.68 & 8.30 & 7.32 & 6.38 & 65.7 (42.1/75.1/77.0/68.4) & 58.9 & 58.6 & 60.2 & 69.2 \\
Proteus & 3.00$^{+}$ & 5.47 & 10.73 & 8.15 & 7.23 & 6.37 & 65.6 (42.4/75.2/76.6/68.3) & 59.0 & 59.2 & 60.0 & 69.3 \\
\rowcolor{oursbg}\textbf{CurveTQ} & 3.29 & 5.42 & \textbf{10.28} & 8.16 & \textbf{7.09} & 6.32 & 65.9 (42.1/75.8/77.5/68.1) & \textbf{60.0} & \textbf{59.9} & 60.9 & \textbf{69.7} \\
QTIP$_{+\mathrm{start}}$ & 3.16$^{+}$ & 5.48 & 10.62 & \textbf{8.12} & 7.27 & 6.37 & 65.1 (41.1/74.5/77.7/67.2) & 59.6 & 59.2 & \textbf{61.2} & 69.3 \\
Proteus$_{+\mathrm{start}}$ & 3.20$^{+}$ & \textbf{5.41} & 10.43 & 8.16 & 7.12 & \textbf{6.31} & \textbf{66.4} (41.9/76.1/77.3/70.3) & \textbf{60.0} & 59.7 & \textbf{61.2} & 69.4 \\
\bottomrule
\end{tabular}

%% file: tab_abl.tex
\begin{tabular}{@{}l r r r@{}}
\toprule
Configuration & bpw & ppl & acc. \\
\midrule
\multicolumn{4}{@{}l}{\emph{Building up the codec (one component per row)}} \\
128-entry table, EM fit (start) & 2.469 & 13.32 & 50.4 \\
+ closed-form quantile fit & 2.469 & 13.05 & 50.8 \\
+ int8 second-level scales & 2.345 & 13.46 & 50.9 \\
+ edge-indexed emission & 2.345 & 12.19 & 51.9 \\
+ int4 scales (flat grid) & 2.282 & 12.77 & 51.0 \\
\rowcolor{oursbg}\textbf{+ factored int4 scales, per-module table} & \textbf{2.289} & 11.50 & 51.8 \\
\midrule
\multicolumn{4}{@{}l}{\emph{Removing one component from \textsc{CurveTQ}}} \\
$-$ factored scales (fp16 scales) & 2.469 & 11.86 & 52.2 \\
$-$ emission table (computed code) & 2.289 & 13.63 & 50.7 \\
\midrule
\multicolumn{4}{@{}l}{\emph{Ablating the search (from the quantile-fit row)}} \\
$-$ error feedback & 2.469 & 13.38 & 48.2 \\
Euclidean metric instead of $1/d_i^2$ & 2.469 & 12.81 & 47.4 \\
\midrule
\multicolumn{4}{@{}l}{\emph{Block termination (from \textsc{CurveTQ})}} \\
chained trellis, one state per row & 2.073 & 13.59 & 46.4 \\
tail-biting block, no start state & 2.069 & 15.60 & 46.8 \\
\quad same, $T{=}256$ (QTIP's tile) & 2.069 & 13.38 & 47.7 \\
\bottomrule
\end{tabular}

%% file: tab_tput.tex
\begin{tabular}{@{}l c r r r >{\columncolor{oursbg}}r@{}}
\toprule
Model & bits & fp16 & QTIP & Prot. & \textsc{CTQ} \\
\midrule
L2-7B & 2 & 52.5 & 127.0 & 105.9 & \textbf{139.0} \\
 & 3 & & 119.0 & 103.5 & \textbf{125.7} \\
 & 4 & & 109.0 & 102.6 & \textbf{114.9} \\
\midrule
Q3-4B & 2 & 74.3 & 153.6 & 117.9 & \textbf{160.8} \\
 & 3 & & 144.2 & 115.4 & \textbf{151.7} \\
 & 4 & & 137.2 & 113.2 & \textbf{146.2} \\
\bottomrule
\end{tabular}

%% file: tab_ppl.tex
\begin{tabular}{@{}l c c rrrr@{}}
\toprule
Method & bpw & Rot. & Qwen3-4B & Llama-3-8B & Llama-3.1-8B & Qwen3-32B \\
\midrule
fp16 & 16 & none & 9.72 & 7.68 & 6.67 & 6.17 \\
\midrule
QTIP & 2.00$^{+}$ & RHT & 15.36 & 11.53 & 10.31 & 7.21 \\
Proteus & 2.00$^{+}$ & RHT & 14.21 & 10.99 & 10.02 & 7.09 \\
\textbf{CurveTQ} & 2.29 & none & \textbf{11.50} & 9.91 & 9.16 & 6.90 \\
\textbf{CurveTQ}$_{\mathrm{no\,LUT}}$ & 2.29 & none & 13.63 & 10.01 & 9.23 & -- \\
QTIP$_{+\mathrm{g64}}$ & 2.25$^{+}$ & RHT & 13.72 & 11.19 & 10.03 & 7.07 \\
Proteus$_{+\mathrm{g64}}$ & 2.25$^{+}$ & RHT & 13.99 & 10.81 & 10.05 & 6.97 \\
QTIP$_{+\mathrm{start}}$ & 2.19$^{+}$ & RHT & 13.99 & 10.22 & 9.42 & 6.93 \\
Proteus$_{+\mathrm{start}}$ & 2.22$^{+}$ & RHT & 13.15 & 9.83 & 9.30 & \textbf{6.88} \\
QTIP$_{+\mathrm{start+g128}}$ & 2.31$^{+}$ & RHT & 14.31 & 10.17 & 9.35 & 6.99 \\
Proteus$_{+\mathrm{start+g128}}$ & 2.34$^{+}$ & RHT & 13.02 & \textbf{9.82} & \textbf{9.03} & 6.89 \\
HIGGS & 2.25 & Had. & 21.23 & 19.09 & 18.40 & -- \\
GPTQ (g64) & 2.25 & none & 40.75 & 536 & 530 & -- \\
\textbf{CurveTQ}$_{\mathrm{fp16}}$ & 2.47 & none & 11.86 & 9.85 & 9.29 & -- \\
\midrule
QTIP & 3.00$^{+}$ & RHT & 10.68 & 8.30 & 7.32 & 6.38 \\
Proteus & 3.00$^{+}$ & RHT & 10.73 & 8.15 & 7.23 & 6.37 \\
\textbf{CurveTQ} & 3.29 & none & \textbf{10.28} & 8.16 & \textbf{7.09} & 6.32 \\
\textbf{CurveTQ}$_{\mathrm{no\,LUT}}$ & 3.29 & none & 10.34 & \textbf{8.06} & 7.11 & -- \\
QTIP$_{+\mathrm{g64}}$ & 3.25$^{+}$ & RHT & 10.51 & 8.30 & 7.28 & 6.36 \\
Proteus$_{+\mathrm{g64}}$ & 3.25$^{+}$ & RHT & 10.75 & 8.20 & 7.23 & 6.35 \\
QTIP$_{+\mathrm{start}}$ & 3.16$^{+}$ & RHT & 10.62 & 8.12 & 7.27 & 6.37 \\
Proteus$_{+\mathrm{start}}$ & 3.20$^{+}$ & RHT & 10.43 & 8.16 & 7.12 & \textbf{6.31} \\
QTIP$_{+\mathrm{start+g128}}$ & 3.31$^{+}$ & RHT & 10.29 & 8.17 & 7.21 & -- \\
Proteus$_{+\mathrm{start+g128}}$ & 3.34$^{+}$ & RHT & 10.56 & 8.14 & 7.22 & -- \\
HIGGS & 3.25 & Had. & 11.06 & 8.57 & 7.79 & -- \\
NF (Hessian) & 3.25 & Had. & 11.03 & 234 & 108 & -- \\
GPTQ (g64) & 3.25 & none & 11.85 & 9.72 & 8.93 & -- \\
\textbf{CurveTQ}$_{\mathrm{fp16}}$ & 3.47 & none & 10.18 & 8.06 & 7.08 & -- \\
\midrule
QTIP & 4.00$^{+}$ & RHT & 9.99 & 7.81 & 6.86 & -- \\
Proteus & 4.00$^{+}$ & RHT & 9.83 & 7.77 & 6.80 & -- \\
\textbf{CurveTQ} & 4.29 & none & \textbf{9.68} & \textbf{7.75} & \textbf{6.77} & -- \\
\textbf{CurveTQ}$_{\mathrm{no\,LUT}}$ & 4.29 & none & 9.79 & 7.78 & 6.79 & -- \\
QTIP$_{+\mathrm{g64}}$ & 4.25$^{+}$ & RHT & 10.02 & 7.80 & 6.84 & -- \\
Proteus$_{+\mathrm{g64}}$ & 4.25$^{+}$ & RHT & 9.88 & 7.82 & 6.80 & -- \\
QTIP$_{+\mathrm{start}}$ & 4.13$^{+}$ & RHT & 10.11 & 7.82 & 6.83 & -- \\
Proteus$_{+\mathrm{start}}$ & 4.19$^{+}$ & RHT & 9.92 & 7.77 & 6.80 & -- \\
QTIP$_{+\mathrm{start+g128}}$ & 4.31$^{+}$ & RHT & 9.79 & 7.81 & 6.82 & -- \\
Proteus$_{+\mathrm{start+g128}}$ & 4.34$^{+}$ & RHT & 9.99 & 7.76 & \textbf{6.77} & -- \\
HIGGS & 4.25 & Had. & 10.01 & 7.87 & 6.88 & -- \\
NF (Hessian) & 4.25 & Had. & 9.79 & 7.96 & 12.77 & -- \\
GPTQ (g64) & 4.25 & none & 10.33 & 8.15 & 7.18 & -- \\
\textbf{CurveTQ}$_{\mathrm{fp16}}$ & 4.47 & none & 9.83 & 7.76 & 6.75 & -- \\
\bottomrule
\end{tabular}

%% file: tab_down.tex
\begin{tabular}{@{}l c rrrr@{}}
\toprule
Method & bpw & Qwen3-4B & Llama-3-8B & Llama-3.1-8B & Qwen3-32B \\
\midrule
fp16 & 16 & 62.0 & 61.8 & 63.2 & 70.2 \\
\midrule
QTIP & 2.00$^{+}$ & 46.8 & 48.8 & 49.8 & 65.4 \\
Proteus & 2.00$^{+}$ & 48.0 & 48.8 & 51.5 & 65.0 \\
\textbf{CurveTQ} & 2.29 & \textbf{51.8} & \textbf{53.2} & \textbf{54.0} & \textbf{66.3} \\
\textbf{CurveTQ}$_{\mathrm{no\,LUT}}$ & 2.29 & 50.7 & 52.7 & 51.4 & -- \\
QTIP$_{+\mathrm{g64}}$ & 2.25$^{+}$ & 47.9 & 48.4 & 48.8 & 65.0 \\
Proteus$_{+\mathrm{g64}}$ & 2.25$^{+}$ & 47.3 & 48.3 & 51.0 & 65.8 \\
QTIP$_{+\mathrm{start}}$ & 2.19$^{+}$ & 49.8 & 51.1 & 50.9 & 66.1 \\
Proteus$_{+\mathrm{start}}$ & 2.22$^{+}$ & 51.0 & 52.1 & 52.5 & 66.1 \\
QTIP$_{+\mathrm{start+g128}}$ & 2.31$^{+}$ & 49.1 & 50.5 & 51.3 & -- \\
Proteus$_{+\mathrm{start+g128}}$ & 2.34$^{+}$ & 50.6 & 52.9 & 53.0 & -- \\
HIGGS & 2.25 & 36.8 & 40.1 & 39.8 & -- \\
GPTQ (g64) & 2.25 & 17.1 & $\times$ & $\times$ & -- \\
\textbf{CurveTQ}$_{\mathrm{fp16}}$ & 2.47 & 52.2 & 52.5 & 53.5 & -- \\
\midrule
QTIP & 3.00$^{+}$ & 58.9 & 58.6 & 60.2 & 69.2 \\
Proteus & 3.00$^{+}$ & 59.0 & 59.2 & 60.0 & 69.3 \\
\textbf{CurveTQ} & 3.29 & 60.0 & \textbf{59.9} & 60.9 & \textbf{69.7} \\
\textbf{CurveTQ}$_{\mathrm{no\,LUT}}$ & 3.29 & \textbf{60.2} & 59.4 & \textbf{61.7} & -- \\
QTIP$_{+\mathrm{g64}}$ & 3.25$^{+}$ & 58.8 & 59.5 & 60.9 & 69.4 \\
Proteus$_{+\mathrm{g64}}$ & 3.25$^{+}$ & 59.8 & 59.1 & 61.3 & 69.4 \\
QTIP$_{+\mathrm{start}}$ & 3.16$^{+}$ & 59.6 & 59.2 & 61.2 & 69.3 \\
Proteus$_{+\mathrm{start}}$ & 3.20$^{+}$ & 60.0 & 59.7 & 61.2 & 69.4 \\
QTIP$_{+\mathrm{start+g128}}$ & 3.31$^{+}$ & 59.3 & 58.8 & 61.3 & -- \\
Proteus$_{+\mathrm{start+g128}}$ & 3.34$^{+}$ & 59.6 & \textbf{59.9} & 60.6 & -- \\
HIGGS & 3.25 & 55.2 & 57.6 & 58.7 & -- \\
NF (Hessian) & 3.25 & 55.2 & -- & -- & -- \\
GPTQ (g64) & 3.25 & 50.8 & 52.3 & 52.0 & -- \\
\textbf{CurveTQ}$_{\mathrm{fp16}}$ & 3.47 & 60.2 & 60.1 & 61.5 & -- \\
\midrule
QTIP & 4.00$^{+}$ & 61.5 & 61.1 & 62.1 & -- \\
Proteus & 4.00$^{+}$ & 61.7 & 61.2 & 62.7 & -- \\
\textbf{CurveTQ} & 4.29 & 61.8 & \textbf{61.5} & \textbf{63.0} & -- \\
\textbf{CurveTQ}$_{\mathrm{no\,LUT}}$ & 4.29 & 61.6 & 61.2 & 62.6 & -- \\
QTIP$_{+\mathrm{g64}}$ & 4.25$^{+}$ & 61.6 & 60.9 & 62.7 & -- \\
Proteus$_{+\mathrm{g64}}$ & 4.25$^{+}$ & \textbf{61.9} & 61.0 & 62.7 & -- \\
QTIP$_{+\mathrm{start}}$ & 4.13$^{+}$ & 61.5 & 61.3 & 62.5 & -- \\
Proteus$_{+\mathrm{start}}$ & 4.19$^{+}$ & 61.8 & 61.4 & \textbf{63.0} & -- \\
QTIP$_{+\mathrm{start+g128}}$ & 4.31$^{+}$ & 61.6 & 61.1 & 62.4 & -- \\
Proteus$_{+\mathrm{start+g128}}$ & 4.34$^{+}$ & 61.5 & 61.0 & 62.9 & -- \\
HIGGS & 4.25 & 61.2 & 60.6 & 62.7 & -- \\
NF (Hessian) & 4.25 & 61.4 & 59.9 & 62.1 & -- \\
GPTQ (g64) & 4.25 & 59.5 & 59.3 & 61.5 & -- \\
\textbf{CurveTQ}$_{\mathrm{fp16}}$ & 4.47 & 61.7 & 61.5 & 62.5 & -- \\
\bottomrule
\end{tabular}

%% file: tab_ci.tex
\begin{longtable}{@{}l l l r r r@{}}
\caption{Paired differences in downstream accuracy (points) between our build and each trellis baseline, over tasks.}
\label{tab:ci}\\
\toprule
Model & Tier & Contrast & $\Delta$ & 95\% CI & wins/$n$ \\
\midrule
\endfirsthead
\toprule
\multicolumn{6}{@{}l}{\itshape Table~\ref{tab:ci} continued}\\
\midrule
Model & Tier & Contrast & $\Delta$ & 95\% CI & wins/$n$ \\
\midrule
\endhead
Qwen3-4B & 2-bit & CurveTQ $-$ Proteus & $+3.89$ & $[+2.69,\,+5.13]$ & 29/34 \\
Qwen3-4B & 2-bit & CurveTQ $-$ QTIP & $+5.00$ & $[+3.39,\,+7.03]$ & 31/34 \\
Qwen3-4B & 2-bit & CurveTQ $-$ Proteus$_{+\mathrm{g64}}$ & $+4.54$ & $[+2.83,\,+6.98]$ & 30/34 \\
Qwen3-4B & 2-bit & CurveTQ $-$ QTIP$_{+\mathrm{g64}}$ & $+3.97$ & $[+2.53,\,+5.62]$ & 29/34 \\
Qwen3-4B & 2-bit & CurveTQ $-$ Proteus$_{+\mathrm{start}}$ & $+0.86$ & $[-0.21,\,+1.88]$ & 24/34 \\
Qwen3-4B & 2-bit & CurveTQ $-$ QTIP$_{+\mathrm{start}}$ & $+2.03$ & $[+1.14,\,+2.97]$ & 27/34 \\
Qwen3-4B & 2-bit & CurveTQ $-$ Proteus$_{+\mathrm{start+g128}}$ & $+1.26$ & $[+0.29,\,+2.22]$ & 24/34 \\
Qwen3-4B & 2-bit & CurveTQ $-$ QTIP$_{+\mathrm{start+g128}}$ & $+2.69$ & $[+1.64,\,+3.88]$ & 28/34 \\
Qwen3-4B & 3-bit & CurveTQ $-$ Proteus & $+0.93$ & $[+0.40,\,+1.48]$ & 26/34 \\
Qwen3-4B & 3-bit & CurveTQ $-$ QTIP & $+1.06$ & $[+0.28,\,+1.90]$ & 25/34 \\
Qwen3-4B & 3-bit & CurveTQ $-$ Proteus$_{+\mathrm{g64}}$ & $+0.21$ & $[-0.58,\,+0.94]$ & 22/34 \\
Qwen3-4B & 3-bit & CurveTQ $-$ QTIP$_{+\mathrm{g64}}$ & $+1.18$ & $[+0.26,\,+2.09]$ & 26/34 \\
Qwen3-4B & 3-bit & CurveTQ $-$ QTIP$_{+\mathrm{start}}$ & $+0.35$ & $[-0.29,\,+0.95]$ & 22/34 \\
Qwen3-4B & 3-bit & CurveTQ $-$ Proteus$_{+\mathrm{start}}$ & $-0.05$ & $[-0.79,\,+0.65]$ & 18/34 \\
Qwen3-4B & 4-bit & CurveTQ $-$ Proteus & $+0.03$ & $[-0.43,\,+0.51]$ & 20/34 \\
Qwen3-4B & 4-bit & CurveTQ $-$ QTIP & $+0.25$ & $[-0.29,\,+0.88]$ & 23/34 \\
Qwen3-4B & 4-bit & CurveTQ $-$ Proteus$_{+\mathrm{g64}}$ & $-0.24$ & $[-0.75,\,+0.19]$ & 19/34 \\
Qwen3-4B & 4-bit & CurveTQ $-$ QTIP$_{+\mathrm{g64}}$ & $+0.06$ & $[-0.63,\,+0.71]$ & 19/34 \\
\midrule
Llama-3-8B & 2-bit & CurveTQ $-$ Proteus & $+4.45$ & $[+2.37,\,+7.40]$ & 13/14 \\
Llama-3-8B & 2-bit & CurveTQ $-$ QTIP & $+4.42$ & $[+2.39,\,+7.11]$ & 13/14 \\
Llama-3-8B & 2-bit & CurveTQ $-$ Proteus$_{+\mathrm{g64}}$ & $+4.90$ & $[+3.03,\,+7.43]$ & 13/14 \\
Llama-3-8B & 2-bit & CurveTQ $-$ QTIP$_{+\mathrm{g64}}$ & $+4.81$ & $[+3.22,\,+7.17]$ & 14/14 \\
Llama-3-8B & 2-bit & CurveTQ $-$ Proteus$_{+\mathrm{start}}$ & $+1.17$ & $[+0.09,\,+2.39]$ & 9/14 \\
Llama-3-8B & 2-bit & CurveTQ $-$ QTIP$_{+\mathrm{start}}$ & $+2.17$ & $[+0.99,\,+3.68]$ & 13/14 \\
Llama-3-8B & 2-bit & CurveTQ $-$ Proteus$_{+\mathrm{start+g128}}$ & $+0.36$ & $[-0.54,\,+1.26]$ & 7/14 \\
Llama-3-8B & 2-bit & CurveTQ $-$ QTIP$_{+\mathrm{start+g128}}$ & $+2.72$ & $[+0.92,\,+5.10]$ & 11/14 \\
Llama-3-8B & 3-bit & CurveTQ $-$ Proteus & $+0.71$ & $[-0.11,\,+1.67]$ & 11/14 \\
Llama-3-8B & 3-bit & CurveTQ $-$ QTIP & $+1.28$ & $[+0.76,\,+1.88]$ & 13/14 \\
Llama-3-8B & 3-bit & CurveTQ $-$ Proteus$_{+\mathrm{g64}}$ & $+0.83$ & $[+0.14,\,+1.52]$ & 11/14 \\
Llama-3-8B & 3-bit & CurveTQ $-$ QTIP$_{+\mathrm{g64}}$ & $+0.44$ & $[-0.27,\,+1.05]$ & 11/14 \\
Llama-3-8B & 3-bit & CurveTQ $-$ QTIP$_{+\mathrm{start}}$ & $+0.67$ & $[+0.17,\,+1.18]$ & 12/14 \\
Llama-3-8B & 3-bit & CurveTQ $-$ Proteus$_{+\mathrm{start}}$ & $+0.17$ & $[-0.28,\,+0.68]$ & 8/14 \\
Llama-3-8B & 4-bit & CurveTQ $-$ Proteus & $+0.31$ & $[-0.20,\,+1.03]$ & 8/14 \\
Llama-3-8B & 4-bit & CurveTQ $-$ QTIP & $+0.37$ & $[+0.08,\,+0.67]$ & 12/14 \\
Llama-3-8B & 4-bit & CurveTQ $-$ Proteus$_{+\mathrm{g64}}$ & $+0.48$ & $[-0.00,\,+1.19]$ & 10/14 \\
Llama-3-8B & 4-bit & CurveTQ $-$ QTIP$_{+\mathrm{g64}}$ & $+0.65$ & $[+0.32,\,+0.95]$ & 11/14 \\
\midrule
Llama-3.1-8B & 2-bit & CurveTQ $-$ Proteus & $+2.49$ & $[+0.19,\,+5.25]$ & 10/14 \\
Llama-3.1-8B & 2-bit & CurveTQ $-$ QTIP & $+4.20$ & $[+2.05,\,+6.55]$ & 11/14 \\
Llama-3.1-8B & 2-bit & CurveTQ $-$ Proteus$_{+\mathrm{g64}}$ & $+3.02$ & $[+1.22,\,+5.00]$ & 11/14 \\
Llama-3.1-8B & 2-bit & CurveTQ $-$ QTIP$_{+\mathrm{g64}}$ & $+5.15$ & $[+2.75,\,+8.08]$ & 12/14 \\
Llama-3.1-8B & 2-bit & CurveTQ $-$ Proteus$_{+\mathrm{start}}$ & $+1.44$ & $[+0.30,\,+2.76]$ & 10/14 \\
Llama-3.1-8B & 2-bit & CurveTQ $-$ QTIP$_{+\mathrm{start}}$ & $+3.10$ & $[+1.39,\,+5.16]$ & 11/14 \\
Llama-3.1-8B & 2-bit & CurveTQ $-$ Proteus$_{+\mathrm{start+g128}}$ & $+1.02$ & $[+0.13,\,+1.94]$ & 10/14 \\
Llama-3.1-8B & 2-bit & CurveTQ $-$ QTIP$_{+\mathrm{start+g128}}$ & $+2.62$ & $[+0.67,\,+5.07]$ & 10/14 \\
Llama-3.1-8B & 3-bit & CurveTQ $-$ Proteus & $+0.95$ & $[+0.22,\,+1.73]$ & 11/14 \\
Llama-3.1-8B & 3-bit & CurveTQ $-$ QTIP & $+0.70$ & $[+0.00,\,+1.43]$ & 12/14 \\
Llama-3.1-8B & 3-bit & CurveTQ $-$ Proteus$_{+\mathrm{g64}}$ & $-0.38$ & $[-0.86,\,+0.12]$ & 5/14 \\
Llama-3.1-8B & 3-bit & CurveTQ $-$ QTIP$_{+\mathrm{g64}}$ & $+0.06$ & $[-0.66,\,+0.73]$ & 10/14 \\
Llama-3.1-8B & 3-bit & CurveTQ $-$ QTIP$_{+\mathrm{start}}$ & $-0.27$ & $[-1.05,\,+0.37]$ & 9/14 \\
Llama-3.1-8B & 3-bit & CurveTQ $-$ Proteus$_{+\mathrm{start}}$ & $-0.29$ & $[-0.98,\,+0.35]$ & 8/14 \\
Llama-3.1-8B & 4-bit & CurveTQ $-$ Proteus & $+0.28$ & $[-0.16,\,+0.85]$ & 9/14 \\
Llama-3.1-8B & 4-bit & CurveTQ $-$ QTIP & $+0.90$ & $[+0.30,\,+1.71]$ & 11/14 \\
Llama-3.1-8B & 4-bit & CurveTQ $-$ Proteus$_{+\mathrm{g64}}$ & $+0.34$ & $[-0.28,\,+1.03]$ & 10/14 \\
Llama-3.1-8B & 4-bit & CurveTQ $-$ QTIP$_{+\mathrm{g64}}$ & $+0.36$ & $[-0.21,\,+0.92]$ & 9/14 \\
\midrule
Qwen3-32B & 2-bit & CurveTQ $-$ Proteus & $+1.24$ & $[+0.35,\,+2.24]$ & 25/34 \\
Qwen3-32B & 2-bit & CurveTQ $-$ QTIP & $+0.88$ & $[+0.06,\,+1.71]$ & 25/34 \\
Qwen3-32B & 2-bit & CurveTQ $-$ Proteus$_{+\mathrm{g64}}$ & $+0.47$ & $[-0.20,\,+1.16]$ & 22/34 \\
Qwen3-32B & 2-bit & CurveTQ $-$ QTIP$_{+\mathrm{g64}}$ & $+1.27$ & $[+0.43,\,+2.24]$ & 25/34 \\
Qwen3-32B & 2-bit & CurveTQ $-$ Proteus$_{+\mathrm{start}}$ & $+0.23$ & $[-0.37,\,+0.90]$ & 13/34 \\
Qwen3-32B & 2-bit & CurveTQ $-$ QTIP$_{+\mathrm{start}}$ & $+0.23$ & $[-0.52,\,+0.96]$ & 22/34 \\
Qwen3-32B & 3-bit & CurveTQ $-$ Proteus & $+0.40$ & $[-0.12,\,+0.90]$ & 21/34 \\
Qwen3-32B & 3-bit & CurveTQ $-$ QTIP & $+0.49$ & $[+0.05,\,+0.96]$ & 22/34 \\
Qwen3-32B & 3-bit & CurveTQ $-$ Proteus$_{+\mathrm{g64}}$ & $+0.28$ & $[-0.16,\,+0.73]$ & 21/34 \\
Qwen3-32B & 3-bit & CurveTQ $-$ QTIP$_{+\mathrm{g64}}$ & $+0.28$ & $[-0.14,\,+0.71]$ & 19/34 \\
Qwen3-32B & 3-bit & CurveTQ $-$ QTIP$_{+\mathrm{start}}$ & $+0.43$ & $[+0.06,\,+0.82]$ & 20/34 \\
Qwen3-32B & 3-bit & CurveTQ $-$ Proteus$_{+\mathrm{start}}$ & $+0.35$ & $[-0.04,\,+0.74]$ & 21/34 \\
\midrule
Qwen3.6-35B-A3B & 2-bit & CurveTQ $-$ Proteus & $+2.13$ & $[+1.42,\,+2.99]$ & 28/30 \\
Qwen3.6-35B-A3B & 2-bit & CurveTQ $-$ QTIP & $+2.54$ & $[+1.85,\,+3.27]$ & 28/30 \\
Qwen3.6-35B-A3B & 2-bit & CurveTQ $-$ QTIP$_{+\mathrm{g64}}$ & $+2.89$ & $[+1.31,\,+4.92]$ & 26/30 \\
Qwen3.6-35B-A3B & 2-bit & CurveTQ $-$ Proteus$_{+\mathrm{g64}}$ & $+3.06$ & $[+1.59,\,+4.88]$ & 25/30 \\
Qwen3.6-35B-A3B & 2-bit & CurveTQ $-$ QTIP$_{+\mathrm{start}}$ & $+0.56$ & $[-0.10,\,+1.27]$ & 18/30 \\
Qwen3.6-35B-A3B & 2-bit & CurveTQ $-$ Proteus$_{+\mathrm{start}}$ & $+1.52$ & $[+0.45,\,+2.81]$ & 21/30 \\
\bottomrule
\end{longtable}

%% file: tab_nl_ci.tex
\begin{tabular}{@{}l l r r r r r r@{}}
\toprule
& & \multicolumn{2}{c}{vs.\ CurveTQ (with table)} & \multicolumn{2}{c}{vs.\ QTIP} & \multicolumn{2}{c}{vs.\ Proteus} \\
\cmidrule(lr){3-4}\cmidrule(lr){5-6}\cmidrule(lr){7-8}
Model & Tier & $\Delta$ [95\% CI] & wins/$n$ & $\Delta$ [95\% CI] & wins/$n$ & $\Delta$ [95\% CI] & wins/$n$ \\
\midrule
Qwen3-4B & 2-bit & $-1.18$ [$-2.26$, $-0.21$] & 10/34 & $+3.82$ [$+2.09$, $+5.86$] & 26/34 & $+2.71$ [$+1.41$, $+3.91$] & 27/34 \\
 & 3-bit & $+0.27$ [$-0.28$, $+0.85$] & 17/34 & $+1.33$ [$+0.57$, $+2.15$] & 26/34 & $+1.20$ [$+0.65$, $+1.75$] & 28/34 \\
 & 4-bit & $-0.20$ [$-0.70$, $+0.29$] & 17/34 & $+0.05$ [$-0.42$, $+0.51$] & 20/34 & $-0.16$ [$-0.71$, $+0.31$] & 20/34 \\
\midrule
Llama-3-8B & 2-bit & $-0.55$ [$-1.18$, $+0.15$] & 4/14 & $+3.86$ [$+2.17$, $+6.22$] & 13/14 & $+3.90$ [$+2.30$, $+6.37$] & 14/14 \\
 & 3-bit & $-0.52$ [$-0.83$, $-0.24$] & 2/14 & $+0.76$ [$+0.11$, $+1.44$] & 9/14 & $+0.19$ [$-0.62$, $+1.01$] & 10/14 \\
 & 4-bit & $-0.34$ [$-0.92$, $+0.08$] & 5/14 & $+0.03$ [$-0.38$, $+0.38$] & 8/14 & $-0.02$ [$-0.47$, $+0.43$] & 6/14 \\
\midrule
Llama-3.1-8B & 2-bit & $-2.59$ [$-4.89$, $-0.58$] & 3/14 & $+1.61$ [$+0.69$, $+2.44$] & 11/14 & $-0.10$ [$-1.09$, $+0.87$] & 8/14 \\
 & 3-bit & $+0.74$ [$+0.05$, $+1.57$] & 9/14 & $+1.44$ [$+0.23$, $+2.89$] & 11/14 & $+1.69$ [$+0.81$, $+2.99$] & 13/14 \\
 & 4-bit & $-0.40$ [$-0.92$, $+0.05$] & 4/14 & $+0.50$ [$+0.02$, $+1.06$] & 11/14 & $-0.12$ [$-0.35$, $+0.10$] & 6/14 \\
\bottomrule
\end{tabular}

%% file: tab_werr.tex
\begin{tabular}{@{}l c rrrr@{}}
\toprule
Method & bpw & Qwen3-4B & Llama-3-8B & Llama-3.1-8B & Qwen3-32B \\
\midrule
QTIP & 2.00$^{+}$ & 10.25 & 9.41 & 9.43 & 10.41 \\
Proteus & 2.00$^{+}$ & 10.01 & 9.17 & 9.19 & 10.15 \\
\textbf{CurveTQ} & 2.29 & 8.01 & \textbf{7.21} & \textbf{7.22} & \textbf{8.04} \\
\textbf{CurveTQ}$_{\mathrm{no\,LUT}}$ & 2.29 & 8.35 & 7.60 & 7.62 & -- \\
QTIP$_{+\mathrm{g64}}$ & 2.25$^{+}$ & 10.16 & 9.39 & 9.41 & 10.43 \\
Proteus$_{+\mathrm{g64}}$ & 2.25$^{+}$ & 9.90 & 9.14 & 9.16 & 10.16 \\
QTIP$_{+\mathrm{start}}$ & 2.19$^{+}$ & 8.74 & 8.00 & 8.02 & 8.86 \\
Proteus$_{+\mathrm{start}}$ & 2.22$^{+}$ & 8.10 & 7.39 & 7.41 & 8.19 \\
QTIP$_{+\mathrm{start+g128}}$ & 2.31$^{+}$ & 8.65 & 7.99 & 8.01 & 8.87 \\
Proteus$_{+\mathrm{start+g128}}$ & 2.34$^{+}$ & \textbf{8.00} & 7.38 & 7.39 & 8.19 \\
HIGGS & 2.25 & 17.34 & 15.60 & 15.63 & -- \\
\textbf{CurveTQ}$_{\mathrm{fp16}}$ & 2.47 & 7.97 & 7.18 & 7.19 & -- \\
\midrule
QTIP & 3.00$^{+}$ & 2.77 & 2.50 & 2.51 & 2.78 \\
Proteus & 3.00$^{+}$ & 2.58 & 2.34 & 2.34 & 2.60 \\
\textbf{CurveTQ} & 3.29 & \textbf{1.98} & \textbf{1.77} & \textbf{1.77} & \textbf{1.99} \\
\textbf{CurveTQ}$_{\mathrm{no\,LUT}}$ & 3.29 & 2.17 & 1.95 & 1.95 & -- \\
QTIP$_{+\mathrm{g64}}$ & 3.25$^{+}$ & 2.67 & 2.46 & 2.47 & 2.73 \\
Proteus$_{+\mathrm{g64}}$ & 3.25$^{+}$ & 2.51 & 2.31 & 2.31 & 2.56 \\
QTIP$_{+\mathrm{start}}$ & 3.16$^{+}$ & 2.46 & 2.22 & 2.22 & 2.46 \\
Proteus$_{+\mathrm{start}}$ & 3.20$^{+}$ & 2.15 & 1.94 & 1.94 & 2.15 \\
QTIP$_{+\mathrm{start+g128}}$ & 3.31$^{+}$ & 2.38 & 2.19 & 2.20 & -- \\
Proteus$_{+\mathrm{start+g128}}$ & 3.34$^{+}$ & 2.09 & 1.92 & 1.93 & -- \\
HIGGS & 3.25 & 4.67 & 4.19 & 4.20 & -- \\
NF (Hessian) & 3.25 & 5.98 & 5.52 & 5.52 & -- \\
\textbf{CurveTQ}$_{\mathrm{fp16}}$ & 3.47 & 1.95 & 1.74 & 1.75 & -- \\
\midrule
QTIP & 4.00$^{+}$ & 0.87 & 0.76 & 0.76 & -- \\
Proteus & 4.00$^{+}$ & 0.67 & 0.60 & 0.60 & -- \\
\textbf{CurveTQ} & 4.29 & \textbf{0.52} & \textbf{0.47} & \textbf{0.47} & -- \\
\textbf{CurveTQ}$_{\mathrm{no\,LUT}}$ & 4.29 & 0.55 & 0.49 & 0.49 & -- \\
QTIP$_{+\mathrm{g64}}$ & 4.25$^{+}$ & 0.79 & 0.73 & 0.73 & -- \\
Proteus$_{+\mathrm{g64}}$ & 4.25$^{+}$ & 0.64 & 0.59 & 0.59 & -- \\
QTIP$_{+\mathrm{start}}$ & 4.13$^{+}$ & 0.82 & 0.71 & 0.71 & -- \\
Proteus$_{+\mathrm{start}}$ & 4.19$^{+}$ & 0.57 & 0.51 & 0.51 & -- \\
QTIP$_{+\mathrm{start+g128}}$ & 4.31$^{+}$ & 0.75 & 0.69 & 0.70 & -- \\
Proteus$_{+\mathrm{start+g128}}$ & 4.34$^{+}$ & 0.55 & 0.50 & 0.51 & -- \\
HIGGS & 4.25 & 1.13 & 1.02 & 1.02 & -- \\
NF (Hessian) & 4.25 & 1.26 & 1.17 & 1.17 & -- \\
\textbf{CurveTQ}$_{\mathrm{fp16}}$ & 4.47 & 0.49 & 0.44 & 0.44 & -- \\
\bottomrule
\end{tabular}

%% file: tab_full_l8.tex
\begin{tabular}{@{}lrrrrrrrrrrrrrr@{\hspace{4pt}}r@{}}
\toprule
Method (bpw) & \rotatebox{60}{\,MMLU} & \rotatebox{60}{\,MMLU-Pro} & \rotatebox{60}{\,TruthfulQA} & \rotatebox{60}{\,GPQA} & \rotatebox{60}{\,GSM8K} & \rotatebox{60}{\,ARC-c} & \rotatebox{60}{\,ARC-e} & \rotatebox{60}{\,HellaSwag} & \rotatebox{60}{\,WinoGr.} & \rotatebox{60}{\,PIQA} & \rotatebox{60}{\,BoolQ} & \rotatebox{60}{\,IFEval} & \rotatebox{60}{\,MMLU-ProX} & \rotatebox{60}{\,INCLUDE} & \rotatebox{60}{\,\textbf{Mean}} \\
\midrule
fp16 & 66.7 & 40.8 & 52.4 & 36.4 & 75.4 & 62.5 & 78.8 & 78.8 & 75.5 & 78.9 & 84.5 & 44.7 & 41.0 & 49.3 & \textbf{61.8} \\
\midrule
\multicolumn{16}{@{}l}{\emph{2-bit tier}} \\
\textbf{CurveTQ} 2.29 & \textbf{57.2} & \textbf{27.3} & 47.0 & 32.8 & \textbf{47.5} & \textbf{54.5} & 72.6 & 72.0 & 72.7 & 76.3 & \textbf{82.3} & \textbf{37.5} & \textbf{27.4} & \textbf{38.3} & \textbf{53.2} \\
\textbf{CurveTQ}$_{\mathrm{no\,LUT}}$ 2.29 & 55.4 & 26.5 & 48.2 & \textbf{34.9} & 45.2 & 53.4 & 73.7 & 71.6 & 72.1 & \textbf{76.9} & 80.5 & 35.5 & 26.5 & 37.3 & 52.7 \\
\textbf{CurveTQ}$_{\mathrm{fp16}}$ 2.47 & 56.5 & 26.5 & 48.7 & 28.3 & 46.2 & 55.2 & 72.7 & 71.1 & 70.7 & 76.9 & 83.5 & 35.1 & 26.5 & 37.0 & 52.5 \\
QTIP 2.00$^{+}$ & 51.6 & 21.7 & 47.5 & 31.8 & 28.6 & 48.1 & 70.4 & 67.7 & 69.9 & 75.1 & 82.2 & 32.5 & 22.0 & 34.4 & 48.8 \\
Proteus 2.00$^{+}$ & 52.4 & 23.0 & 48.1 & 30.8 & 26.5 & 48.8 & 71.8 & 69.3 & 70.3 & 75.1 & 78.4 & 32.0 & 23.0 & 33.6 & 48.8 \\
QTIP$_{+\mathrm{g64}}$ 2.25$^{+}$ & 52.8 & 21.4 & 44.9 & 30.3 & 29.2 & 50.9 & 67.9 & 68.2 & 69.8 & 74.2 & 80.7 & 32.5 & 21.4 & 33.8 & 48.4 \\
Proteus$_{+\mathrm{g64}}$ 2.25$^{+}$ & 52.7 & 23.8 & 47.5 & 24.2 & 28.6 & 48.8 & 68.5 & 69.1 & 70.7 & 73.8 & 78.2 & 32.4 & 24.0 & 34.4 & 48.3 \\
HIGGS 2.25 & 41.8 & 14.9 & 47.5 & 23.2 & 8.1 & 39.5 & 60.6 & 55.4 & 64.2 & 71.3 & 73.8 & 19.4 & 14.9 & 26.4 & 40.1 \\
QTIP$_{+\mathrm{start}}$ 2.19$^{+}$ & 56.7 & 25.9 & 46.9 & 33.3 & 37.5 & 52.3 & 70.0 & 70.0 & 71.3 & 76.1 & 80.7 & 31.6 & 26.2 & 36.4 & 51.1 \\
Proteus$_{+\mathrm{start}}$ 2.22$^{+}$ & 56.0 & 26.8 & \textbf{48.8} & 27.8 & 41.8 & 53.6 & \textbf{74.3} & \textbf{72.1} & \textbf{73.7} & 76.5 & 80.5 & 36.0 & 26.6 & 34.5 & 52.1 \\
\midrule
\multicolumn{16}{@{}l}{\emph{3-bit tier}} \\
\textbf{CurveTQ} 3.29 & 64.2 & 37.2 & 51.4 & 33.8 & \textbf{71.5} & 61.1 & \textbf{78.1} & \textbf{77.3} & \textbf{75.8} & 77.5 & \textbf{84.8} & 41.8 & 37.4 & \textbf{46.7} & \textbf{59.9} \\
\textbf{CurveTQ}$_{\mathrm{no\,LUT}}$ 3.29 & 64.6 & 37.2 & 50.2 & 32.3 & 70.9 & 61.0 & 77.2 & 77.2 & 74.9 & 77.4 & 83.3 & 41.4 & 37.4 & 46.4 & 59.4 \\
\textbf{CurveTQ}$_{\mathrm{fp16}}$ 3.47 & 65.2 & 38.0 & 52.0 & 34.9 & 73.2 & 59.8 & 76.8 & 77.4 & 74.9 & 78.0 & 85.0 & 40.9 & 38.3 & 47.0 & 60.1 \\
QTIP 3.00$^{+}$ & 63.4 & 35.9 & 51.0 & 32.8 & 67.3 & 59.3 & 77.5 & 76.5 & 74.7 & \textbf{78.0} & 84.2 & 39.6 & 36.2 & 44.2 & 58.6 \\
Proteus 3.00$^{+}$ & 64.4 & 37.1 & 52.1 & 28.3 & 69.8 & 59.0 & 76.8 & 77.0 & 75.3 & 77.2 & 84.0 & 44.4 & 37.1 & 46.1 & 59.2 \\
QTIP$_{+\mathrm{g64}}$ 3.25$^{+}$ & 64.1 & \textbf{37.8} & 51.2 & 32.8 & 69.1 & 60.6 & 76.7 & 76.4 & 74.6 & 77.0 & 83.7 & \textbf{44.9} & \textbf{38.2} & 45.3 & 59.5 \\
Proteus$_{+\mathrm{g64}}$ 3.25$^{+}$ & 64.0 & 36.3 & 51.5 & 35.9 & 68.3 & 60.3 & 75.6 & 76.7 & 75.3 & \textbf{78.0} & 83.5 & 39.0 & 36.4 & 46.2 & 59.1 \\
HIGGS 3.25 & 61.6 & 34.0 & 50.3 & \textbf{37.4} & 61.3 & 58.1 & 75.0 & 75.4 & 74.3 & 76.9 & 83.2 & 42.0 & 34.2 & 42.9 & 57.6 \\
GPTQ 3.25 & 56.9 & 26.9 & 48.3 & 34.9 & 41.5 & 55.2 & 69.6 & 73.4 & 72.0 & 75.1 & 80.1 & 35.5 & 27.5 & 35.1 & 52.3 \\
QTIP$_{+\mathrm{start}}$ 3.16$^{+}$ & 64.0 & 36.6 & 50.6 & 31.3 & 68.8 & 60.8 & 77.6 & 77.0 & \textbf{75.8} & 77.3 & 83.8 & 43.1 & 36.5 & 46.1 & 59.2 \\
Proteus$_{+\mathrm{start}}$ 3.20$^{+}$ & \textbf{64.8} & 37.3 & \textbf{53.0} & 33.8 & 69.2 & \textbf{61.2} & 77.0 & 77.2 & 75.1 & 77.4 & 84.1 & 42.7 & 37.3 & 46.0 & 59.7 \\
\midrule
\multicolumn{16}{@{}l}{\emph{4-bit tier}} \\
\textbf{CurveTQ} 4.29 & 66.1 & 40.0 & 52.2 & 36.4 & 75.2 & 62.0 & 77.9 & \textbf{78.5} & 75.7 & \textbf{78.8} & 84.3 & 45.3 & 40.2 & \textbf{48.6} & \textbf{61.5} \\
\textbf{CurveTQ}$_{\mathrm{no\,LUT}}$ 4.29 & 65.9 & 40.0 & 52.4 & 32.8 & 75.3 & 62.4 & 78.9 & 78.2 & 75.2 & 78.3 & 84.3 & 44.5 & 39.7 & \textbf{48.6} & 61.2 \\
\textbf{CurveTQ}$_{\mathrm{fp16}}$ 4.47 & 66.1 & 40.6 & 52.1 & 35.9 & 74.7 & 62.0 & 79.0 & 78.7 & 75.6 & 78.9 & 84.8 & 43.4 & 40.7 & 48.7 & 61.5 \\
QTIP 4.00$^{+}$ & 66.0 & 39.8 & 51.6 & 34.9 & 75.1 & 62.3 & 77.7 & 78.3 & 74.7 & 78.4 & 84.9 & 44.0 & 39.9 & 48.4 & 61.1 \\
Proteus 4.00$^{+}$ & \textbf{66.5} & \textbf{40.1} & 51.6 & 32.3 & 73.6 & 62.0 & 77.9 & 78.3 & \textbf{77.0} & 78.5 & 84.6 & \textbf{45.7} & 40.1 & 48.5 & 61.2 \\
QTIP$_{+\mathrm{g64}}$ 4.25$^{+}$ & 65.5 & 39.5 & 52.4 & 34.3 & 73.3 & 62.1 & \textbf{79.5} & 77.9 & 75.2 & 77.8 & 84.0 & 43.2 & 39.3 & 48.0 & 60.9 \\
Proteus$_{+\mathrm{g64}}$ 4.25$^{+}$ & 66.1 & 40.0 & 51.8 & 31.3 & 74.0 & \textbf{62.6} & 78.1 & \textbf{78.5} & 75.8 & 78.5 & 84.7 & 44.2 & \textbf{40.4} & 48.5 & 61.0 \\
HIGGS 4.25 & 65.6 & 39.4 & 51.5 & 33.8 & 73.1 & 60.7 & 78.5 & 77.8 & 75.2 & 77.6 & 83.9 & 43.8 & 39.6 & 47.8 & 60.6 \\
NF 4.25 & 65.2 & 39.1 & 51.3 & 32.3 & 71.6 & 60.2 & 77.8 & 77.8 & 75.8 & 76.5 & 83.4 & 42.0 & 39.3 & 46.7 & 59.9 \\
GPTQ 4.25 & 64.0 & 37.0 & 51.0 & 31.3 & 66.4 & 61.9 & 77.9 & 76.7 & 75.4 & 78.3 & 84.0 & 42.9 & 37.3 & 45.6 & 59.3 \\
QTIP$_{+\mathrm{start}}$ 4.13$^{+}$ & 66.1 & 39.7 & \textbf{53.4} & 34.3 & \textbf{75.4} & 62.0 & 78.2 & 78.0 & 75.1 & 78.3 & \textbf{85.1} & 44.5 & 39.9 & 48.3 & 61.3 \\
Proteus$_{+\mathrm{start}}$ 4.19$^{+}$ & 66.0 & \textbf{40.1} & 51.9 & \textbf{37.9} & 74.2 & 61.8 & 76.8 & 78.4 & 76.4 & \textbf{78.8} & 84.5 & 44.4 & 40.2 & 48.5 & 61.4 \\
\bottomrule
\end{tabular}

%% file: tab_full_l31.tex
\begin{tabular}{@{}lrrrrrrrrrrrrrr@{\hspace{4pt}}r@{}}
\toprule
Method (bpw) & \rotatebox{60}{\,MMLU} & \rotatebox{60}{\,MMLU-Pro} & \rotatebox{60}{\,TruthfulQA} & \rotatebox{60}{\,GPQA} & \rotatebox{60}{\,GSM8K} & \rotatebox{60}{\,ARC-c} & \rotatebox{60}{\,ARC-e} & \rotatebox{60}{\,HellaSwag} & \rotatebox{60}{\,WinoGr.} & \rotatebox{60}{\,PIQA} & \rotatebox{60}{\,BoolQ} & \rotatebox{60}{\,IFEval} & \rotatebox{60}{\,MMLU-ProX} & \rotatebox{60}{\,INCLUDE} & \rotatebox{60}{\,\textbf{Mean}} \\
\midrule
fp16 & 68.3 & 43.3 & 54.5 & 33.3 & 77.9 & 61.4 & 79.6 & 80.4 & 78.6 & 81.3 & 85.3 & 45.8 & 43.1 & 52.4 & \textbf{63.2} \\
\midrule
\multicolumn{16}{@{}l}{\emph{2-bit tier}} \\
\textbf{CurveTQ} 2.29 & \textbf{59.9} & \textbf{26.7} & \textbf{50.9} & 27.3 & \textbf{49.7} & 50.9 & 73.2 & \textbf{73.5} & \textbf{74.9} & 77.9 & \textbf{82.7} & \textbf{41.8} & 26.8 & \textbf{39.4} & \textbf{54.0} \\
\textbf{CurveTQ}$_{\mathrm{no\,LUT}}$ 2.29 & 57.3 & 24.5 & 43.4 & 29.8 & 36.6 & \textbf{53.8} & 73.6 & 72.6 & 72.9 & 77.8 & 82.0 & 34.4 & 24.9 & 35.6 & 51.4 \\
\textbf{CurveTQ}$_{\mathrm{fp16}}$ 2.47 & 58.4 & 25.9 & 48.8 & 27.3 & 47.3 & 52.4 & 73.7 & 72.3 & 75.3 & 78.7 & 81.9 & 41.4 & 26.0 & 39.4 & 53.5 \\
QTIP 2.00$^{+}$ & 54.3 & 21.4 & 44.4 & 26.3 & 35.1 & 51.8 & \textbf{75.8} & 69.8 & 71.6 & 78.1 & 80.8 & 32.5 & 21.7 & 33.2 & 49.8 \\
Proteus 2.00$^{+}$ & 54.5 & 24.4 & 47.0 & \textbf{33.3} & 33.9 & 52.7 & 73.2 & 71.3 & 74.0 & \textbf{78.8} & 82.5 & 35.9 & 24.3 & 34.7 & 51.5 \\
QTIP$_{+\mathrm{g64}}$ 2.25$^{+}$ & 53.9 & 21.1 & 39.9 & 27.3 & 30.2 & 51.4 & 72.6 & 69.4 & 71.3 & 76.7 & 81.7 & 33.6 & 21.6 & 32.7 & 48.8 \\
Proteus$_{+\mathrm{g64}}$ 2.25$^{+}$ & 54.5 & 22.1 & 47.8 & 31.3 & 38.1 & 51.6 & 73.4 & 70.7 & 73.2 & 77.0 & 81.7 & 35.1 & 21.9 & 34.9 & 51.0 \\
HIGGS 2.25 & 39.6 & 11.9 & 41.9 & 27.3 & 11.1 & 37.2 & 61.1 & 56.4 & 64.2 & 72.5 & 74.9 & 20.7 & 12.0 & 27.1 & 39.8 \\
QTIP$_{+\mathrm{start}}$ 2.19$^{+}$ & 56.9 & 24.6 & 42.4 & 25.2 & 36.5 & 52.3 & 74.0 & 71.0 & 73.2 & 78.3 & 78.0 & 38.3 & 24.8 & 36.8 & 50.9 \\
Proteus$_{+\mathrm{start}}$ 2.22$^{+}$ & 58.7 & 26.4 & 47.4 & 28.8 & 47.4 & 53.2 & 70.8 & 72.9 & 73.2 & 77.9 & 81.6 & 33.8 & \textbf{27.1} & 36.3 & 52.5 \\
\midrule
\multicolumn{16}{@{}l}{\emph{3-bit tier}} \\
\textbf{CurveTQ} 3.29 & 66.4 & 38.6 & 51.5 & 34.9 & 71.7 & 58.6 & 76.5 & \textbf{78.9} & 77.7 & 80.4 & 85.1 & 44.2 & 38.7 & \textbf{49.8} & 60.9 \\
\textbf{CurveTQ}$_{\mathrm{no\,LUT}}$ 3.29 & \textbf{66.6} & 38.5 & \textbf{54.0} & \textbf{39.4} & \textbf{72.8} & 59.8 & 79.2 & \textbf{78.9} & \textbf{78.0} & 80.5 & 84.2 & 43.6 & 38.5 & 49.2 & \textbf{61.7} \\
\textbf{CurveTQ}$_{\mathrm{fp16}}$ 3.47 & 66.5 & 39.6 & 53.9 & 34.3 & 72.4 & 60.0 & 79.3 & 78.9 & 77.1 & 80.3 & 85.6 & 43.4 & 40.0 & 49.3 & 61.5 \\
QTIP 3.00$^{+}$ & 66.0 & 38.4 & 50.4 & 30.8 & 70.4 & 59.4 & 75.8 & 78.2 & 75.7 & 80.2 & 84.7 & 46.4 & 38.6 & 48.3 & 60.2 \\
Proteus 3.00$^{+}$ & 65.6 & 37.0 & 51.6 & 30.3 & 70.2 & 59.5 & 78.2 & 78.6 & 76.2 & 79.8 & 84.5 & 42.5 & 37.8 & 47.9 & 60.0 \\
QTIP$_{+\mathrm{g64}}$ 3.25$^{+}$ & 66.0 & 38.3 & 53.5 & 32.3 & 70.1 & 59.1 & 78.6 & 78.7 & 77.0 & 80.4 & 84.4 & 46.4 & 38.4 & 48.8 & 60.9 \\
Proteus$_{+\mathrm{g64}}$ 3.25$^{+}$ & 65.2 & 38.5 & 53.3 & 36.4 & 72.5 & 58.8 & 78.2 & 78.6 & 77.0 & \textbf{81.0} & \textbf{85.3} & 45.5 & \textbf{39.1} & 48.9 & 61.3 \\
HIGGS 3.25 & 64.3 & 34.4 & 52.0 & 28.8 & 67.5 & 58.5 & 76.5 & 76.5 & 76.0 & 79.0 & 83.6 & 45.7 & 34.2 & 44.9 & 58.7 \\
GPTQ 3.25 & 55.7 & 24.0 & 47.2 & 30.3 & 40.2 & 54.0 & 75.2 & 73.5 & 72.8 & 78.1 & 79.2 & 37.3 & 24.3 & 36.0 & 52.0 \\
QTIP$_{+\mathrm{start}}$ 3.16$^{+}$ & 66.2 & 38.4 & 52.0 & 33.8 & 71.8 & 59.4 & \textbf{79.5} & 78.5 & 77.6 & 80.0 & 84.7 & \textbf{47.9} & 38.4 & 48.7 & 61.2 \\
Proteus$_{+\mathrm{start}}$ 3.20$^{+}$ & 65.7 & \textbf{38.8} & 53.9 & 35.9 & 70.7 & \textbf{61.2} & 78.8 & 78.7 & 77.3 & 80.7 & 84.8 & 43.6 & 38.6 & 48.3 & 61.2 \\
\midrule
\multicolumn{16}{@{}l}{\emph{4-bit tier}} \\
\textbf{CurveTQ} 4.29 & \textbf{68.0} & \textbf{42.1} & 53.7 & \textbf{37.4} & \textbf{78.4} & 60.8 & 78.5 & \textbf{80.2} & 78.1 & \textbf{81.5} & 85.1 & 44.9 & \textbf{42.0} & 51.7 & \textbf{63.0} \\
\textbf{CurveTQ}$_{\mathrm{no\,LUT}}$ 4.29 & 67.8 & 41.4 & 54.3 & 34.9 & 76.1 & 60.5 & 79.4 & 80.1 & 78.5 & 80.6 & 84.9 & 45.3 & 41.5 & 51.5 & 62.6 \\
\textbf{CurveTQ}$_{\mathrm{fp16}}$ 4.47 & 67.6 & 41.7 & 54.2 & 31.3 & 76.7 & 60.8 & 79.1 & 80.3 & 77.5 & 81.3 & 85.0 & 45.3 & 41.8 & 51.8 & 62.5 \\
QTIP 4.00$^{+}$ & 67.7 & 41.1 & 54.2 & 32.3 & 77.2 & 60.0 & 78.7 & 79.7 & 77.7 & 80.5 & 85.3 & 42.5 & 41.8 & 51.0 & 62.1 \\
Proteus 4.00$^{+}$ & 67.9 & 41.9 & 54.0 & 34.3 & 77.0 & 61.3 & 79.2 & 79.9 & \textbf{78.7} & 81.1 & 84.5 & 45.7 & 41.8 & 51.3 & 62.7 \\
QTIP$_{+\mathrm{g64}}$ 4.25$^{+}$ & 67.4 & 41.3 & 53.7 & 34.9 & 76.3 & 61.1 & 77.7 & 79.7 & 78.0 & 81.2 & \textbf{85.8} & 46.8 & 41.6 & 51.7 & 62.7 \\
Proteus$_{+\mathrm{g64}}$ 4.25$^{+}$ & 67.8 & 41.9 & \textbf{54.5} & 33.8 & 76.7 & 60.3 & 78.6 & 79.9 & 77.7 & 81.0 & 85.0 & 47.1 & \textbf{42.0} & 51.4 & 62.7 \\
HIGGS 4.25 & 67.4 & 41.8 & 53.9 & 34.9 & 76.3 & 60.4 & \textbf{80.3} & 79.7 & 78.0 & 80.6 & 85.1 & 47.1 & 41.5 & 50.8 & 62.7 \\
NF 4.25 & 66.7 & 41.0 & 53.9 & 35.4 & 76.7 & 60.8 & 78.5 & 79.6 & 76.8 & 81.0 & 85.4 & 43.8 & 40.9 & 49.5 & 62.1 \\
GPTQ 4.25 & 65.4 & 37.6 & 53.0 & 36.9 & 71.2 & 59.3 & 79.8 & 79.0 & 77.9 & 80.2 & 85.1 & \textbf{50.1} & 37.9 & 48.0 & 61.5 \\
QTIP$_{+\mathrm{start}}$ 4.13$^{+}$ & \textbf{68.0} & 41.4 & 53.6 & 35.4 & 75.8 & \textbf{61.4} & 79.8 & 79.7 & 77.5 & 80.9 & 85.3 & 44.0 & 41.6 & 51.0 & 62.5 \\
Proteus$_{+\mathrm{start}}$ 4.19$^{+}$ & 67.8 & 41.3 & 53.9 & 35.4 & 77.0 & 61.3 & 79.7 & 80.1 & 78.4 & 81.0 & 85.4 & 47.1 & 41.9 & \textbf{51.9} & \textbf{63.0} \\
\bottomrule
\end{tabular}

%% file: tab_full_q4_text.tex
\begin{tabular}{@{}lrrrrrrrrrrrrrrrrrr@{\hspace{4pt}}r@{}}
\toprule
Method (bpw) & \rotatebox{60}{\,MMLU} & \rotatebox{60}{\,MMLU-Pro} & \rotatebox{60}{\,TruthfulQA} & \rotatebox{60}{\,GPQA} & \rotatebox{60}{\,LB-Lang} & \rotatebox{60}{\,GSM8K} & \rotatebox{60}{\,ARC-c} & \rotatebox{60}{\,ARC-e} & \rotatebox{60}{\,HellaSwag} & \rotatebox{60}{\,WinoGr.} & \rotatebox{60}{\,PIQA} & \rotatebox{60}{\,BoolQ} & \rotatebox{60}{\,LB-Reason} & \rotatebox{60}{\,LB-Math} & \rotatebox{60}{\,LB-DA} & \rotatebox{60}{\,IFEval} & \rotatebox{60}{\,MMLU-ProX} & \rotatebox{60}{\,INCLUDE} & \rotatebox{60}{\,\textbf{Mean}} \\
\midrule
fp16 & 64.8 & 56.6 & 59.8 & 35.9 & 34.1 & 87.4 & 65.3 & 78.5 & 69.6 & 66.2 & 77.0 & 86.1 & 55.9 & 41.0 & 52.8 & 54.3 & 56.9 & 54.6 & \textbf{60.9} \\
\midrule
\multicolumn{20}{@{}l}{\emph{2-bit tier}} \\
\textbf{CurveTQ} 2.29 & \textbf{63.9} & \textbf{35.0} & 55.6 & 25.8 & \textbf{23.6} & \textbf{68.7} & 56.3 & 70.6 & \textbf{62.6} & 63.4 & \textbf{74.5} & \textbf{86.3} & 12.6 & \textbf{9.9} & 28.3 & \textbf{43.1} & \textbf{35.3} & 42.1 & \textbf{47.6} \\
\textbf{CurveTQ}$_{\mathrm{no\,LUT}}$ 2.29 & 62.1 & 32.0 & 53.9 & 30.3 & 21.5 & 63.2 & 54.1 & 65.3 & 60.4 & 64.0 & 73.5 & 84.0 & 6.7 & 6.1 & \textbf{31.2} & 32.0 & 31.8 & 40.6 & 45.1 \\
\textbf{CurveTQ}$_{\mathrm{fp16}}$ 2.47 & 63.8 & 34.2 & 57.2 & 29.8 & 21.2 & 69.5 & 59.3 & 73.8 & 62.9 & 64.2 & 74.8 & 85.2 & 11.5 & 10.3 & 27.0 & 41.0 & 34.5 & 42.2 & 47.9 \\
QTIP 2.00$^{+}$ & 59.4 & 25.9 & 54.1 & 27.8 & 16.3 & 39.5 & 52.2 & 69.3 & 58.9 & 62.5 & 73.0 & 84.2 & 10.8 & 7.0 & 15.2 & 34.9 & 25.9 & 39.8 & 42.0 \\
Proteus 2.00$^{+}$ & 60.4 & 25.4 & \textbf{56.3} & 27.8 & 15.0 & 55.8 & 52.4 & 66.2 & 59.0 & 62.9 & 73.9 & 85.2 & \textbf{16.2} & 5.0 & 22.9 & 33.1 & 25.6 & 36.2 & 43.3 \\
QTIP$_{+\mathrm{g64}}$ 2.25$^{+}$ & 59.4 & 25.6 & 47.9 & 28.3 & 16.3 & 47.5 & 53.5 & 62.3 & 58.8 & 63.7 & 72.7 & 83.5 & 11.3 & 6.7 & 22.2 & 30.9 & 25.8 & 37.6 & 41.9 \\
Proteus$_{+\mathrm{g64}}$ 2.25$^{+}$ & 59.9 & 29.0 & 53.6 & 27.3 & 18.5 & 30.7 & 54.2 & 67.4 & 61.0 & 64.7 & 72.5 & 83.6 & 5.9 & 5.0 & 20.4 & 35.7 & 29.2 & 40.9 & 42.2 \\
HIGGS 2.25 & 46.6 & 15.8 & 47.2 & 25.8 & 9.1 & 21.2 & 42.7 & 59.1 & 49.0 & 59.3 & 69.1 & 81.8 & 10.1 & 3.7 & 8.7 & 24.2 & 16.1 & 27.5 & 34.3 \\
GPTQ 2.25 & 24.0 & 2.7 & 49.9 & 23.7 & 1.5 & 1.5 & 23.6 & 30.8 & 37.4 & 52.7 & 55.6 & 42.9 & 0.0 & 1.4 & 0.0 & 10.2 & 2.8 & 25.6 & 21.5 \\
QTIP$_{+\mathrm{start}}$ 2.19$^{+}$ & 62.8 & 28.3 & 54.9 & 28.3 & 17.3 & 58.5 & \textbf{57.2} & 69.8 & 60.4 & \textbf{66.1} & 73.3 & 86.1 & 9.3 & 8.2 & 24.4 & 40.5 & 28.8 & 40.7 & 45.3 \\
Proteus$_{+\mathrm{start}}$ 2.22$^{+}$ & 63.2 & 30.8 & 54.1 & \textbf{33.3} & 21.7 & 64.2 & 54.8 & \textbf{75.0} & 60.8 & 65.1 & 74.4 & 84.3 & 7.5 & 9.1 & 24.2 & 34.2 & 30.5 & \textbf{42.6} & 46.1 \\
\midrule
\multicolumn{20}{@{}l}{\emph{3-bit tier}} \\
\textbf{CurveTQ} 3.29 & \textbf{70.8} & \textbf{52.2} & 58.2 & 32.8 & 29.1 & 85.8 & 62.5 & 77.4 & 67.7 & 65.7 & 76.7 & 86.7 & 43.0 & \textbf{38.0} & 48.4 & 54.5 & \textbf{53.0} & \textbf{51.9} & \textbf{58.6} \\
\textbf{CurveTQ}$_{\mathrm{no\,LUT}}$ 3.29 & 70.5 & 52.0 & 57.5 & \textbf{34.9} & 30.9 & 85.9 & 62.6 & 77.6 & 68.0 & \textbf{67.6} & 76.0 & 86.6 & 42.0 & 35.1 & 46.1 & 51.8 & 52.2 & 51.7 & 58.3 \\
\textbf{CurveTQ}$_{\mathrm{fp16}}$ 3.47 & 71.2 & 51.9 & 57.9 & 35.4 & 31.5 & 84.6 & 63.5 & 77.6 & 67.6 & 65.9 & 76.2 & 86.5 & 48.2 & 37.9 & 45.9 & 53.2 & 52.0 & 52.4 & 58.9 \\
QTIP 3.00$^{+}$ & 70.0 & 48.8 & 56.9 & 30.3 & 29.0 & 83.1 & 62.5 & 79.0 & 67.1 & 67.0 & 76.0 & 86.2 & 40.7 & 30.6 & 41.4 & 55.5 & 48.7 & 50.1 & 56.8 \\
Proteus 3.00$^{+}$ & 69.5 & 48.9 & 57.6 & \textbf{34.9} & 29.2 & 85.1 & 61.7 & 77.2 & 66.8 & 65.2 & 75.9 & 86.1 & 41.3 & 35.0 & 50.2 & 50.1 & 49.7 & 50.9 & 57.5 \\
QTIP$_{+\mathrm{g64}}$ 3.25$^{+}$ & 69.7 & 48.9 & 58.7 & 33.8 & 28.2 & 84.0 & \textbf{63.5} & 78.0 & 67.0 & 65.8 & \textbf{77.0} & 83.6 & 32.9 & 30.4 & 47.7 & 53.4 & 49.3 & 51.0 & 56.8 \\
Proteus$_{+\mathrm{g64}}$ 3.25$^{+}$ & 70.0 & 51.1 & 59.1 & 31.3 & \textbf{32.9} & 78.7 & 62.9 & \textbf{79.7} & \textbf{68.6} & 66.1 & 76.4 & 85.7 & 41.4 & 37.4 & 46.1 & 53.2 & 51.1 & \textbf{51.9} & 58.0 \\
HIGGS 3.25 & 68.5 & 0.5 & 55.8 & 31.8 & 30.8 & 80.4 & 60.9 & 74.5 & 65.3 & 66.7 & 75.1 & 84.9 & 30.0 & 24.5 & 38.0 & 46.0 & 45.4 & 48.4 & 51.5 \\
NF 3.25 & 66.3 & 43.4 & 57.3 & 30.8 & 22.8 & 74.5 & 60.2 & 77.3 & 66.5 & 66.4 & \textbf{77.0} & 84.8 & 22.1 & 24.3 & 36.4 & 46.6 & 44.0 & 46.9 & 52.6 \\
GPTQ 3.25 & 63.1 & 32.2 & 55.1 & 27.8 & 22.6 & 61.1 & 56.7 & 70.0 & 64.3 & 66.5 & 74.6 & 84.7 & 7.3 & 5.2 & 27.2 & 40.9 & 32.5 & 40.9 & 46.2 \\
QTIP$_{+\mathrm{start}}$ 3.16$^{+}$ & 70.5 & 51.2 & 58.0 & 30.8 & 29.9 & 83.5 & 63.1 & 75.3 & 67.0 & 67.3 & 75.3 & \textbf{86.9} & 47.4 & 33.9 & \textbf{51.5} & 53.4 & 51.5 & 51.0 & 58.2 \\
Proteus$_{+\mathrm{start}}$ 3.20$^{+}$ & 69.8 & 51.5 & \textbf{60.0} & 29.8 & 32.1 & \textbf{86.5} & 63.0 & 74.5 & 67.8 & 65.8 & 76.3 & 85.4 & \textbf{49.3} & 33.2 & 47.1 & \textbf{56.4} & 52.0 & \textbf{51.9} & 58.5 \\
\midrule
\multicolumn{20}{@{}l}{\emph{4-bit tier}} \\
\textbf{CurveTQ} 4.29 & 72.1 & 55.3 & 59.5 & 32.8 & 30.8 & 87.6 & 65.6 & 78.9 & 69.3 & 67.1 & 77.0 & 86.6 & 51.6 & 40.1 & 53.2 & 53.8 & 55.3 & 53.6 & 60.6 \\
\textbf{CurveTQ}$_{\mathrm{no\,LUT}}$ 4.29 & 72.2 & \textbf{55.4} & 59.6 & 36.4 & 33.5 & 88.0 & 65.2 & 79.2 & 68.8 & \textbf{67.2} & 77.3 & 86.4 & 52.2 & 38.8 & 51.7 & 55.3 & \textbf{55.9} & 53.8 & 60.9 \\
\textbf{CurveTQ}$_{\mathrm{fp16}}$ 4.47 & 72.3 & 55.8 & 59.9 & 35.4 & 32.2 & 88.9 & 64.8 & 78.7 & 69.3 & 66.8 & 76.9 & 86.8 & 53.7 & 41.1 & 51.9 & 54.0 & 56.2 & 53.5 & 61.0 \\
QTIP 4.00$^{+}$ & 71.8 & 54.6 & 59.6 & 34.9 & 34.6 & 87.4 & 64.8 & 79.1 & 69.1 & 66.1 & 76.9 & 86.4 & 52.9 & 41.1 & 52.9 & 55.1 & 54.8 & 53.5 & 60.9 \\
Proteus 4.00$^{+}$ & 72.0 & 55.2 & 59.2 & \textbf{36.9} & 33.8 & 87.9 & 65.4 & 78.5 & 69.0 & 65.8 & 76.7 & 86.3 & 50.8 & 41.8 & 51.1 & 52.5 & 55.7 & 53.3 & 60.6 \\
QTIP$_{+\mathrm{g64}}$ 4.25$^{+}$ & 71.9 & 54.9 & 59.9 & 31.8 & 33.6 & 87.5 & 64.5 & 79.1 & 68.9 & 66.8 & 76.7 & 86.3 & 48.2 & 37.1 & 52.2 & 54.9 & 55.2 & 53.3 & 60.2 \\
Proteus$_{+\mathrm{g64}}$ 4.25$^{+}$ & 72.1 & 54.9 & 59.5 & 33.3 & 33.6 & 88.0 & 64.5 & 78.6 & \textbf{69.4} & 67.1 & 76.7 & 86.8 & 52.9 & 42.4 & 53.0 & 52.7 & 55.3 & 53.5 & 60.8 \\
HIGGS 4.25 & 71.5 & 54.6 & \textbf{60.1} & 35.9 & \textbf{36.0} & 78.5 & 64.1 & 76.7 & 68.4 & 67.0 & 76.8 & 86.2 & \textbf{53.6} & 37.4 & 53.2 & 53.8 & 55.1 & 53.3 & 60.1 \\
NF 4.25 & 71.4 & 53.9 & 59.7 & 36.4 & 31.0 & 83.0 & 65.6 & 78.9 & 69.2 & 66.1 & \textbf{77.5} & 85.6 & 52.0 & 37.2 & \textbf{54.2} & \textbf{55.8} & 54.2 & 52.7 & 60.2 \\
GPTQ 4.25 & 70.6 & 51.7 & 58.5 & 31.8 & 27.8 & 85.0 & 62.3 & 78.7 & 67.7 & 66.3 & 76.8 & \textbf{87.1} & 45.4 & 38.3 & 49.0 & 47.5 & 51.7 & 51.6 & 58.2 \\
QTIP$_{+\mathrm{start}}$ 4.13$^{+}$ & 71.6 & 53.9 & 58.8 & 33.3 & 33.8 & 87.0 & \textbf{65.7} & \textbf{80.2} & 68.6 & 66.7 & 77.1 & 86.5 & 52.2 & 37.3 & 53.5 & 52.9 & 54.1 & 53.3 & 60.4 \\
Proteus$_{+\mathrm{start}}$ 4.19$^{+}$ & \textbf{72.6} & 55.0 & 59.4 & 33.3 & 35.6 & \textbf{88.1} & 64.5 & 77.8 & 69.1 & 67.1 & 76.7 & 86.4 & 50.2 & \textbf{44.1} & 52.7 & 55.6 & 55.2 & \textbf{53.9} & \textbf{61.0} \\
\bottomrule
\end{tabular}

%% file: tab_full_q4_vl.tex
\begin{tabular}{@{}lrrrrrrrrrrrrrrrr@{\hspace{4pt}}r@{}}
\toprule
Method (bpw) & \rotatebox{60}{\,MMMU} & \rotatebox{60}{\,MMMU-Pro} & \rotatebox{60}{\,MMStar} & \rotatebox{60}{\,RealWorldQA} & \rotatebox{60}{\,SimpleVQA} & \rotatebox{60}{\,AI2D} & \rotatebox{60}{\,DocVQA} & \rotatebox{60}{\,InfoVQA} & \rotatebox{60}{\,MMLongB-Doc} & \rotatebox{60}{\,BLINK} & \rotatebox{60}{\,MuirBench} & \rotatebox{60}{\,VSI-Bench} & \rotatebox{60}{\,HRB-4K} & \rotatebox{60}{\,HRB-8K} & \rotatebox{60}{\,V$^*$-attr} & \rotatebox{60}{\,V$^*$-pos} & \rotatebox{60}{\,\textbf{Mean}} \\
\midrule
fp16 & 53.4 & 37.6 & 63.0 & 71.8 & 26.1 & 81.9 & 94.8 & 79.7 & 22.1 & 64.4 & 52.8 & 53.6 & 75.7 & 72.8 & 85.7 & 76.3 & \textbf{63.2} \\
\midrule
\multicolumn{18}{@{}l}{\emph{2-bit tier}} \\
\textbf{CurveTQ} 2.29 & 44.9 & 28.7 & 54.5 & 66.8 & 13.0 & \textbf{75.2} & \textbf{93.0} & \textbf{72.0} & 22.7 & \textbf{57.8} & \textbf{47.5} & 43.2 & 74.1 & 67.9 & 76.5 & 67.1 & 56.6 \\
\textbf{CurveTQ}$_{\mathrm{no\,LUT}}$ 2.29 & 44.0 & \textbf{32.3} & 55.1 & 66.9 & 12.8 & 74.4 & 92.2 & 71.1 & 21.1 & 56.0 & 46.6 & \textbf{44.9} & \textbf{74.9} & \textbf{69.5} & 80.9 & 67.1 & \textbf{56.9} \\
\textbf{CurveTQ}$_{\mathrm{fp16}}$ 2.47 & 44.7 & 32.7 & 55.2 & 66.7 & 11.7 & 74.4 & 92.9 & 71.5 & 20.5 & 60.3 & 50.1 & 39.0 & 74.0 & 70.1 & 82.6 & 65.8 & 57.0 \\
QTIP 2.00$^{+}$ & 41.8 & 26.4 & 51.8 & 62.9 & 10.3 & 70.9 & 88.8 & 64.5 & 22.8 & 55.6 & 37.9 & 32.7 & 68.4 & 62.8 & 71.3 & 67.1 & 52.2 \\
Proteus 2.00$^{+}$ & 42.2 & 25.8 & 51.6 & 64.0 & 10.7 & 70.7 & 89.3 & 65.0 & 22.9 & 55.2 & 42.4 & 34.4 & 71.1 & 66.5 & 77.4 & 61.8 & 53.2 \\
QTIP$_{+\mathrm{g64}}$ 2.25$^{+}$ & 44.7 & 28.3 & 52.8 & 64.8 & 13.3 & 72.5 & 90.6 & 68.7 & 22.9 & 57.1 & 37.7 & 38.7 & 72.0 & 66.6 & 75.7 & 67.1 & 54.6 \\
Proteus$_{+\mathrm{g64}}$ 2.25$^{+}$ & 43.6 & 28.8 & 51.8 & 63.5 & 12.9 & 70.1 & 90.5 & 69.0 & 18.4 & 49.2 & 38.4 & 36.7 & 69.2 & 63.9 & 75.7 & 67.1 & 53.0 \\
HIGGS 2.25 & 36.8 & 21.2 & 41.0 & 56.3 & 2.2 & 55.3 & 66.5 & 45.5 & 0.8 & 39.2 & 14.2 & 23.4 & 63.2 & 54.4 & 56.5 & 56.6 & 39.6 \\
GPTQ 2.25 & 25.7 & 14.2 & 21.8 & 14.2 & 0.0 & 17.6 & 10.9 & 15.9 & 1.0 & 6.9 & 1.3 & 0.8 & 7.9 & 10.2 & 17.4 & 28.9 & 12.2 \\
QTIP$_{+\mathrm{start}}$ 2.19$^{+}$ & \textbf{45.0} & 26.4 & 52.6 & 64.8 & 12.8 & 73.6 & 91.4 & 68.9 & \textbf{23.2} & 56.2 & 41.7 & 37.2 & 71.0 & 66.8 & 78.3 & 68.4 & 54.9 \\
Proteus$_{+\mathrm{start}}$ 2.22$^{+}$ & 43.4 & 28.8 & \textbf{55.6} & \textbf{68.0} & \textbf{13.5} & 74.3 & 91.9 & 71.3 & 22.1 & 55.8 & 42.0 & 42.2 & 73.6 & 67.9 & \textbf{81.7} & \textbf{71.0} & 56.5 \\
\midrule
\multicolumn{18}{@{}l}{\emph{3-bit tier}} \\
\textbf{CurveTQ} 3.29 & 51.6 & 33.7 & 61.6 & 71.5 & 22.1 & \textbf{81.1} & 94.4 & 78.3 & \textbf{22.9} & \textbf{62.4} & 47.1 & 51.1 & 74.8 & 71.6 & 84.3 & 76.3 & 61.6 \\
\textbf{CurveTQ}$_{\mathrm{no\,LUT}}$ 3.29 & 51.4 & 35.0 & 61.3 & \textbf{72.3} & 23.2 & 79.8 & 94.5 & \textbf{79.2} & 22.0 & 62.3 & 51.5 & \textbf{51.7} & \textbf{77.0} & 74.0 & 82.6 & 81.6 & \textbf{62.5} \\
\textbf{CurveTQ}$_{\mathrm{fp16}}$ 3.47 & 51.6 & 33.5 & 62.6 & 70.5 & 22.6 & 80.9 & 94.4 & 78.8 & 22.4 & 62.8 & 48.6 & 52.0 & 75.6 & 71.0 & 85.2 & 73.7 & 61.6 \\
QTIP 3.00$^{+}$ & 50.7 & 34.7 & \textbf{62.4} & 71.0 & 21.9 & 80.0 & 93.7 & 77.9 & 22.4 & 61.6 & 53.1 & 47.4 & 74.6 & 72.2 & 84.3 & 72.4 & 61.3 \\
Proteus 3.00$^{+}$ & 51.1 & 35.0 & 59.4 & 70.1 & 20.9 & 79.6 & \textbf{94.6} & 78.0 & 22.6 & \textbf{62.4} & 49.4 & 47.4 & 74.1 & 71.2 & 80.0 & 76.3 & 60.8 \\
QTIP$_{+\mathrm{g64}}$ 3.25$^{+}$ & 48.2 & 33.6 & 59.4 & 71.1 & 21.7 & 80.3 & 94.2 & 77.3 & 22.6 & 61.1 & \textbf{54.9} & 51.0 & 73.9 & 70.1 & 84.3 & 72.4 & 61.0 \\
Proteus$_{+\mathrm{g64}}$ 3.25$^{+}$ & 51.4 & 33.8 & 58.9 & 70.6 & \textbf{24.3} & 79.2 & 94.4 & 77.9 & 22.1 & 61.9 & 51.6 & 50.9 & 75.2 & 70.5 & 82.6 & \textbf{82.9} & 61.8 \\
HIGGS 3.25 & 48.7 & 31.7 & 58.8 & 69.7 & 19.3 & 79.1 & 93.8 & 76.5 & 20.5 & 59.5 & 47.0 & 45.8 & 73.1 & 68.4 & 81.7 & 75.0 & 59.3 \\
NF 3.25 & 47.1 & 30.5 & 57.7 & 66.8 & 20.2 & 75.5 & 93.4 & 73.9 & 22.2 & 57.2 & 50.1 & 46.5 & 70.5 & 66.0 & 78.3 & 72.4 & 58.0 \\
GPTQ 3.25 & 42.7 & 28.1 & 54.6 & 66.5 & 18.1 & 74.7 & 91.7 & 70.7 & 21.8 & 56.9 & 42.6 & 44.8 & 71.0 & 66.5 & 77.4 & 67.1 & 56.0 \\
QTIP$_{+\mathrm{start}}$ 3.16$^{+}$ & \textbf{51.8} & \textbf{36.4} & 59.9 & 69.9 & 20.7 & 79.2 & 94.1 & 77.0 & 22.3 & 61.1 & 49.3 & 51.1 & 76.0 & \textbf{74.5} & 82.6 & 73.7 & 61.2 \\
Proteus$_{+\mathrm{start}}$ 3.20$^{+}$ & 49.7 & 32.1 & 60.2 & 70.6 & 22.5 & 79.9 & \textbf{94.6} & 78.3 & 22.5 & 61.8 & 50.4 & 51.4 & 75.5 & 72.1 & \textbf{85.2} & 81.6 & 61.8 \\
\midrule
\multicolumn{18}{@{}l}{\emph{4-bit tier}} \\
\textbf{CurveTQ} 4.29 & 52.8 & 36.6 & 62.7 & 72.3 & \textbf{25.4} & \textbf{81.5} & 94.6 & 79.2 & 22.4 & 63.6 & 51.6 & 53.0 & 75.6 & 72.6 & 86.1 & 80.3 & 63.1 \\
\textbf{CurveTQ}$_{\mathrm{no\,LUT}}$ 4.29 & 48.4 & 34.0 & 62.7 & 71.5 & 24.1 & 81.4 & 94.5 & 78.7 & 22.5 & 63.9 & \textbf{52.9} & 52.7 & 75.8 & 72.5 & 85.2 & 76.3 & 62.3 \\
\textbf{CurveTQ}$_{\mathrm{fp16}}$ 4.47 & 52.1 & 36.2 & 62.8 & 71.8 & 24.6 & 81.3 & 94.8 & 79.5 & 22.2 & 64.1 & 50.9 & 52.8 & 75.9 & 72.1 & 82.6 & 75.0 & 62.4 \\
QTIP 4.00$^{+}$ & 52.6 & 36.5 & 63.0 & 71.4 & 24.6 & 81.1 & 94.8 & 79.2 & 22.3 & 62.9 & 52.6 & 52.6 & 76.1 & 72.5 & 80.9 & 73.7 & 62.3 \\
Proteus 4.00$^{+}$ & \textbf{54.1} & \textbf{38.6} & 62.3 & 72.4 & 24.7 & 81.3 & \textbf{94.9} & 79.3 & 22.4 & 63.3 & 52.0 & 53.0 & 76.2 & 72.2 & 84.3 & 76.3 & 63.0 \\
QTIP$_{+\mathrm{g64}}$ 4.25$^{+}$ & 52.4 & 37.1 & \textbf{63.5} & 70.6 & 25.0 & 81.1 & 94.5 & 79.3 & \textbf{22.7} & 63.7 & 51.9 & 52.8 & 76.5 & 72.0 & \textbf{87.0} & \textbf{81.6} & \textbf{63.2} \\
Proteus$_{+\mathrm{g64}}$ 4.25$^{+}$ & 52.2 & 37.5 & 62.2 & 70.8 & 24.5 & 81.3 & 94.8 & 79.1 & 22.0 & 63.4 & 51.7 & \textbf{54.0} & \textbf{76.6} & \textbf{73.2} & \textbf{87.0} & 80.3 & \textbf{63.2} \\
HIGGS 4.25 & 51.4 & 34.7 & 62.1 & 71.2 & 24.1 & 81.2 & 94.8 & 79.4 & 22.3 & 64.0 & 51.9 & 52.8 & 74.8 & 71.9 & 84.3 & 79.0 & 62.5 \\
NF 4.25 & 51.7 & 35.1 & 62.9 & 70.7 & 24.9 & 80.8 & 94.8 & 79.0 & 21.9 & 63.1 & 49.1 & 52.1 & 76.0 & \textbf{73.2} & 86.1 & \textbf{81.6} & 62.7 \\
GPTQ 4.25 & 48.7 & 33.2 & 60.3 & 70.2 & 23.3 & 80.5 & 94.2 & 77.6 & 21.7 & 62.1 & 48.2 & 51.2 & 73.8 & 70.2 & 85.2 & 75.0 & 61.0 \\
QTIP$_{+\mathrm{start}}$ 4.13$^{+}$ & 52.3 & 35.5 & 62.1 & \textbf{72.9} & 24.6 & 81.3 & 94.5 & 79.2 & 22.4 & \textbf{64.7} & 50.7 & 51.9 & 74.9 & 72.0 & 84.3 & \textbf{81.6} & 62.8 \\
Proteus$_{+\mathrm{start}}$ 4.19$^{+}$ & 51.6 & 37.7 & 62.9 & 70.3 & 24.9 & 81.3 & 94.8 & \textbf{79.7} & 22.0 & 64.3 & 51.2 & 53.8 & 75.1 & 71.8 & 86.1 & 75.0 & 62.7 \\
\bottomrule
\end{tabular}

%% file: tab_full_q32_vl.tex
\begin{tabular}{@{}lrrrrrrrrrrrrrrrr@{\hspace{4pt}}r@{}}
\toprule
Method (bpw) & \rotatebox{60}{\,MMMU} & \rotatebox{60}{\,MMMU-Pro} & \rotatebox{60}{\,MMStar} & \rotatebox{60}{\,RealWorldQA} & \rotatebox{60}{\,SimpleVQA} & \rotatebox{60}{\,AI2D} & \rotatebox{60}{\,DocVQA} & \rotatebox{60}{\,InfoVQA} & \rotatebox{60}{\,MMLongB-Doc} & \rotatebox{60}{\,BLINK} & \rotatebox{60}{\,MuirBench} & \rotatebox{60}{\,VSI-Bench} & \rotatebox{60}{\,HRB-4K} & \rotatebox{60}{\,HRB-8K} & \rotatebox{60}{\,V$^*$-attr} & \rotatebox{60}{\,V$^*$-pos} & \rotatebox{60}{\,\textbf{Mean}} \\
\midrule
fp16 & 61.8 & 47.3 & 73.6 & 77.5 & 26.6 & 86.9 & 96.0 & 87.6 & 22.3 & 67.5 & 59.5 & 60.9 & 79.4 & 77.0 & 86.1 & 89.5 & \textbf{68.7} \\
\midrule
\multicolumn{18}{@{}l}{\emph{2-bit tier}} \\
\textbf{CurveTQ} 2.29 & 56.9 & 41.9 & 68.4 & 73.7 & 20.8 & 84.5 & 95.5 & 84.0 & \textbf{22.4} & 66.2 & 57.6 & 58.6 & 77.4 & 75.1 & 83.5 & 80.3 & 65.4 \\
QTIP 2.00$^{+}$ & \textbf{57.6} & \textbf{43.2} & 66.2 & 72.9 & 19.6 & 84.3 & 95.3 & 83.6 & \textbf{22.4} & 63.2 & 51.1 & 57.9 & 77.1 & 74.0 & 84.3 & \textbf{85.5} & 64.9 \\
Proteus 2.00$^{+}$ & 55.8 & 42.3 & 67.5 & 73.5 & 21.0 & 83.9 & \textbf{96.0} & 83.4 & \textbf{22.4} & 64.0 & 52.4 & 54.1 & \textbf{78.1} & 74.1 & \textbf{87.8} & 84.2 & 65.0 \\
QTIP$_{+\mathrm{g64}}$ 2.25$^{+}$ & \textbf{57.6} & 41.3 & 68.0 & 74.2 & 19.8 & \textbf{84.8} & 95.6 & \textbf{84.1} & 22.2 & 63.2 & 54.9 & 58.1 & 75.4 & 73.1 & 86.1 & 82.9 & 65.1 \\
Proteus$_{+\mathrm{g64}}$ 2.25$^{+}$ & 55.7 & 42.7 & 67.5 & \textbf{75.0} & 20.3 & 84.1 & 95.7 & \textbf{84.1} & 22.3 & 65.2 & 54.8 & \textbf{59.5} & 77.1 & \textbf{75.2} & 87.0 & 81.6 & 65.5 \\
QTIP$_{+\mathrm{start}}$ 2.19$^{+}$ & 56.4 & 40.5 & 67.5 & 73.1 & 20.2 & 84.5 & 95.7 & 83.5 & \textbf{22.4} & \textbf{66.8} & \textbf{57.8} & 58.9 & 77.0 & 73.9 & 87.0 & 84.2 & \textbf{65.6} \\
Proteus$_{+\mathrm{start}}$ 2.22$^{+}$ & 56.7 & 39.9 & \textbf{68.8} & 73.9 & \textbf{21.9} & \textbf{84.8} & 95.8 & 84.0 & \textbf{22.4} & 63.5 & 54.9 & 58.6 & 76.2 & 73.4 & 82.6 & 84.2 & 65.1 \\
\midrule
\multicolumn{18}{@{}l}{\emph{3-bit tier}} \\
\textbf{CurveTQ} 3.29 & 59.9 & 44.2 & 72.1 & \textbf{79.0} & 25.8 & 86.7 & 96.2 & 87.2 & 22.3 & \textbf{68.2} & \textbf{61.0} & \textbf{61.2} & 78.5 & 76.8 & \textbf{88.7} & 86.8 & \textbf{68.4} \\
QTIP 3.00$^{+}$ & 59.4 & 44.6 & \textbf{73.0} & 76.5 & 24.9 & \textbf{87.0} & \textbf{96.3} & 86.8 & 22.3 & 66.3 & 60.3 & 60.1 & 78.4 & 76.1 & 86.1 & 86.8 & 67.8 \\
Proteus 3.00$^{+}$ & 60.3 & 45.8 & 72.3 & 76.5 & 25.0 & 86.1 & 95.8 & 86.4 & 22.3 & 65.9 & 57.1 & 60.5 & 78.8 & 77.5 & 87.0 & 85.5 & 67.7 \\
QTIP$_{+\mathrm{g64}}$ 3.25$^{+}$ & 59.2 & 46.7 & 72.3 & 75.6 & 24.7 & 86.4 & 96.2 & 86.6 & 22.5 & 66.9 & 57.5 & 60.9 & 79.2 & 76.9 & 86.1 & 86.8 & 67.8 \\
Proteus$_{+\mathrm{g64}}$ 3.25$^{+}$ & \textbf{60.9} & \textbf{46.8} & 71.4 & 78.7 & 26.1 & 86.3 & 95.9 & 86.3 & \textbf{22.6} & 68.1 & 56.6 & 59.6 & 77.0 & 76.5 & 86.1 & \textbf{88.2} & 67.9 \\
QTIP$_{+\mathrm{start}}$ 3.16$^{+}$ & 59.4 & 44.4 & 72.4 & 76.5 & 25.4 & 86.1 & 96.1 & 87.0 & 22.4 & 66.5 & 58.8 & 60.3 & 78.9 & 77.4 & 87.0 & 86.8 & 67.8 \\
Proteus$_{+\mathrm{start}}$ 3.20$^{+}$ & 60.6 & 44.9 & 71.5 & 77.7 & \textbf{26.4} & 86.2 & 96.0 & \textbf{87.3} & 22.4 & 67.6 & 58.2 & 59.5 & \textbf{79.6} & \textbf{77.8} & 86.1 & 86.8 & 68.0 \\
\bottomrule
\end{tabular}

%% file: tab_full_q32_text.tex
\begin{tabular}{@{}lrrrrrrrrrrrrrrrrrr@{\hspace{4pt}}r@{}}
\toprule
Method (bpw) & \rotatebox{60}{\,MMLU} & \rotatebox{60}{\,MMLU-Pro} & \rotatebox{60}{\,TruthfulQA} & \rotatebox{60}{\,GPQA} & \rotatebox{60}{\,LB-Lang} & \rotatebox{60}{\,GSM8K} & \rotatebox{60}{\,ARC-c} & \rotatebox{60}{\,ARC-e} & \rotatebox{60}{\,HellaSwag} & \rotatebox{60}{\,WinoGr.} & \rotatebox{60}{\,PIQA} & \rotatebox{60}{\,BoolQ} & \rotatebox{60}{\,LB-Reason} & \rotatebox{60}{\,LB-Math} & \rotatebox{60}{\,LB-DA} & \rotatebox{60}{\,IFEval} & \rotatebox{60}{\,MMLU-ProX} & \rotatebox{60}{\,INCLUDE} & \rotatebox{60}{\,\textbf{Mean}} \\
\midrule
fp16 & 83.7 & 74.0 & 60.6 & 40.9 & 49.8 & 88.5 & 74.7 & 84.8 & 84.0 & 78.2 & 82.0 & 90.7 & 72.0 & 52.7 & 64.2 & 66.4 & 74.1 & 66.7 & \textbf{71.5} \\
\midrule
\multicolumn{20}{@{}l}{\emph{2-bit tier}} \\
\textbf{CurveTQ} 2.29 & 80.3 & 66.9 & 59.1 & 37.9 & \textbf{47.6} & 86.3 & 70.8 & 83.8 & \textbf{80.5} & \textbf{77.3} & 81.0 & 90.4 & 58.8 & 40.4 & \textbf{58.9} & 56.4 & 66.9 & 63.7 & \textbf{67.1} \\
QTIP 2.00$^{+}$ & 79.5 & 64.2 & 56.2 & 41.9 & 40.9 & 88.7 & 69.1 & \textbf{85.8} & 79.6 & 77.1 & \textbf{81.6} & 90.0 & 57.6 & 37.9 & 56.3 & 52.1 & 64.1 & 62.8 & 65.9 \\
Proteus 2.00$^{+}$ & 79.8 & 64.4 & 58.4 & 39.9 & 36.9 & 87.7 & 69.6 & 83.2 & 79.4 & 75.8 & 80.8 & 90.1 & 57.0 & 36.2 & 52.0 & 52.5 & 64.3 & 62.7 & 65.1 \\
QTIP$_{+\mathrm{g64}}$ 2.25$^{+}$ & 79.3 & 65.8 & 58.5 & 38.4 & 40.8 & 77.4 & 69.9 & 83.7 & 79.7 & 76.9 & 80.5 & 89.8 & 54.7 & 41.1 & 48.8 & 54.9 & 65.9 & 63.0 & 64.9 \\
Proteus$_{+\mathrm{g64}}$ 2.25$^{+}$ & 80.2 & 64.1 & 58.7 & 41.4 & 40.5 & \textbf{89.7} & 70.1 & 85.4 & 80.2 & 76.1 & 80.6 & \textbf{90.6} & 56.7 & 38.2 & 56.9 & 53.0 & 64.3 & 63.1 & 66.1 \\
QTIP$_{+\mathrm{start}}$ 2.19$^{+}$ & 80.2 & 66.4 & \textbf{61.3} & \textbf{43.9} & 45.0 & 82.0 & 69.0 & 82.6 & 80.3 & 77.0 & 80.8 & 90.0 & \textbf{61.6} & 37.5 & 52.3 & 56.2 & 66.6 & 63.7 & 66.5 \\
Proteus$_{+\mathrm{start}}$ 2.22$^{+}$ & \textbf{80.7} & \textbf{67.2} & 59.5 & 38.4 & 42.1 & 86.8 & \textbf{71.0} & 85.3 & 80.4 & 76.3 & 81.3 & 90.2 & 59.3 & \textbf{42.9} & 53.0 & \textbf{58.4} & \textbf{67.0} & \textbf{64.7} & 66.9 \\
\midrule
\multicolumn{20}{@{}l}{\emph{3-bit tier}} \\
\textbf{CurveTQ} 3.29 & \textbf{83.1} & 72.8 & 60.8 & 41.9 & 50.8 & 87.2 & 73.4 & \textbf{85.9} & \textbf{83.5} & 77.2 & 81.7 & \textbf{90.9} & 68.8 & 51.4 & 63.4 & 63.6 & \textbf{73.2} & 66.2 & \textbf{70.9} \\
QTIP 3.00$^{+}$ & 83.0 & 72.8 & 60.6 & \textbf{42.9} & 48.9 & 88.9 & 72.5 & 81.7 & 83.3 & \textbf{78.8} & 81.9 & 90.8 & 71.1 & 48.3 & 61.6 & 62.5 & 72.7 & 66.6 & 70.5 \\
Proteus 3.00$^{+}$ & 82.3 & 72.4 & 60.6 & 39.9 & 48.6 & \textbf{89.9} & 71.7 & 84.6 & 83.0 & 78.5 & \textbf{82.2} & 90.7 & 70.7 & \textbf{53.7} & 60.3 & \textbf{65.6} & 72.1 & 67.0 & 70.8 \\
QTIP$_{+\mathrm{g64}}$ 3.25$^{+}$ & 82.7 & 72.6 & \textbf{61.8} & 41.9 & 49.7 & 88.6 & 73.4 & 84.8 & \textbf{83.5} & 78.1 & 81.3 & 90.5 & \textbf{71.3} & 51.9 & 61.6 & 62.7 & 72.6 & \textbf{67.2} & \textbf{70.9} \\
Proteus$_{+\mathrm{g64}}$ 3.25$^{+}$ & 82.8 & \textbf{73.0} & 60.5 & 39.4 & 50.1 & 89.2 & 73.8 & 85.7 & 83.3 & 78.4 & 82.0 & 90.7 & 70.2 & 49.4 & 62.1 & 63.6 & 73.0 & 66.6 & 70.8 \\
QTIP$_{+\mathrm{start}}$ 3.16$^{+}$ & 82.8 & 72.4 & 59.1 & \textbf{42.9} & \textbf{51.1} & 88.2 & \textbf{74.1} & 82.6 & 83.1 & 77.5 & 81.0 & 90.6 & 67.3 & 52.8 & 60.5 & 64.5 & 72.6 & 66.9 & 70.6 \\
Proteus$_{+\mathrm{start}}$ 3.20$^{+}$ & 83.0 & 72.9 & 60.3 & 38.9 & 48.9 & 86.9 & 73.2 & 83.7 & 83.3 & 78.5 & 81.5 & 90.8 & 70.8 & 50.2 & \textbf{64.8} & 63.6 & 72.7 & 65.8 & 70.5 \\
\bottomrule
\end{tabular}

%% file: tab_full_q35_vl.tex
\begin{tabular}{@{}lrrrrrrrrrrrrrr@{\hspace{4pt}}r@{}}
\toprule
Method (bpw) & \rotatebox{60}{\,MMMU-Pro} & \rotatebox{60}{\,MMStar} & \rotatebox{60}{\,RealWorldQA} & \rotatebox{60}{\,SimpleVQA} & \rotatebox{60}{\,AI2D} & \rotatebox{60}{\,DocVQA} & \rotatebox{60}{\,InfoVQA} & \rotatebox{60}{\,MMLongB-Doc} & \rotatebox{60}{\,BLINK} & \rotatebox{60}{\,VSI-Bench} & \rotatebox{60}{\,HRB-4K} & \rotatebox{60}{\,HRB-8K} & \rotatebox{60}{\,V$^*$-attr} & \rotatebox{60}{\,V$^*$-pos} & \rotatebox{60}{\,\textbf{Mean}} \\
\midrule
fp16 16 & 45.3 & 71.3 & 75.7 & 31.6 & 88.0 & 95.8 & 86.7 & 22.3 & 69.6 & 60.8 & 79.5 & 75.9 & 86.1 & 84.2 & \textbf{69.5} \\
\midrule
\textbf{CurveTQ} 2.29 & \textbf{35.3} & \textbf{69.6} & \textbf{76.1} & \textbf{28.9} & \textbf{85.9} & \textbf{94.8} & \textbf{82.1} & 21.8 & \textbf{66.2} & \textbf{57.4} & \textbf{77.6} & \textbf{74.5} & 84.3 & \textbf{82.9} & \textbf{67.0} \\
QTIP 2.00$^{+}$ & 32.6 & 66.5 & 75.0 & 26.5 & 81.5 & 93.7 & 80.0 & 22.1 & 60.3 & 55.9 & 73.0 & 70.2 & \textbf{85.2} & 79.0 & \textbf{64.4} \\
Proteus 2.00$^{+}$ & 33.1 & 66.9 & 74.4 & 27.3 & 85.8 & 94.2 & 79.3 & \textbf{22.7} & 60.1 & 56.2 & 74.4 & 70.5 & 79.1 & 80.3 & \textbf{64.6} \\
QTIP$_{+\mathrm{g64}}$ 2.25$^{+}$ & 34.8 & 63.9 & 75.3 & 26.0 & 69.3 & 94.5 & 80.5 & 21.5 & 42.2 & 57.0 & 76.9 & 72.4 & 81.7 & \textbf{82.9} & \textbf{62.8} \\
Proteus$_{+\mathrm{g64}}$ 2.25$^{+}$ & 35.9 & 66.0 & 74.0 & 29.1 & 84.1 & 94.6 & 81.8 & 21.7 & 55.2 & 52.1 & 76.1 & 73.8 & 84.3 & 64.5 & 63.8 \\
QTIP$_{+\mathrm{start}}$ 2.19$^{+}$ & 38.8 & 71.9 & 73.9 & 30.1 & 85.5 & 93.9 & 80.5 & 22.3 & 65.6 & 57.8 & 75.5 & 73.8 & 86.1 & 77.6 & 66.7 \\
Proteus$_{+\mathrm{start}}$ 2.22$^{+}$ & 31.0 & 69.1 & 75.0 & 28.5 & 71.1 & 93.6 & 81.3 & 22.5 & 63.2 & 57.0 & 75.1 & 71.2 & 86.1 & 73.7 & 64.2 \\
\bottomrule
\end{tabular}

%% file: tab_full_q35_text.tex
\begin{tabular}{@{}lrrrrrrrrrrrrrrrr@{\hspace{4pt}}r@{}}
\toprule
Method (bpw) & \rotatebox{60}{\,MMLU} & \rotatebox{60}{\,MMLU-Pro} & \rotatebox{60}{\,GPQA} & \rotatebox{60}{\,LB-Lang} & \rotatebox{60}{\,GSM8K} & \rotatebox{60}{\,ARC-c} & \rotatebox{60}{\,HellaSwag} & \rotatebox{60}{\,WinoGr.} & \rotatebox{60}{\,PIQA} & \rotatebox{60}{\,BoolQ} & \rotatebox{60}{\,LB-Reason} & \rotatebox{60}{\,LB-Math} & \rotatebox{60}{\,LB-DA} & \rotatebox{60}{\,IFEval} & \rotatebox{60}{\,MMLU-ProX} & \rotatebox{60}{\,INCLUDE} & \rotatebox{60}{\,\textbf{Mean}} \\
\midrule
fp16 16 & 82.4 & 78.1 & 46.0 & 52.5 & 88.8 & 72.0 & 82.8 & 76.2 & 82.8 & 85.0 & 73.3 & 54.8 & 64.3 & 82.4 & 77.7 & 68.0 & \textbf{72.9} \\
\midrule
\textbf{CurveTQ} 2.29 & \textbf{81.9} & \textbf{73.5} & \textbf{45.0} & \textbf{41.6} & 88.9 & \textbf{70.4} & \textbf{81.2} & \textbf{75.2} & \textbf{82.4} & 88.0 & \textbf{65.6} & 42.3 & \textbf{53.8} & 80.0 & \textbf{72.9} & \textbf{64.3} & \textbf{69.2} \\
QTIP 2.00$^{+}$ & 80.8 & 72.4 & 39.4 & 36.0 & 87.4 & 69.8 & 79.8 & 73.9 & 81.2 & 87.5 & 57.8 & 39.9 & 49.8 & 76.7 & 71.8 & 62.5 & \textbf{66.7} \\
Proteus 2.00$^{+}$ & 81.0 & 72.9 & 40.4 & 37.1 & 87.5 & 69.5 & 80.2 & 74.5 & 81.9 & 88.6 & 55.6 & 40.9 & 53.3 & 77.1 & 72.8 & 63.2 & \textbf{67.3} \\
QTIP$_{+\mathrm{g64}}$ 2.25$^{+}$ & 81.0 & 72.5 & 41.4 & 35.9 & \textbf{89.5} & 70.0 & 80.1 & 73.3 & 81.0 & \textbf{89.9} & 57.3 & \textbf{43.1} & 50.4 & 79.7 & 71.3 & 62.6 & \textbf{67.4} \\
Proteus$_{+\mathrm{g64}}$ 2.25$^{+}$ & 80.7 & 71.1 & 41.9 & 33.5 & 87.8 & 69.1 & 80.7 & 75.6 & 81.1 & 87.2 & 49.7 & 36.5 & 49.5 & \textbf{80.4} & 70.9 & 63.6 & 66.2 \\
QTIP$_{+\mathrm{start}}$ 2.19$^{+}$ & 81.6 & 73.8 & 42.9 & 36.4 & 87.2 & 71.2 & 80.7 & 76.1 & 81.5 & 88.8 & 62.0 & 40.0 & 55.2 & 79.5 & 73.9 & 63.6 & 68.4 \\
Proteus$_{+\mathrm{start}}$ 2.22$^{+}$ & 81.2 & 73.8 & 47.5 & 40.7 & 88.4 & 68.5 & 81.3 & 74.6 & 81.2 & 83.1 & 65.6 & 43.8 & 54.9 & 78.2 & 73.5 & 64.1 & 68.8 \\
\bottomrule
\end{tabular}

%% file: tab_abl_full_ci.tex
\begin{tabular}{@{}l r r r r r l@{}}
\toprule
& & & \multicolumn{2}{c}{WikiText-2 ppl} & \multicolumn{2}{c}{Downstream (34 tasks)} \\
\cmidrule(lr){4-5}\cmidrule(lr){6-7}
Configuration & bpw & $\Delta$bpw & ppl & $\Delta$ & mean & $\Delta$ [95\% CI] \\
\midrule
\multicolumn{7}{@{}l}{\emph{Building up the codec (each row adds one component to the row above)}} \\
128-entry table, EM fit (start) & 2.469 &  & 13.32 &  & 50.4 &  \\
+ closed-form quantile fit & 2.469 & $+0.000$ & 13.05 & $-0.28$ & 50.8 & $+0.3$ [$-0.6$, $+1.3$] \\
+ int8 second-level scales & 2.345 & $-0.124$ & 13.46 & $+0.41$ & 50.9 & $+0.1$ [$-0.6$, $+1.0$] \\
+ edge-indexed emission & 2.345 & $+0.000$ & 12.19 & $-1.27$ & 51.9 & $+1.0$ [$-0.6$, $+2.4$] \\
+ int4 scales (flat grid) & 2.282 & $-0.062$ & 12.77 & $+0.58$ & 51.0 & $-0.8$ [$-1.9$, $+0.0$] \\
\rowcolor{oursbg}\textbf{+ factored int4 scales, per-module table (= \textsc{CurveTQ})} & 2.289 & $+0.006$ & 11.50 & $-1.26$ & 51.8 & $+0.8$ [$-0.0$, $+1.7$] \\
\midrule
\multicolumn{7}{@{}l}{\emph{Removing one component from \textsc{CurveTQ}}} \\
$-$ factored scales (fp16 scales) & 2.469 & $+0.180$ & 11.86 & $+0.35$ & 52.2 & $+0.4$ [$-0.3$, $+1.1$] \\
$-$ emission table (computed code) & 2.289 & $-0.000$ & 13.63 & $+2.13$ & 50.7 & $-1.2$ [$-2.2$, $-0.2$] \\
\midrule
\multicolumn{7}{@{}l}{\emph{Ablating the search (from the quantile-fit row above)}} \\
$-$ error feedback & 2.469 & $+0.000$ & 13.38 & $+0.34$ & 48.2 & $-2.5$ [$-4.3$, $-1.4$] \\
Euclidean branch metric instead of $1/d_i^2$ & 2.469 & $+0.000$ & 12.81 & $-0.24$ & 47.4 & $-3.3$ [$-4.5$, $-2.3$] \\
\midrule
\multicolumn{7}{@{}l}{\emph{Block termination (from \textsc{CurveTQ}; Section~\ref{sec:search})}} \\
chained trellis, one start state per row & 2.073 & $-0.216$ & 13.59 & $+2.09$ & 46.4 & $-5.5$ [$-7.0$, $-4.2$] \\
tail-biting block, no start state & 2.069 & $-0.220$ & 15.60 & $+4.10$ & 46.8 & $-5.1$ [$-6.9$, $-3.5$] \\
\quad same, two rows per sequence ($T{=}128$) & 2.069 & $-0.220$ & 14.62 & $+3.12$ & 46.8 & $-5.0$ [$-6.5$, $-3.5$] \\
\quad same, four rows per sequence ($T{=}256$) & 2.069 & $-0.220$ & 13.38 & $+1.88$ & 47.7 & $-4.2$ [$-6.4$, $-2.5$] \\
\bottomrule
\end{tabular}

%% file: tab_tput_full.tex
\begin{tabular}{@{}l c r rr>{\columncolor{oursbg}}r rr rr>{\columncolor{oursbg}}r rr>{\columncolor{oursbg}}r@{}}
\toprule
& & \multicolumn{6}{c}{batch 1} & \multicolumn{3}{c}{batch 64} & \multicolumn{3}{c}{batch 128} \\
\cmidrule(lr){3-8}\cmidrule(lr){9-11}\cmidrule(lr){12-14}
Model & bits & fp16 & QTIP & Prot. & \textsc{CTQ} & $+$rot. & $-$start & QTIP & Prot. & \textsc{CTQ} & QTIP & Prot. & \textsc{CTQ} \\
\midrule
L2-7B & 2 & 52.5 & 127.0 & 105.9 & \textbf{139.0} & 130.0 & 141.3 & 1011 & 1038 & \textbf{1082} & 1715 & 1830 & \textbf{1927} \\
 & 3 & & 119.0 & 103.5 & \textbf{125.7} & 117.6 & 126.8 & 999 & 1025 & \textbf{1071} & 1702 & 1805 & \textbf{1916} \\
 & 4 & & 109.0 & 102.6 & \textbf{114.9} & 109.9 & 117.3 & 985 & 1011 & \textbf{1063} & 1690 & 1782 & \textbf{1907} \\
\midrule
Q3-4B & 2 & 74.3 & 153.6 & 117.9 & \textbf{160.8} & 143.5 & 165.3 & 1007 & 1012 & \textbf{1021} & 2186 & 2227 & \textbf{2268} \\
 & 3 & & 144.2 & 115.4 & \textbf{151.7} & 138.9 & 155.3 & 1002 & 1003 & \textbf{1016} & 2178 & 2212 & \textbf{2246} \\
 & 4 & & 137.2 & 113.2 & \textbf{146.2} & 133.0 & 148.3 & 997 & 1000 & \textbf{1012} & 2163 & 2194 & \textbf{2236} \\
\bottomrule
\end{tabular}

%% file: tab_rate.tex
\begin{tabular}{@{}llrrrr@{}}
\toprule
Model & $k$ & Coded weights & Code $+$ scales $+$ table & $+$ 1 state/row & \textbf{$+$ 1 state/block (reported)} \\
\midrule
Qwen3-4B & 2 & 3.63\,B & 2.0685 & 2.0734 & \textbf{2.2872} \\
Qwen3-4B & 3 & 3.63\,B & 3.0685 & 3.0734 & \textbf{3.2872} \\
Qwen3-4B & 4 & 3.63\,B & 4.0685 & 4.0734 & \textbf{4.2872} \\
Qwen3-4B, no table & 2 & 3.63\,B & 2.0685 & 2.0734 & \textbf{2.2872} \\
Qwen3-4B, fp16 scales & 2 & 3.63\,B & 2.2500 & 2.2549 & \textbf{2.4688} \\
Llama-2-7B & 2 & 6.48\,B & 2.0669 & 2.0703 & \textbf{2.2857} \\
Llama-3-8B & 2 & 6.98\,B & 2.0667 & 2.0699 & \textbf{2.2855} \\
Llama-3-8B & 3 & 6.98\,B & 3.0667 & 3.0699 & \textbf{3.2855} \\
Llama-3-8B & 4 & 6.98\,B & 4.0667 & 4.0699 & \textbf{4.2855} \\
Llama-3.1-8B & 2 & 6.98\,B & 2.0667 & 2.0699 & \textbf{2.2855} \\
Qwen3-32B & 2 & 31.2\,B & 2.0659 & 2.0682 & \textbf{2.2846} \\
Qwen3-32B & 3 & 31.2\,B & 3.0659 & 3.0682 & \textbf{3.2846} \\
Qwen3.6-35B-A3B & 2 & 30.3\,B & 2.0720 & -- & \textbf{2.2908} \\
\bottomrule
\end{tabular}

%% file: tab_stream.tex
\begin{tabular}{@{}lrr@{}}
\toprule
Component (Qwen3-4B, layer 0, 7 tensors, 100.9\,M weights) & Bytes & bpw \\
\midrule
Trellis payload ($k=2$) & 25{,}231{,}360 & 2.000000 \\
Start state, one 14-bit state per (row, 64-weight block) & 2{,}759{,}680 & 0.218750 \\
Scale field: int4 residual & 788{,}480 & 0.062500 \\
Scale field: fp16 block absmax (1 per 256 residuals) & 12{,}320 & 0.000977 \\
Scale field: fp16 row factor & 61{,}440 & 0.004870 \\
Scale field: fp16 column factor & 832 & 0.000066 \\
Emission table (128 int8 knots $+$ fp32 step, per tensor) & 924 & 0.000073 \\
\midrule
Total & 28{,}855{,}036 & 2.287236 \\
\midrule
Decodes bit-exactly from the stream above & \multicolumn{2}{r}{yes (0 of 100.9\,M differ)} \\
Decodes from one start state per row instead & \multicolumn{2}{r}{no (10.17\% of weights differ)} \\
\bottomrule
\end{tabular}

%% file: tab_audit_full.tex
\begin{tabular}{@{}lrrrr@{}}
\toprule
Block termination & Packed bpw & Start bpw & Rel.\ W-err & Wiki2 ppl \\
\midrule
Block-local, stored start state (\textsc{CurveTQ}) & 2.2872 & 0.2188 & 0.0801 & 11.54 \\
Chained, one stored state per row & 2.0734 & 0.0049 & 0.1148 & 13.59 \\
Per-block tail-biting, no start bits & 2.0685 & 0 & 0.1093 & 15.60 \\
Tail-biting, two rows ($T{=}128$), no start bits & 2.0685 & 0 & 0.1026 & 14.62 \\
Tail-biting, four rows ($T{=}256$), no start bits & 2.0685 & 0 & 0.1005 & 13.38 \\
\bottomrule
\end{tabular}